 \journalname{Data Mining and Knowledge Discovery}
\newcommand{\myminitab}[2][l]{\begin{tabular}{#1}#2\end{tabular}}
\def\methodname{\textsc{Hermit}}
\DeclareMathOperator*{\argmax}{argmax}
\def\trans{^{\rm T}}
\def\oby{\tilde{\mathbf{y}}}
\def\L{\mathbf{L}}
\def\X{\mathbf{X}}
\def\Y{\mathbf{Y}}
\def\P{\mathbf{P}}
\def\I{\mathbf{I}}
\def\A{\mathbf{A}}
\def\S{\mathbf{S}}
\def\x{\mathbf{x}}
\def\y{\mathbf{y}}
\def\aalpha{\mbox{\boldmath$\alpha$}}
\def\bbeta{\mbox{\boldmath$\beta$}}
\def\zzeta{\mbox{\boldmath$\zeta$}}
\def\PPhi{\mbox{\boldmath$\Phi$}}
\def\xxi{\mbox{\boldmath$\xi$}}
\def\vvarphi{\mbox{\boldmath$\varphi$}}
\newtheorem{mydef}{Definition}
\newtheorem{mythe}{Theorem}
\newtheorem{mycon}{Condition}
\newtheorem{mylem}{Lemma}
\newtheorem{mycor}{Corollary}
\begin{document}

\title{Robust Finite Mixture Regression for Heterogeneous Targets}



\author{Jian Liang
        \and
        Kun Chen
        \and
        Ming Lin
        \and
        Changshui Zhang
        \and
        Fei Wang
}


\institute{Jian Liang \at{
              Department of Automation, Tsinghua University\\
       State Key Lab of Intelligent Technologies and Systems\\
       Tsinghua National Laboratory for Information Science and Technology(TNList)\\
       Beijing. 100084. P.R.China}\\
              \email{liangjian12@mails.tsinghua.edu.cn}           
           \and
           Kun Chen \at{
              Department of Statistics\\
       University of Connecticut\\
       Storrs, CT 06269, USA}\\
              \email{kun.chen@uconn.edu}
              \and
           Ming Lin \at{
              Department of Computational Medicine and Bioinformatics\\
       University of Michigan\\
       Ann Arbor, MI 48109, USA}\\
              \email{linmin@umich.edu}
              \and
           Changshui Zhang \at{
              Department of Automation, Tsinghua University\\
       State Key Lab of Intelligent Technologies and Systems\\
       Tsinghua National Laboratory for Information Science and Technology(TNList)\\
       Beijing. 100084. P.R.China}\\
              \email{zcs@mails.tsinghua.edu.cn}
              \and
           Fei Wang \at{
              Department of Healthcare Policy and Research\\
       Cornell University\\
       New York City. NY 10065. USA}\\
              \email{few2001@med.cornell.edu}
}

\date{Received: date / Accepted: date}

\maketitle

\begin{abstract}
Finite Mixture Regression (FMR) refers to the mixture modeling scheme which learns multiple regression models from the training data set. Each of them is in charge of a subset. FMR is an effective scheme for handling sample heterogeneity, where a single regression model is not enough for capturing the complexities of the conditional distribution of the observed samples given the features. In this paper, we propose an FMR model that 1) finds sample clusters and jointly models multiple incomplete mixed-type targets simultaneously, 2) achieves shared feature selection among tasks and cluster components, and 3) detects anomaly tasks { or clustered structure among tasks, }and accommodates outlier samples.
We provide non-asymptotic oracle performance bounds for our model under a high-dimensional learning framework. The proposed model is evaluated on both synthetic and real-world data sets. The results show that our model can achieve state-of-the-art performance.
\keywords{Finite Mixture Regression\and Mixed-type Response\and Incomplete Targets\and Anomaly Detection\and {Task Clustering}}
\end{abstract}

\section{Introduction}

Regression modeling, which refers to building models to learn conditional relationship between output targets and input features on some training samples, is a fundamental problem in statistics and machine learning. Some classical regression modeling approaches include least square regression, logistic regression and Poisson regression; see, e.g.,~\citet{bishop2006pattern},~\citet{kubat2015introduction},~\citet{fahrmeir2013regression} and the references therein.

The aforementioned classic approaches usually train a single model of a single target over the entire data set. However, real-world problems can be much more complicated. In particular, the needs of utilizing high-dimensional features, population heterogeneity, and multiple interrelated targets are among the most prominent complications. To handle high-dimensional data, the celebrated regularized estimation approaches have undergone exciting developments in recent years; see, e.g.,~\citet{FanLv2010} and~\citet{Huang2012}. In the presence of population heterogeneity, the samples may form several distinct clusters corresponding to mixed relationships between the targets and the features. A popular modeling strategy in such a scenario is the {\em Finite Mixture Regression} (FMR)~\citep{mclachlan2004finite}, which is capable of adaptively learning multiple models, each of which is responsible for one subset/cluster of the data. FMR models have been widely used in market segmentation studies, patients' disease progression subtyping, motif gene-expression research, etc.; see, e.g.,~\citet{stadler2010},~\citet{khalili2011overview},~\citet{khalili2012variable},~\citet{dougru2016parameter}, and the references therein. The problem of joint learning for multiple targets is usually referred to as {\em Multi-Task Learning} (MTL) in machine learning or {\em multivariate learning} in statistics; see, e.g.,~\citet{evgeniou2007multi},~\citet{argyriou2007spectral},~\citet{chen2011integrating}, and~\citet{gong2012multi}. {We stress that the main definition of MTL considers tasks that do not necessarily share the same set of samples (and features), and that this paper focuses on a special case of MTL, where the multivariate outcomes are collected from the same set of samples and share the same set of features, whose reason will be explained later. }There have also been multi-task FMR models, e.g.,~\citet{wedel1995mixture},~\citet{xian2004robust},~\citet{kyung2016robust} and~\citet{bai2016mixture}, which mainly built on certain multivariate probability distribution such as Gaussian distribution or multivariate $t$ distribution.

Thus far, a comprehensive study on multi-task mixture-regression modeling with high-dimensional data is still lacking. To tackle this problem for handling real-world applications, there remain several challenges and practical concerns.
\begin{itemize}
  \item {\em Task Heterogeneity}. Current MTL approaches usually assume that the targets are of the same type. However, it is common that the multiple targets are of different types, such as continuous, binary, count, etc., which we refer to as task heterogeneity. For example, in anesthesia decision making~\citep{tan2010decision}, the anesthesia drugs will have impacts on multiple indicators of an anesthesia patient, such as anesthesia depth, blood pressures, heart rates, etc. The anesthesiologist needs to consider all those different aspects as well as their intrinsic dependence before making the decision.

  \item {\em Task Integration}. As in the anesthesiology example, the multiple tasks are typically inter-related to each other, and the potential benefit from a MTL approach needs to be realized through properly exploring and taking advantage of these relationships. In existing high-dimensional MTL approaches, the tasks are usually integrated by assuming certain shared conditional mean structures between the targets and the features. The problem is more difficult in the presence of both task and population heterogeneities.

  \item {\em Task Robustness}. Similar to the idea in the robust MTL approaches~\citep{passos2012flexible, gong2012robust, chen2011integrating}, it is not always the case that jointly considering all tasks by assuming certain shared structures among them would be helpful. Certain tasks, referred to as anomaly tasks, may not follow the assumed shared structure and thus can ruin the overall model performance. { More generally, tasks may even cluster into groups with different shared structures.}
 \end{itemize}

In this paper, we propose a novel {method} named {HEterogeneous-target Robust MIxTure regression} (\methodname), to address the above challenges in a unified framework. {Here we explain that we mainly consider the setting{, where} the multivariate outcomes are collected from the same set of samples and share the same set of features because our main objective is to learn potentially shared sample clusters and feature sets among tasks.} Rigorous theoretical analysis and performance guarantees are provided. It is worthwhile to highlight the key aspects of our approach as follows.


\begin{itemize}
\item Our method handles mixed type of targets simultaneously. Each target follows an exponential dispersion family model~\citep{jorgensen1987exponential}, so that multiple different types of targets, e.g., continuous, binary, and counts, can be handled jointly. The tasks are naturally integrated through sharing the same clustering structure arising from population heterogeneity. Our theory allows~\methodname\, to cover sub-exponential distributions, including the commonly-encountered Bernoulli, Poisson and Gaussian as special cases.

\item Our method imposes structural constraints in each mixture component of~\methodname, to deal with the curse of dimensionality and at the same time further take advantage of the interrelationship of the tasks. In particular, the group $\ell_1$ penalization is adopted to perform shared feature selection among tasks within each mixture component.

\item Our method adopts {three} strategies for robustness. First, we adopt a mean-shift regularization technique~\citep{she2015robust} to detect the outlier samples automatically and adjust for its outlying effects in model estimation. The second strategy measures discrepancy of different conditional distributions to detect anomaly tasks. {The third strategy measures similarity between each pair of tasks to discover a clustered structure among tasks.} Moreover, our model can work with incomplete data and impute entry-wise missing values in the multiple targets.
\end{itemize}

The aforementioned key elements, e.g., multi-task learning, sample clustering, shared feature selection, and anomaly detection, are integrated in a unified mixture model setup, so that they can benefit from and reinforce each other. A generalized Expectation-Maximization (GEM)~\citep{neal1998view} algorithm is developed to conduct model estimation efficiently. For theoretical analysis, we generalize the results of~\citet{stadler2010} to establish non-asymptotic oracle performance bounds for~\methodname\, under a high-dimensional learning framework. This is not trivial due to the non-convexity (due to the population heterogeneity) and the target heterogeneity of the problem.

The rest of this paper is organized as follows. Section 2 provides a brief review of the background and the related works to our method. Section 3 presents the details of the proposed~\methodname\, model and the computational algorithm. {Section 4 discusses several extensions of our method.} Section 5 shows the theoretical analysis. The empirical evaluations are presented in Section 6, followed by the {discussions and} conclusions in Section 7.

\section{Background \& Related Work}\label{sec:related}


Let $Y \in \mathcal{Y} \subset \mathbb{R} $ be the output target and $ \x \in \mathcal{X} \subset \mathbb{R}^{ d}$ the input feature vector. GLMs~\citep{nelder1972generalized} postulate that the conditional probability density function of $Y$ given $\x$ is
\begin{equation*}
f(y \mid {\x,\theta}) = f(y \mid \varphi,\phi) = \exp\biggl\{\frac{y\varphi-b(\varphi)}{a(\phi)}+c(y,\phi)\biggr\},
\end{equation*}
where $\varphi = \x\trans\bbeta$ with $\bbeta$ being the regression coefficient vector, $\phi$ is a dispersion parameter, and $a(\cdot),b(\cdot),c(\cdot)$ are known functions whose forms are determined by the specific distribution. Here we use $\theta$ to denote the collection of all the unknown parameters, i.e., $\theta = (\bbeta,\phi)$. Least square regression, logistic regression and Poisson regression are all special cases of GLMs. In the presence of {population} heterogeneity, a standard finite mixture model of GLM postulates that the conditional probability density function of $Y$ given $\x$ is
\begin{equation*}
f(y \mid \x,\theta) = \sum_{r=1}^k\pi_r f(y \mid \varphi_r,\phi_{r}),
\end{equation*}
where $\varphi_r = \x\trans\bbeta_r$ with $\bbeta_r$ being the regression coefficient vector for the $r$th mixture component, and $\pi_r>0 \ (r=1,\dots,k)$ with $\sum_{r=1}^k\pi_r=1$. So FMR model assumes that there are $k$ sub-populations, each of which admits a different conditional relationship between $Y$ and $\x$. 

\citet{mclachlan2004finite} introduced finite mixture of GLM models. {\citet{bartolucci2005use} considered a special case that $\bbeta_1,\dots,\bbeta_k$ are different only in their first entries.}~\citet{khalili2012variable} discussed using sparse penalties such as Lasso and SCAD to perform feature selection for FMR models and showed asymptotic properties of the penalized estimators.~\citet{stadler2010} reparameterized the finite mixture of Gaussian regression model and used $\ell_1$ penalization to achieve bounded log-likelihood and consistent feature selection. For multiple targets,~\citet{wedel1995mixture} proposed finite mixture of GLM models with multivariate targets. { These methods only consider the univariate-outcome case}.~\citet{weruaga2015sparse} proposed multivariate Gaussian mixture regression and used $\ell_1$ penalty for sparseness of parameters. Besides mixture of GLMs, there have been many works on mixture of other continuous distributions such as $t$ and Laplace distributions, mainly motivated by the needs of robust estimation for handling heavy tailed or skewed continuous distribution; see, e.g.,~\citet{xian2004robust,dougru2016parameter,alfo2016finite,dougru2016robust, kyung2016robust,bai2016mixture}. { However, these methods assume that the targets are of the same type, and only consider interrelationship among tasks with continuous outcomes. Additionally, they all assumed that their FMR model is shared by all the tasks.}

In MTL,~\citet{kumar2012learning},~\citet{passos2012flexible},~\citet{gong2012robust},~\citet{chen2011integrating},~\citet{jacob2009clustered},~\citet{chen2010graph} and~\citet{he2011graph} proposed to tackle the problem that different groups of tasks share different information, providing methods to handle anomaly tasks, clustered structure or graph-based structure among tasks.~\cite{yang2009heterogeneous} proposed a multi-task framework to jointly learn tasks with output types of Gaussian and multinomial.~\citet{zhang2012multi} proposed a multi-modal multi-task model to predict clinical variables for regression and categorical variable for classification jointly.~\citet{li2014heterogeneous} proposed a heterogeneous multi-task learning framework to learn a pose-joint regressor and a sliding window body-part detector in a deep network architecture simultaneously. {Nevertheless, these MTL methods cannot handle the heterogeneity of conditional relationship between features and targets.

By contrast, the proposed FMR framework~\methodname\, is effective for handling sample heterogeneity with mixed type of tasks whose interrelationship are harnessed by structural constraints. Non-asymptotic theoretical guarantees are provided. It also handles anomaly tasks or clustered structure among tasks, for the case that not all the tasks share the same FMR structure.
}

\section{HEterogeneous-target Robust MIxTure regression}\label{sec:method}

In this section, we first present the formulation of the { main} ~\methodname\ model, followed by penalized likelihood estimators with sparse constraint and structural constraint, respectively. We then introduce the associated optimization procedures, and describe how to perform sample clustering and make imputation of the missing/unobserved outcomes on incomplete multi-target outcomes based on the main model. {Hyper-parameter tuning is discussed at last.}  {Various extensions of the main methodology, including strategies to handle anomaly tasks or clustered tasks, will be introduced in Section~\ref{sec:extension}.}

\subsection{Model Formulation and Estimation Criterion}

Let $\Y\in\mathbb{R}^{n\times m}$ be the output/target data matrix and $\X\in\mathbb{R}^{n\times d}$ the input/feature data matrix, consisting of $n$ independent samples $(\y_i,\x_i)$, $i=1,\ldots, n$. As such, there are $m$ different targets with a common set of $d$ features. We allow $\Y$ to contain missing values at random; define $\Omega_i =\{j\in\{1,\ldots,m\}: y_{ij} \mbox{ is observed.}\}$ be the collection of indices of observed outcome in the $i$th sample $\y_i$ ($\Omega_i\neq \emptyset$), for $i=1,\ldots,n$.

To model multiple types of targets, such as continuous, binary, count, etc., we allow $y_{ij}$ to potentially follow different distributions in the exponential-dispersion family, for each $j=1,\ldots, m$. Specifically, we assume that given $\x_i$, the joint probability density function of $\oby_i = \{y_{ij};j\in \Omega_i\}$ is
\begin{equation}\label{eq:exp_fm}
f(\oby_i \mid \x_i,\theta) = f(y_{ij}, j\in \Omega_i \mid \x_i,\theta) = \sum_{r=1}^k\pi_r\prod_{j\in\Omega_i}f(y_{ij}\mid \varphi_{ijr},\phi_{jr}),
\end{equation}
where
\begin{equation*}
f(y_{ij}\mid \varphi_{ijr},\phi_{jr}) = \exp\biggl\{\frac{y_{ij}\varphi_{ijr}-b_j(\varphi_{ijr})}{a_j(\phi_{jr})}+c_j(y_{ij},\phi_{jr})\biggr\},
\end{equation*}
$\varphi_{ijr}$ is the natural parameter for the $i$th sample of the $j$th target in the $r$th mixture component, $\phi_{jr}$ is the dispersion parameter of the $j$th target in the $r$th mixture component, and the functions $a_j,b_j,c_j \ (j=1,\ldots,m)$ are determined by the specific distribution of the $j$th target. Here, the key assumption is that the $m$ tasks all correspond to the same cluster structure (e.g., the $m$ tasks all have $k$ clusters) determined by the underlying population heterogeneity; given the shared cluster label (e.g., $r$), the tasks within each mixture component then become independent of each other (depicted by the product of their probability density functions). As such, by allowing cluster label sharing, the model provides an effective way to genuinely integrate the learning of the multiple tasks.

Following the setup of GLMs, we assume a linear structure in the natural parameters, i.e.,
\begin{equation}\label{eq:natural_parameter}
\varphi_{ijr} = \x_i\bbeta_{jr},
\end{equation}
where $\bbeta_{jr}$ is the regression coefficient vector of the $j$th response in the $r$th mixture component. Since $\x_i$ is possibly of high dimensionality, the $\bbeta_{jr}$s are potentially sparse vectors. For example, when the $\bbeta_{jr}$s for $j=1,\ldots,m$ share the same sparsity pattern, the tasks share the same set of relevant features within each mixture component. For $r=1,\ldots,k$, write $\bbeta_r \in \mathbb{R}^{d\times m} = [\bbeta_{1r},\bbeta_{2r},\ldots,\bbeta_{mr}]$ and $\phi_r=[\phi_{1r},\ldots,\phi_{mr}]^T$. Also write $\bbeta\in\mathbb{R}^{(d\times m)\times k} =[\bbeta_1,\ldots, \bbeta_k]$. Let $\theta = \{ \bbeta,\phi_1,\ldots,\phi_k,\pi_1,\ldots,\pi_{k}\}$ collecting all the unknown parameters, with the parameter space given by
$
\Theta = \mathbb{R}^{ (d \times m) \times k}  \times \mathbb{R}^{m \times k}_{>0} \times \Pi,
$
where $\Pi = \{\pi;\pi_r>0 \mbox{ for } r = 1,\ldots,k \mbox{ and }  \sum_{r=1}^{k} \pi_r = 1\}$. 

The data log-likelihood of the proposed model is
\begin{equation}\label{eq:loglike}
\begin{split}
\ell(\theta\mid\Y,\X) & = \sum_{i=1}^n\log\left(\sum_{r=1}^k\pi_r\prod_{j\in\Omega_i}\exp\biggl\{\frac{y_{ij}\varphi_{ijr}-b_j(\varphi_{ijr})}{a_j(\phi_{jr})}+c_j(y_{ij},\phi_{jr})\biggr\}\right).
\end{split}
\end{equation}
The missing values in $\Y$ simply do not contribute to the likelihood, which follows the same spirit as in matrix completion~\citep{candes2009}. The proposed model indeed possesses a genuine multivariate flavor, as the different outcomes share the same underlying latent cluster pattern of the heterogeneous population. We then propose to estimate $\theta$ by the following penalized likelihood criterion:
\begin{equation}\label{eq:l1}
\hat{\theta} =  {\arg\min}_{\theta\in\Theta} \  - \ell(\theta\mid\Y,\X)/n
+ \mathcal{R}(\bbeta;\lambda),
\end{equation}
where $\mathcal{R}(\bbeta;\lambda)$ is some certain penalty term on the regression coefficients with $\lambda$ being a tuning parameter.

We thus name our proposed method the HEterogeneous-target Robust MIxTure regression (\methodname). The $\mathcal{R}(\bbeta;\lambda)$ can be flexibly chosen based on specific needs of feature selection. The first sparse penalties adopted by our model is the $\ell_1$ norm (lasso-type) penalty,
\begin{equation}\label{eq:pen_lasso}
\mathcal{R}(\bbeta;\lambda,\pi) = \lambda\sum_{r=1}^k\pi_r^{\gamma}\|\bbeta_r\|_1,
\end{equation}
where $\lambda$ is the tuning parameter, $\|\cdot\|_1$ is the entry-wise $\ell_1$ norm, and $\pi_r^{\gamma}$s $(r=1,\ldots,k)$ are the penalty weights with $\gamma\in\{0,1/2,1\}$ being a pre-specified constant. Here the penalty also depends on the unknown mixture proportions $\pi$; when the cluster sizes are expected to be imbalanced, using this weighted penalization with some $\gamma>0$ is preferred~\citep{stadler2010}. This entry-wise regularization approach allows the tasks to have independent set of relevant features. Alternatively, in order to enhance the integrative learning and potentially boost the performance of clustering, it could be beneficial to encourage the internal similarity within each sub-population. Then certain group-wise regularization of the features could be considered, which are widely adopted in multi-task learning. In particular, we consider a component-specific group sparsity pattern to achieve shared feature selection among different tasks, in which the group $\ell_1$ norm penalty is used~\citep{gong2012robust,jalali2010dirty},
\begin{align}\label{eq:pen_lasso_group}
\mathcal{R}(\bbeta;\lambda,\pi) = \lambda\sum_{r=1}^k\pi_r^{\gamma}\|\bbeta_r\|_{1,2},
\end{align}
where $\|\cdot\|_{1,2}$ denotes the sum of the row $\ell_2$ norms of the enclosed matrix, and the weights are constructed as in~\eqref{eq:pen_lasso}. {The shared feature set in each sub-population can be used to characterize the sub-population and render the whole model more interpretable.}


\subsection{Optimization}\label{subsec:optimization}

We propose a generalized EM (GEM) algorithm~\citep{dempster1977maximum} to solve the minimization problem in (\ref{eq:l1}). For each $i=1,\ldots,n$, define $(\delta_{i,1},\ldots,\delta_{i,k})$ be a set of latent indicator variables, where $\delta_{i,r} = 1$ if the $i$th sample $(\y_i,\x_i)$ belongs to the $r$th component of the mixture model~\eqref{eq:exp_fm} and $\delta_{i,r} = 0$ otherwise. So $\sum_{r=1}^k\delta_{i,r}=1$, $\forall i$. These indicators are not observed since the cluster labels of the samples are unknown. Let $\delta$ denote the collection of all the indicator variables. By treating $\delta$ as missing, the EM algorithm proceeds by iteratively optimizing the conditional expectation of the complete log-likelihood criterion.

The complete log-likelihood is given by
\begin{align*}
\ell_c(\theta\mid \Y,\X,\delta)   = & \sum_{r=1}^k\biggl\{\sum_{i=1}^n\sum_{j\in\Omega_i}
\delta_{i,r} \left( \frac{y_{ij}\varphi_{ijr}-b_j(\varphi_{ijr})}{a_j(\phi_{jr})} +c_j(y_{ij},\phi_{jr}) \right)\\
 &+  \sum_{i=1}^n\delta_{i,r}\log(\pi_r)\biggr\},
\end{align*}
where $\varphi_{ijr}  = \x_i\bbeta_{jr}$, for $i = 1,\ldots,n,$ $j=1,\ldots,m$, and $r= 1,\ldots,k$. The conditional expectation of the penalized complete negative log-likelihood is then given by
\begin{align*}
Q_{pen}(\theta\mid \theta') =  - \mathbb{E}[\ell_c(\theta\mid \Y,\X,\delta)|\Y,\X,\theta']/n
+ \mathcal{R}(\bbeta;\lambda,\pi),
\end{align*}
where $\mathcal{R}(\bbeta;\lambda,\pi)$ can be any of the penalties in~\eqref{eq:pen_lasso} or~\eqref{eq:pen_lasso_group}. It is easy to show that deriving $Q_{pen}(\theta\mid \theta')$ boils down to the computation of $\mathbb{E}[\delta_{i,r}\mid\Y,\X,\theta']$, which admits an explicit form.

The EM algorithm proceeds as follows. Let $\theta^{(0)}$ be some given initial values. We repeat the following steps for $t=0,1,2,\ldots$, until convergence of the parameters or the pre-specified maximum number of iteration $T_{out}$ is reached.\\

\noindent\textbf{E-Step:} Compute $\hat{\rho}_{i,r}^{(t+1)}=\mathbb{E}[\delta_{i,r}\mid\Y,\X,\theta^{(t)}]$. For $\varphi_{ijr}^{(t)} = \x_i\bbeta_{jr}^{(t)}$,
\begin{align}\label{eq:cond_prob_t}
\hat{\rho}_{i,r}^{(t+1)} &  = \frac{ \pi_r^{(t)}\prod_{j\in\Omega_i}\exp\{(y_{ij}\varphi_{ijr}^{(t)}-b_j(\varphi_{ijr}^{(t)}))/a_j(\phi_{jr}^{(t)}) +c_j(y_{ij},\phi_{jr}^{(t)})\}}
{\sum_{r'=1}^k \pi_{r'}^{(t)}\prod_{j\in\Omega_i}\exp\{(y_{ij}g_{ijr'}^{(t)}-b_j(g_{ijr'}^{(t)}))/a_j(\phi_{jr'}^{(t)}) +c_j(y_{ij},\phi_{jr'}^{(t)})\}}.
\end{align}

\noindent\textbf{M-Step:} Minimize $Q_{pen}(\theta\mid\theta^{(t)})$.

a) Update $\pi = (\pi_1,\ldots,\pi_k)$ by solving
\begin{align*}
&\pi^{(t+1)} = \arg \min_{\pi} \ -\frac{1}{n}  \sum_{r=1}^k\sum_{i=1}^n\hat{\rho}^{(t+1)}_{i,r} \log(\pi_r)
  + \mathcal{R}(\bbeta^{(t)};\lambda,\pi) \\
  &s.t. \ \sum_{r=1}^k\pi_r = 1, \pi_r > 0, \forall r.
\end{align*}

b) Update $\bbeta,\PPhi$.
\begin{align*}
(\bbeta^{(t+1)},\PPhi^{(t+1)}) = \arg \min_{\bbeta,\PPhi} \ &-\frac{1}{n}\sum_{r=1}^k\sum_{i=1}^n\hat{\rho}^{(t+1)}_{i,r}\sum_{j\in\Omega_i}
 \biggl( \frac{y_{ij}\x_i\bbeta_{jr} -b_j(\x_i\bbeta_{jr} )}{a_j(\phi_{jr})}\\
 & +c_j(y_{ij},\phi_{jr}) \biggr )+ \mathcal{R}(\bbeta;\lambda,\pi^{(t+1)}). \nonumber
\end{align*}

For the problem in a),~\citet{stadler2010} proposed a procedure to lower the objective function by a feasible point, and we find that simply setting $\pi_r^{(t+1)} = \sum_{i=1}^n\hat{\rho}_{i,r}/n$ is good enough. For the problem in b), we use an accelerated proximal gradient (APG) method introduced in~\cite{nesterov2007gradient} with the maximum number of iteration of $T_{in}$. The update steps by proximal operators correspond to the chosen penalty form. For the entry-wise $\ell_1$ norm penalty in~\eqref{eq:pen_lasso},
\begin{align}\label{eq:proximal_lasso}
\widehat{\bbeta}_r^{(t+1)} = \mbox{sign}(\widetilde{\bbeta}_r^{(t)}) \circ \max\{0, |\widetilde{\bbeta}_r^{(t)}|-\tau\lambda(\pi_r^{(t+1)})^{\gamma}\},
\end{align}
where $\circ$ denotes entry-wise product, $\widetilde{\bbeta}_r^{(t)} = \bbeta_r^{(t)} + \tau\triangle\bbeta_r^{(t)}$, $\tau$ denotes the step size, and $\triangle\bbeta_r^{(t)}$ denotes the update direction of $\bbeta_r^{(t)}$ determined by APG. For the group $\ell_1$ norm penalty in~\eqref{eq:pen_lasso_group},
$$\widehat{\bbeta}_{r,j}^{(t+1)} = \widetilde{\bbeta}_{r,j}^{(t)} \circ \max\{0, 1 -\tau\lambda(\pi_r^{(t+1)})^{\gamma}/\|\widetilde{\bbeta}_{r,j}^{(t)}\|_2\},
$$
where $\bbeta_{r,j}$ denotes the $j$th column of $\bbeta_r$. We  adopt the active set algorithm in~\cite{stadler2010} to speed up the computation.

The time complexity of our algorithm using the speed up technique is $O(T_{out}kmnsT_{in})$ with $s$ being the number of non-zero parameters. The algorithm performs well in practice, and we have not observed any convergence issues in our extensive numerical studies.
\subsection{Clustering of Samples \& Imputation of Missing Targets}\label{subsec:FMMOGLR_miss}



From the model estimation, we can get estimates of both the conditional probabilities $p(\delta_{i,r} = 1 \mid y_{ij'},j' \in \Omega_i,\x_i,\theta)$ and the conditional means $\mathbb{E}[Y_{ij} \mid \x_i ,\theta,\delta_{i,r} = 1]$, where $ \mathbb{E}[Y_{ij} \mid \x_i ,\theta,\delta_{i,r} = 1 ] = \mu_j( \varphi_{ijr}) = b'_j(\x_i\bbeta_{jr})$. Specifically, the conditional probabilities can be estimated by $p(\delta_{i,r} = 1\mid   y_{ij'},j' \in \Omega_i ,\x_i , \hat{\theta})$ which corresponds to~\eqref{eq:cond_prob_t}, taking $t = T_{out}$. The conditional expectations can be estimated as $\mathbb{E}[Y_{ij} \mid \x_i ,\hat{\theta},\delta_{i,r} = 1] =  b'_j(\x_i\hat{\bbeta}_{jr})$.

For clustering the samples, we adopt the Bayes rule, i.e., for $i=1,\ldots, n$,
\begin{equation}\label{eq:hat_r_i}
\hat{r}_i =  \mathop{\argmax}_{r}  \ p(\delta_{i,r} = 1\mid   y_{ij'},j' \in \Omega_i ,\x_i , \hat{\theta}).
\end{equation}


Following the idea of~\citet{jacobs1991adaptive}, we propose to make imputation for the missing outcomes by
\begin{equation}\label{eq:ensem2}
\hat{y}_{ij}  = \sum_{r=1}^{k}p(\delta_{i,r} = 1\mid   y_{ij'},j' \in \Omega_i ,\x_i , \hat{\theta}) \mathbb{E}[Y_j\mid \x_i ,\hat{\theta},\delta_{i,r} = 1 ],\ \mbox{for} \ j \notin \Omega_i.
\end{equation}


{
\subsection{Tuning Hyper-Parameters}\label{subsec:tune_hyperparam}

Unless otherwise specified, all the hyper-parameters, including regularization coefficients $\lambda$s and the number of clusters $k$, are tuned to maximize the data log-likelihood in~\eqref{eq:loglike} on the held-out validation data set. In other words, we fit models on training data with different specific hyper-parameter settings, and then the optimal model is chosen as the one that gives the highest log-likelihood in~\eqref{eq:loglike} of the held-out validation data set. This approach is fairly standard and has been widely used in existing works~\citep{stadler2010}. Moreover, cross validation and various information criteria~\citep{bhat2010derivation,aho2014model} can also be applied to determine hyper-parameters.

}

{
\section{Extensions}\label{sec:extension}
We provide several extensions of the proposed~\methodname$\,$ approach described in Section~\ref{sec:method}, including robust estimation against outlier samples, handling anomaly tasks or clustered structure among tasks, and modeling mixture probabilities for feature-based prediction.

}

\subsection{Robust Estimation}\label{subsec:robust_estimation}

To perform robust estimation for parameters in the presence of outlier samples, we propose to adopt the mean shift penalization approach~\citep{she2011outlier}. Specifically, we extend the natural parameter model to the following additive form,
\begin{equation}\label{eq:outlier_g}
\varphi_{ijr} = \x_i\bbeta_{jr}+\zeta_{ijr},
\end{equation}
where $\zeta_{ijr}$ is a case-specific mean shift parameter to capture the potential deviation from the linear model. Apparently, when $\zeta_{ijr}$ is allowed to vary without any constraint, it can make the model fit as perfect as possible for every $y_{ijr}$. The merit of this approach is realized by assuming certain sparsity structure of the $\zeta_{ijr}$s, so that only a few of them have nonzero values corresponding to anomalies. Write $\zzeta_r \in \mathbb{R}^{n\times m} = [\zzeta_{1r},\zzeta_{2r},\ldots,\zzeta_{mr}]$ for $r=1,\ldots,k$, and $\zzeta\in\mathbb{R}^{(n\times m)\times k} =[\zzeta_1,\ldots, \zzeta_r]$. We can then conduct joint model estimation and outlier detection by extending~\eqref{eq:l1} to
\begin{equation}\label{eq:estimator_outlier}
(\hat{\theta},\hat{\zzeta}) =  {\arg\min}_{\theta\in\Theta, \zzeta} \  -  \ell(\theta\mid\Y,\X)/n
+ \mathcal{R}(\bbeta;\lambda_1) +  \mathcal{R}(\zzeta;\lambda_2),
\end{equation}
where, for example, the penalty on $\zzeta$ can be chosen as the group $\ell_1$ penalty,
\begin{align}\label{eq:pen_outlier_group}
\mathcal{R}(\zzeta;\lambda_2) =  \lambda_2\sum_{i=1}^n\sqrt{\sum_{jr}\zeta_{ijr}^2},
\end{align}
{so that entries of $\zzeta$ are nonzero for only a few data samples.}

The proposed GEM algorithm can be readily extended to handle the inclusion of $\zzeta$, for which we omit the details.

\subsection{Handling Anomaly Tasks}\label{subsec:anomaly_tasks}

{ Besides outlier samples, certain tasks, referred to as anomaly tasks, may not follow the assumed shared structure and thus can ruin the overall model performance.}
To handle anomaly tasks, though it is also intuitive to adopt the approach above, our numerical study suggests that its performance is sensitive to the tuning parameters. Here, we adopt the { idea of~\citet{koller1996toward} }, by utilizing the estimated conditional probabilities to measure how well a task is concordant with the estimated mixture structure. Consider the $h$th task. The main idea is to measure the discrepancy between $p(\delta_{ir} = 1\mid y_{ij}, j \in \Omega_i, \x_i,\theta)$, the conditional probability based on data from all observed targets, and $p(\delta_{ir} = 1 \mid y_{ih},\x_i,\theta)$, the conditional probability based on only the $h$th task. If $h$th task is an anomaly task, it is expected that the two conditional probabilities would differ more from each other~\citep{koller1996toward,law2002feature}.

For $r= 1,\ldots,k$, $i = 1,\ldots,n$, let
\begin{align}\label{eq:outlier_post}
P_{\Omega,ir} & = p(\delta_{ir} = 1\mid y_{ij}, j \in \Omega_i,\x_i, \hat{\theta}) = \frac{\hat{\pi}_r\prod_{j\in\Omega_i}f(y_{ij}\mid \hat{\varphi}_{ijr},\hat{\phi}_{jr})}{\sum_{r'=1}^k\hat{\pi}_{r'}\prod_{j\in\Omega_i}f(y_{ij}\mid \hat{\varphi}_{ijr'},\hat{\phi}_{jr'})}, \\
P_{h,ir} &= p(\delta_{ir} = 1\mid y_{ih},\x_i,\hat{\theta}) = \frac{\hat{\pi}_rf(y_{ih}\mid \hat{\varphi}_{ihr},\hat{\phi}_{hr})}{\sum_{r'=1}^k\hat{\pi}_{r'}f(y_{ih}\mid \hat{\varphi}_{ihr'},\hat{\phi}_{hr'})}. \nonumber
\end{align}
Define $\P_{\Omega} = [P_{\Omega,ir}]_{n\times k}$ and $\P_h = [P_{h,ir}]_{n\times k}$. Then we define the concordant score of the $h$th task as
\begin{equation}\label{eq:outlier_score}
\begin{split}
score(h) =   - (D_{KL}(\P_{\Omega} \parallel \P_h) + D_{KL}(\P_h \parallel \P_{\Omega}))/{ (2n)}, \ h=1,\ldots,m,
\end{split}
\end{equation}
where $D_{KL}$ is the widely used Kullback-Leibler divergence~\citep{cover2012elements}.


The tasks can then be ranked based on their concordant scores. As such, the detection of anomaly tasks boils down to a one-dimensional outlier detection problem. { After anomaly tasks are detected, their FMR models can be built.}

\subsection{Handling Clustered Structure among tasks}\label{subsec:clustered_tasks}

{ In practice, tasks may be clustered into groups such that each task group has its own model structure. Here we assume that each cluster of tasks shares a FMR structure defined in~\eqref{eq:exp_fm}, and propose to construct a similarity matrix to discover the potential cluster pattern among tasks.

We consider a two-stage strategy. First, each task learns a FMR model on the training data independently with the same pre-fixed $k$. Then we get $\P_h = [P_{h,ir}]_{n\times k}$ for all $h=1,\ldots,m$, where
\begin{equation*}
  P_{h,ir} = p(\delta_{h,ir} = 1\mid y_{ih},\x_i,\hat{\theta}_h) = \frac{\hat{\pi}_{hr}f(y_{ih}\mid \hat{\varphi}_{ihr},\hat{\phi}_{hr})}{\sum_{r'=1}^k\hat{\pi}_{hr'}f(y_{ih}\mid \hat{\varphi}_{ihr'},\hat{\phi}_{hr'})},
\end{equation*}
and $\delta_{h,ir}(i=1,\dots,n,r = 1,\dots,k)$ and $\hat{\pi}_{hr}(r = 1,\dots,k)$ are the latent variables and the estimated prior probabilities of the $h$th task, respectively.


Second, we adopt Normalized Mutual Information (NMI)~\citep{Strehl2003Cluster} to measure the similarity between each pair of tasks. We choose NMI instead of Kullback-Leibler divergence because NMI can handle the case that the orders of clusters of two $k$-cluster structures are different. Specifically, given two methods to estimate latent variables, which are denoted by method $u$ and method $v$, let $\P_u = [P_{u,ir}]_{n\times k}$ and $\P_v = [P_{v,ir}]_{n\times k}$ denote the estimated probability of latent variables of method $u$ and method $v$, respectively, where $P_{u,ir} = p(\delta_{u,ir} = 1), P_{v,ir} = p(\delta_{v,ir} = 1)$ for $i = 1,\ldots,n,r=1,\ldots,k$. NMI is defined as
\begin{equation}\label{eq:NMI}
NMI(\P_u ,\P_v ) = \frac{I(\P_u,\P_v)}{\sqrt{I(\P_u,\P_u)I(\P_v,\P_v)}}, \ u=1,\ldots,m,v=1,\ldots,m,
\end{equation}
where $I(\P_u,\P_v)$ denotes the mutual information between $\P_u,\P_v$ such that
\begin{equation*}\label{eq:MI}
I(\P_u,\P_v) =  \sum_{a=1}^k \sum_{b=1}^k p(\delta_{u,a} = 1, \delta_{v,b} = 1)\log\left(\frac{p(\delta_{u,a} = 1,\delta_{v,b} = 1)}{p(\delta_{u,a} = 1)p(\delta_{v,b} = 1)}\right).
\end{equation*}

Following~\citet{Strehl2003Cluster}, we approximate $p(\delta_{u,a} = 1,\delta_{v,b} = 1)$, $p(\delta_{u,a} = 1)$ and $p(\delta_{v,b} = 1)$ by
\begin{equation*}\label{eq:prob_in_MI}
\begin{split}
p(\delta_{u,a} = 1) &\approx \frac{1}{n}\sum_{i=1}^np(\delta_{u,ia} = 1) = \frac{1}{n}\sum_{i=1}^nP_{u,ia},\\
p(\delta_{v,b} = 1) &\approx \frac{1}{n}\sum_{i=1}^np(\delta_{v,ib} = 1) = \frac{1}{n}\sum_{i=1}^nP_{v,ib},\\
p(\delta_{u,a} = 1, \delta_{v,b} = 1) &\approx \frac{1}{n}\sum_{i=1}^np(\delta_{u,ia} = 1, \delta_{v,ib} = 1)\\
&\approx \frac{1}{n}\sum_{i=1}^np(\delta_{u,ia} = 1)p(\delta_{v,ib} = 1)
=\frac{1}{n}\sum_{i=1}^nP_{u,ia}P_{v,ib}.
\end{split}
\end{equation*}

As such, given the estimated models $\hat{\theta}_u,\hat{\theta}_v$ for the $u$th and the $v$th task, respectively, we treat $p(\delta_{u,ir} = 1\mid y_{iu},\x_i,\hat{\theta}_u)$ and $p(\delta_{v,ir} = 1\mid y_{iv},\x_i,\hat{\theta}_v)$ as $p(\delta_{u,ir} = 1)$ and $p(\delta_{v,ir} = 1)$, respectively, for $i = 1,\ldots,n,r=1,\ldots,k$. Then NMI between each pair of tasks are computed by~\eqref{eq:NMI}. We note that for simplicity we set the pre-fixed $k$ to be the same, but in general $k$ can be different for different tasks by the definition of Mutual Information. 

Given the similarity between each pair of tasks, any similarity-based clustering method can be applied to cluster $m$ tasks into groups. Empirically, the performance of task clustering is not sensitive to the pre-fixed $k$. As such, we set the pre-fixed $k$ to be $20$ in this paper. We then apply the proposed~\methodname$\,$ approach separately for each task group.


}

{

\subsection{Modeling Mixture Probabilities}\label{subsec:predition}


In real-applications, one may require to use $\x_i$ only to infer the latent variables $\delta_{i,r}$ and then to predict $\y_i$, for $i=1,\ldots,n,r=1,\ldots,k$. Here we further extend our method following the idea of Mixture-Of-Experts (MOE)~\citep{yuksel2012twenty} model; the only modification is that $\pi_r$ in~\eqref{eq:exp_fm} is assumed to be function of $\x_i$, for $i=1,\ldots,n$.


To be specific, let $\aalpha = [\aalpha_1,\ldots,\aalpha_k] \in \mathbb{R}^{d\times k}$ collect regression coefficient vectors for a multinomial linear model.
We assume that given $\x_i$, the joint probability density function of $\oby_i = \{y_{ij};j\in \Omega_i\}$ in~\eqref{eq:exp_fm} is replaced by
\begin{equation*}
\begin{split}
f(\oby_i \mid \x_i,\theta,\aalpha) &= f(y_{ij}, j\in \Omega_i \mid \x_i,\theta,\aalpha) \\
&= \sum_{r=1}^kp(\delta_{i,r} = 1\mid \x_i,\alpha_r ) \prod_{j\in\Omega_i}f(y_{ij}\mid \varphi_{ijr},\phi_{jr}),
\end{split}
\end{equation*}
where
\begin{equation}\label{eq:gate_prob}
p(\delta_{i,r} = 1\mid \x_i,\alpha_r ) = \frac{\exp(\x_i\aalpha_r) }{\sum_{r'=1}^{k}\exp(\x_i\aalpha_{r'})}
 \end{equation}
is referred to as the gating probability. All the other terms are defined the same as in~\eqref{eq:exp_fm}. 

Let $\theta_2 = \{ \bbeta,\phi_1,\ldots,\phi_k\}$, with the parameter space
$
\Theta_2 = \mathbb{R}^{ (d \times m) \times k}  \times \mathbb{R}^{m \times k}_{>0}.
$
The data log-likelihood of the MOE model is
\begin{equation}\label{eq:loglike_MOE}
\begin{split}
\ell(\theta_2,\aalpha\mid\Y,\X)  = \sum_{i=1}^n\log\biggl(&\sum_{r=1}^kp(\delta_{i,r} = 1\mid \x_i,\alpha_r ) \\ &\prod_{j\in\Omega_i}\exp\biggl\{\frac{y_{ij}\varphi_{ijr}-b_j(\varphi_{ijr})}{a_j(\phi_{jr})}+c_j(y_{ij},\phi_{jr})\biggr\}\biggr).
\end{split}
\end{equation}
The model estimation is conducted by extending~\eqref{eq:l1} to
\begin{equation}\label{eq:estimator_MOE}
(\hat{\theta}_2,\hat{\aalpha}) =  {\arg\min}_{\theta_2\in\Theta_2, \aalpha} \  - \ell(\theta_2,\aalpha\mid\Y,\X)
+ \mathcal{R}(\bbeta;\lambda_1)/n +  \mathcal{R}(\aalpha;\lambda_2),
\end{equation}
where, for example, the penalty on $\aalpha$ can be chosen as the lasso type penalty,
\begin{align}\label{eq:pen_MOE}
\mathcal{R}(\aalpha;\lambda_2) =  \lambda_2\|\aalpha\|_1.
\end{align}

%

The minimization problem in~\eqref{eq:estimator_MOE} is also solved by GEM, for which the optimization procedure is similar to that in Section~\ref{subsec:optimization}. The differences occur at the E-step:
\begin{equation}\label{eq:cond_prob_t_MOE}
\hat{\rho}_{i,r}^{(t+1)} = \frac{ p(\delta_{i,r} = 1\mid \x_i,\alpha_{r}^{(t)} )\prod_{j\in\Omega_i}f(y_{ij}\mid \varphi_{ijr}^{(t)},\phi_{jr}^{(t)})}
{\sum_{r'=1}^k p(\delta_{i,r'} = 1\mid \x_i,\alpha_{r'}^{(t)} )\prod_{j\in\Omega_i}f(y_{ij}\mid \varphi_{ijr'}^{(t)},\phi_{jr'}^{(t)})},
\end{equation}
where $f(y_{ij}\mid \varphi_{ijr}^{(t)},\phi_{jr}^{(t)}) = \exp\{(y_{ij}\varphi_{ijr}^{(t)}-b_j(\varphi_{ijr}^{(t)}))/a_j(\phi_{jr}^{(t)}) +c_j(y_{ij},\phi_{jr}^{(t)})\}$, at the optimization for $\aalpha$:
\begin{equation}\label{eq:update_alpha_MOE}
 \aalpha^{(t+1)} = \arg \min_{\aalpha} \ -\frac{1}{n}  \sum_{r=1}^k\sum_{i=1}^n\hat{\rho}^{(t+1)}_{i,r} \log p(\delta_{i,r} = 1\mid \x_i,\alpha_{r} )
  + \mathcal{R}(\aalpha^{(t)};\lambda_2),
\end{equation}
and at the computation of $\pi$: $\pi_r^{(t+1)} = \frac{1}{n}\sum_{i=1}^{n}p(\delta_{i,r} = 1\mid \x_i,\alpha_r^{(t+1)} )$ for $r=1,\ldots,k$. 


}

\section{Theoretical Analysis}

We study the estimation and variable selection performance of~\methodname\, under the high-dimensional framework with $d\gg n$. Both $m$ and $k$, on the other hand, are considered as fixed. This is because usually, the number of interested targets and the number of desired clusters are not large in many real problems. Here we only present the setup and the main results on non-asymptotic oracle inequalities to bound the excess risk and false selection, leaving detailed derivations in the Appendix. Our results generalize~\citet{stadler2010} to cover mixture regression models with 1) multivariate, heterogeneous (mixed-type) and incomplete response and 2) shared feature grouping sparse structure. This is not trivial due to the non-convexity and the triple heterogeneity of the problem. It turns out that additional condition on the tail behaviors of the conditional density $f(\y\mid \x,\theta)$ is required. Fortunately, the required conditions are still satisfied by a broad range of distributions.

\subsection{Notations and Conditions on the Conditional Density}

We firstly introduce some notations. Denote the regression parameters that are subject to regularization by $\beta = \mbox{vec}(\bbeta_1,\ldots,\bbeta_k),\phi = \mbox{vec}(\Phi_1,\ldots,\Phi_k)$, where $\mbox{vec}(\cdot)$ is the vectorization operator. The other parameters in the mixture model are denoted by $\eta = \mbox{vec} (\log(\phi),\log(\pi))$, where $\log(\cdot)$ is entry-wisely applied. Denote the true parameter by $\theta_0 = ( \bbeta_0,\Phi_{0,1},\ldots,\Phi_{0,k},\pi_{0,1},\pi_{0,k-1})$ to be estimated under the FMR model defined in~\eqref{eq:exp_fm} and~\eqref{eq:natural_parameter}. In the sequel, we always use subscripts ``$0$'' to represent parameters or structures under the true model. To study sparsity recovery, denote the set of indices of non-zero entries of the true parameter by $S$. We use $\lesssim$ to indicate that the inequality holds up to some multiplicative numerical constants. {To focus on the main idea, we consider the case of $\gamma=0$ in the following analysis}.

We define average excess risk for fixed design points $\x_1,\ldots,\x_n$ based on Kullback-Leibler divergence as
\begin{align*}
\bar{\varepsilon}(\theta\mid \theta_0) &= \frac{1}{n}\sum_{i=1}^n \varepsilon(\theta\mid \x_i,\theta_0), \
\varepsilon(\theta\mid \x_i,\theta_0)\\
& = -  \int \log\left(\frac{f(\oby_i\mid \x_i,\theta)}{f(\oby_i \mid \x_i,\theta_0)}\right)f(\oby_i \mid \x_i,\theta_0)d\oby_i,
\end{align*}
where $f(\oby_i \mid \x_i,\theta)$ is defined in~\eqref{eq:exp_fm}.

To impose the conditions on $f(\oby_i \mid \x_i,\theta)$, denote $\psi_i = \mbox{vec}(\varphi_i,\eta)$, where $\varphi_i = \mbox{vec}(\{\varphi_{ijr};j\in\Omega_i,r = 1,\ldots,k\})$, and denote $ \psi = \mbox{vec}(\psi_1,\ldots,\psi_n)$. As such, we may write $f(\oby_i\mid \x_i,\theta) = f(\oby_i\mid\psi_i)$, $\ell(\theta\mid \oby_i,\x_i) = \log f(\oby_i\mid \x_i,\theta)= \ell(\psi_i\mid \oby_i)$, and $ \bar{\varepsilon}(\psi\mid\psi_{0}) = \frac{1}{n}\sum_{i=1}^{n}\varepsilon(\psi_i\mid\psi_{0,i}) = \bar{\varepsilon}(\theta\mid\theta_0)$.

Without loss of much generality, the model parameters are assumed to be in a bounded parameter space for a constant $K$:
\begin{equation}\label{eq:tTheta}
\begin{split}
\tilde{\Theta} \subset \{\theta; & \max_{i=1,\ldots,n}\|\varphi_i(\x_i, \bbeta)\|_{\infty}\leq K,
  \max_{j=1,\ldots,m} |\log a_j(\phi)|\leq K,\\
  &\max_{j=1,\ldots,m} \log |b'_j(\phi)|\leq K, \|\log\phi\|_{\infty}\leq K, \\
  &-K\leq \log\pi_1\leq 0, \ldots, -K \leq \log\pi_{k-1} \leq 0,\\
  & \sum_{r=1}^{k-1}\pi_r < 1, \pi_k = 1 - \sum_{r=1}^{k-1}\pi_r\}.
\end{split}
\end{equation}

We present the following conditions on $f(\oby_i\mid \psi_i)$.

\begin{mycon}\label{th:con_G1}
For some function $G_1(\cdot)\in \mathbb{R}$, for $i = 1,\ldots,n$,
\begin{align*}
\sup_{\theta\in\tilde{\Theta}}\|\frac{\partial \ell(\psi_i\mid \oby_i)}{\partial \psi_i}\|_{\infty}\leq G_1(\oby_i).
\end{align*}
\end{mycon}

\begin{mycon}\label{th:con_tail}
For a constant $c_1\geq 0$, and some constants $ c_2,c_3,c_4,c_5\geq 0$ depending $K$, and for $M>c_4$, we assume for $i = 1,\ldots,n$,
\begin{align*}
& \mathbb{E}[|G_1({\oby_i})|1\{|G_1(\oby_i)|>M\}] \leq \biggl[ c_3\biggl(\frac{M}{c_2}\biggr)^{c'}+ c_5 \biggr]\exp\biggl\{-\biggl(\frac{M}{c_2}\biggr)^{1/c_1}\biggr\},\\
& \mathbb{E}[|G_1(\oby_i)|^21\{|G_1(\oby_i)|>M\}] \leq  \biggl[ c_3\biggl(\frac{M}{c_2}\biggr)^{c'}+ c_5 \biggr]^2\exp\biggl\{-2\biggl(\frac{M}{c_2}\biggr)^{1/c_1}\biggr\},
\end{align*}
where $\oby_i = \{y_{ij}; j\in\Omega_i\}$, $c' = 2+3/c_1$ and $1\{\cdot\}$ denotes the indicator function. 
\end{mycon}

\begin{mycon}\label{th:con_fisher}
It holds that,
\begin{align*}
  \min_{i=1,\ldots,n}\Lambda_{\min}(I(\psi_{0,i}))>  {1}/{c_0} > 0,
\end{align*}
where $c_0$ is a constant, $\Lambda_{\min}^2(A)$ is the smallest eigenvalue of a symmetric, positive semi-definite matrix $A$ and for $i = 1,\ldots,n$, $I(\psi_{0,i})$ is the Fisher information matrix such that
\begin{align*}
 I(\psi_{0,i}) =  - \int \frac{\partial^2\ell(\psi_{0,i}\mid \oby_i)}{\partial \psi_{0,i} \partial \psi_{0,i}^T }
 f(\oby_i\mid \psi_{0,i}) d\oby_i.
\end{align*}
\end{mycon}

The first condition follows from~\citet{stadler2010}, which aims to bound $\partial \ell(\psi_i\mid \oby_i)/\partial \psi_i$ with known $\oby_i$, for $i=1,\ldots,n$.
The second condition is about the tail behaviors of $f(\oby_i \mid \x_i,\theta)$. The third condition depicts the local convexity of $\ell$ at the point $\theta_0$. Condition~\ref{th:con_G1} and~\ref{th:con_tail} can cover a broad range of distributions for $f$, including but not limited to mixture of sub-exponential distributions, such as our proposed~\methodname\, model with known dispersion parameters, c.f., Lemma~\ref{th:lem_subexp_tail}.

\begin{mylem}\label{th:lem_subexp_tail}
 Condition~\ref{th:con_G1} and~\ref{th:con_tail} hold for the heterogeneous mixture distribution $f(\oby_i\mid \x_i,\theta)$ defined in~\eqref{eq:exp_fm} with known dispersion parameters.

\end{mylem}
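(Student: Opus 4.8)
The plan is to verify Conditions~\ref{th:con_G1} and~\ref{th:con_tail} directly, exploiting that when the dispersion parameters are known the only coordinates of $\psi_i$ that vary are the natural parameters $\{\varphi_{ijr}\}$ and the log-mixing-weights $\{\log\pi_r\}_{r\le k-1}$. Throughout write $g_{ir}=\prod_{j\in\Omega_i}f(y_{ij}\mid\varphi_{ijr},\phi_{jr})$, so that $\ell(\psi_i\mid\oby_i)=\log\bigl(\sum_{r=1}^{k}\pi_r g_{ir}\bigr)$ with $\pi_k=1-\sum_{r<k}\pi_r$.

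For Condition~\ref{th:con_G1} I would differentiate coordinatewise. The derivative with respect to $\varphi_{ijr}$ factors as the posterior responsibility $\pi_r g_{ir}/\sum_{r'}\pi_{r'}g_{ir'}\in[0,1]$ times $\partial_\varphi\log f(y_{ij}\mid\varphi_{ijr},\phi_{jr})=(y_{ij}-b_j'(\varphi_{ijr}))/a_j(\phi_{jr})$; by the boundedness constraints in~\eqref{eq:tTheta} this is at most $c(K)\bigl(1+|y_{ij}|\bigr)$ in absolute value, uniformly over $\tilde{\Theta}$. The derivative with respect to $\log\pi_r$ equals $(\pi_r g_{ir}-\pi_r g_{ik})/\sum_{r'}\pi_{r'}g_{ir'}$; bounding the first piece by $1$ and the second by $\pi_r g_{ik}/(\pi_k g_{ik})=\pi_r/\pi_k$, and using that~\eqref{eq:tTheta} keeps the mixing weights bounded away from $0$ and $1$, this is bounded by a constant depending only on $K$. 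Taking the maximum over the finitely many coordinates yields Condition~\ref{th:con_G1} with $G_1(\oby_i):=c(K)\bigl(1+\max_{j\in\Omega_i}|y_{ij}|\bigr)$.

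For Condition~\ref{th:con_tail} the structural observation is that each component $f(y_{ij}\mid\varphi_{ijr},\phi_{jr})$ is an exponential-dispersion-family law whose natural parameter lies in the compact set $[-K,K]$ and whose dispersion obeys $|\log a_j(\phi_{jr})|\le K$; hence its moment generating function $s\mapsto\exp\{[b_j(\varphi_{ijr}+sa_j(\phi_{jr}))-b_j(\varphi_{ijr})]/a_j(\phi_{jr})\}$ is finite and bounded on a fixed neighborhood of $0$, uniformly over $\tilde{\Theta}$, so every $y_{ij}$ is uniformly sub-exponential. A finite mixture over $r$ and then the coordinatewise maximum over the at most $m$ indices $j\in\Omega_i$ preserve sub-exponentiality, hence $\mathbb{P}\bigl(|G_1(\oby_i)|>t\bigr)\le C_1 e^{-C_2 t}$ for all $t\ge 0$, with $C_1,C_2$ depending only on $K$ and $m$. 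The layer-cake formula then gives $\mathbb{E}[|G_1(\oby_i)|\,1\{|G_1(\oby_i)|>M\}]\le M\,\mathbb{P}(|G_1|>M)+\int_M^\infty\mathbb{P}(|G_1|>t)\,dt\lesssim(1+M)e^{-C_2M}$ and, analogously, $\mathbb{E}[|G_1(\oby_i)|^2\,1\{|G_1(\oby_i)|>M\}]\lesssim(1+M^2)e^{-C_2M}$. Choosing $c_1=1$ (so $c'=5$), $c_2\ge 2/C_2$, and then $c_3,c_4,c_5$ so that $(1+M)\le c_3(M/c_2)^5+c_5$ and $(1+M^2)\le(c_3(M/c_2)^5+c_5)^2$ for every $M>c_4$ matches both inequalities, since $e^{-C_2M}\le e^{-M/c_2}$ and $e^{-C_2M}\le e^{-2M/c_2}$.

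I expect the derivative computation to be routine; the delicate point is establishing the \emph{uniform} (over $\theta\in\tilde{\Theta}$) sub-exponential tail and then reconciling it with the rigid two-sided form of Condition~\ref{th:con_tail} — the second-moment bound demands the faster rate $\exp\{-2(M/c_2)^{1/c_1}\}$ while a sub-exponential tail only supplies the single rate $e^{-C_2M}$, which is precisely why $c_2$ must be inflated to at least $2/C_2$. Specializing to the common members — Bernoulli (then $G_1$ is bounded and the tails are trivial), Poisson (super-exponential tails), and Gaussian with known variance (sub-Gaussian tails) — only makes these estimates easier, so in particular Conditions~\ref{th:con_G1} and~\ref{th:con_tail} hold for~\methodname\ with known dispersion parameters.
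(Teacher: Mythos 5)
Your proposal is correct and follows essentially the same route as the paper: take $G_1(\oby_i)$ affine in $\max_{j\in\Omega_i}|y_{ij}|$ (the paper simply states $G_1(\oby_i)=e^K\max_{j\in\Omega_i}|y_{ij}|+c_K$; you derive it from the coordinatewise derivatives), use uniform sub-exponential tails of the mixture, and verify Condition~\ref{th:con_tail} via the layer-cake truncated-moment bounds with $c_2$ inflated to absorb the factor $2$ in the second-moment rate --- exactly the device the paper employs by setting $c_2=\sqrt{2}\sigma$ or $c_2=2v$. The only substantive difference is that you collapse the two-regime tail of Definition~\ref{th:def_subexp} to a single exponential rate with $c_1=1$ throughout, which suffices for the lemma as stated but is coarser than the paper's regime-dependent choice (its $c_1=1/2$ sub-Gaussian regime is what later permits $c_1=1/2$ and $0$ for Gaussian and Bernoulli tasks in the $(\log n)^{2+2c_1}$ rate factors).
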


The following two quantities will be used.
\begin{equation}\label{eq:lambda_0}
\lambda_0 =  \sqrt{mk}   M_n\log n \sqrt{\frac{\log(d\vee n)}{n}}, \ M_n = c_2(\log n)^{c_1},
\end{equation}
where $  c_1,c_2$ are the same constants as in Condition~\ref{th:con_tail}. More specifically, we choose $c_1 = 1/2, 0, 1$ for Gaussian, Bernoulli and Poisson task, respectively.

\subsection{Results for Lasso-Type Estimator}

Consider first the penalized estimator defined in (\ref{eq:l1}) with the $\ell_1$ penalty in~\eqref{eq:pen_lasso}. Following~\citet{bickel2009simultaneous} and~\citet{stadler2010}, we impose the following restricted eigenvalue condition on the design.

\begin{mycon}\label{th:con_REC}
(Restricted eigenvalue condition). For all $ w \in \mathbb{R}^{dmk}$ satisfying $\|w_{S^c}\|_1 \leq 6\|w_S\|_1$, it holds that for some constant $\kappa\geq 1$,
\begin{align*}
\|w_S\|_2^2 \leq \kappa^2 \|\varphi\|_{Q_n}^2 = \frac{\kappa^2}{n}\sum_{i=1}^n\sum_{j\in\Omega_i}\sum_{r=1}^k (\x_i\bbeta_{jr})^2.
\end{align*}
\end{mycon}

\begin{mythe}\label{th:th_bound}
Consider the~\methodname\, model in (\ref{eq:exp_fm}) with {  $\theta_0 \in \tilde{\Theta}$ }, and consider the penalized estimator (\ref{eq:l1}) with the $\ell_1$ penalty in (\ref{eq:pen_lasso}). Assume Conditions 1-4 hold. Suppose $\sqrt{mk} \lesssim n/M_n$, and take $\lambda > 2T\lambda_0$ for some constant $T>1$. For some constant $c>0$ and large enough $n$, with probability
\begin{equation}\label{eq:prob_bound}
1 - c\exp\left(-\frac{\log^2n\log(d\vee n)}{c}\right) - \frac{1}{n},
\end{equation}
we have
\begin{equation}\label{eq:bound_low}
 \bar{\varepsilon}(\hat{\theta}\mid \theta_0) + 2(\lambda-T\lambda_0)  \|\hat{\beta}_{S^c}\|_1
 \leq 4(\lambda+T\lambda_0)^2\kappa^2 c_0^2s
\end{equation}
where $s$ is the number of non-zero parameters of $w_0$.
\end{mythe}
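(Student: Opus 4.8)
The plan is to follow the general strategy of~\citet{stadler2010} for the high-dimensional $\ell_1$-penalized Gaussian mixture case, but carefully accommodating the three new sources of complication: the multivariate and incomplete response (indexed by $j\in\Omega_i$), the mixed-type exponential-dispersion-family components, and the need to work with the reparameterized natural parameters $\psi$ rather than the original $\theta$. First I would set up the basic inequality: since $\hat\theta$ minimizes the penalized empirical risk, comparing it with $\theta_0$ gives
\begin{align*}
\bar\varepsilon(\hat\theta\mid\theta_0) + \big(-\ell(\hat\theta)+\mathbb{E}\ell(\hat\theta)\big)/n - \big(-\ell(\theta_0)+\mathbb{E}\ell(\theta_0)\big)/n + \mathcal{R}(\hat\beta;\lambda) \le \mathcal{R}(\beta_0;\lambda),
\end{align*}
so that everything reduces to controlling the empirical process term $\nu_n(\hat\theta)-\nu_n(\theta_0)$, where $\nu_n(\theta) = (\ell(\theta)-\mathbb{E}\ell(\theta))/n$, uniformly over a suitable neighbourhood of $\theta_0$.

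The core of the argument is a uniform deviation bound showing that on the event in~\eqref{eq:prob_bound},
\begin{align*}
|\nu_n(\theta)-\nu_n(\theta_0)| \lesssim \lambda_0\big(\|\beta-\beta_0\|_1 + \|\eta-\eta_0\|_1\big) + (\text{small remainder}),
\end{align*}
for all $\theta\in\tilde\Theta$. Here I would use Condition~\ref{th:con_G1} to bound the gradient of the log-likelihood in $\psi$ by $G_1(\oby_i)$, then use Condition~\ref{th:con_tail} to get a Bernstein-type tail bound for the truncated sums of $G_1(\oby_i)$, contributing the $M_n\log n$ factor via a truncation at level $M_n$ and a union bound over the $dmk$ coordinates (contributing the $\sqrt{\log(d\vee n)/n}$ and $\sqrt{mk}$ factors in $\lambda_0$). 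A chaining/peeling argument over the magnitude of $\|\theta-\theta_0\|$ handles the supremum; the condition $\sqrt{mk}\lesssim n/M_n$ is what makes the higher-order terms negligible. The reparameterization to $\psi$ is essential here because the map $\theta\mapsto\psi$ is Lipschitz on $\tilde\Theta$ (boundedness from~\eqref{eq:tTheta}), so control in $\psi$ transfers to control in $(\beta,\eta)$.

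Next I would convert the resulting inequality into the stated oracle bound. Splitting $\mathcal{R}(\hat\beta;\lambda)-\mathcal{R}(\beta_0;\lambda) = \lambda\|\hat\beta\|_1-\lambda\|\beta_0\|_1$ and using that $\beta_0$ is supported on $S$, the cross term $\lambda_0\|\hat\beta-\beta_0\|_1$ gets absorbed: the $S^c$ part is dominated by $(\lambda-T\lambda_0)\|\hat\beta_{S^c}\|_1$ (this is where $\lambda>2T\lambda_0$ is used), and the $S$ part combines with $\bar\varepsilon$. To close the loop I would invoke Condition~\ref{th:con_fisher} (local convexity / lower bound on the Fisher information) to get a quadratic margin $\bar\varepsilon(\hat\theta\mid\theta_0)\gtrsim c_0^{-2}\|\psi-\psi_0\|_{Q_n}^2$ near $\theta_0$, together with the restricted eigenvalue Condition~\ref{th:con_REC} to pass from $\|\hat\beta_S-\beta_{0,S}\|_2$ to $\|\varphi\|_{Q_n}$; a standard "$a\cdot b \le a^2/4 + b^2$" completion of the square then yields $\bar\varepsilon(\hat\theta\mid\theta_0)+2(\lambda-T\lambda_0)\|\hat\beta_{S^c}\|_1 \le 4(\lambda+T\lambda_0)^2\kappa^2c_0^2 s$. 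One subtlety I would flag is that, as in~\citet{stadler2010}, the margin condition only holds locally, so a separate argument (using that the penalty forces $\hat\theta$ into the local region, or a convexity-of-the-penalized-criterion-along-segments trick) is needed to guarantee $\hat\theta$ is close enough to $\theta_0$ for the quadratic lower bound to apply.

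The main obstacle I expect is exactly this interplay between non-convexity of the mixture log-likelihood and the tail behaviour of the mixed-type components: establishing the uniform empirical-process bound with the correct $\lambda_0$ requires the sub-exponential tail control of Condition~\ref{th:con_tail} (verified for~\methodname\ in Lemma~\ref{th:lem_subexp_tail}), and the truncation level $M_n = c_2(\log n)^{c_1}$ must be chosen so that both the truncated Bernstein bound and the tail remainder are of order $\lambda_0$ simultaneously — the value of $c_1$ (here $1/2,0,1$ for Gaussian, Bernoulli, Poisson) is precisely what the tail assumption pins down. Handling the incomplete response is comparatively mild: since only $j\in\Omega_i$ contribute, all sums over $j$ are effectively over $\Omega_i$ and the dimension $mk$ in $\lambda_0$ is a conservative upper bound, but one must check that the Fisher-information and REC conditions are stated with the same $\Omega_i$-restricted quadratic form, which they are.
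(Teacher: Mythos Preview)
Your proposal is correct and follows essentially the same route as the paper, which in turn defers almost entirely to Theorem~3 of \citet{stadler2010}; the paper's own proof of Theorem~\ref{th:th_bound} is a single sentence to that effect, with the detailed three-case analysis (splitting on the relative size of $T\lambda_0\|\hat\eta-\eta_0\|_2$ versus $(\lambda+T\lambda_0)\|\hat\beta_S-\beta_{0,S}\|_1$) spelled out only for the group-lasso analogue, Theorem~\ref{th:th_bound_GS}. Two minor refinements worth noting: the empirical-process event $\mathcal{T}$ in the paper is stated with $\|\eta-\eta_0\|_2$ rather than $\|\cdot\|_1$, and the uniform deviation bound is obtained via an entropy/chaining argument (Lemma~2.6.11 of \citet{van1996weak} together with Lemma~3.2 of \citet{van2000applications}) rather than a direct coordinatewise union bound over the $dmk$ parameters, though the resulting $\lambda_0$ rate is the same.
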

Theorem~\ref{th:th_bound} suggests that the average excess risk has a convergence rate of the order {$O(s \lambda_0^2) = O((\log n)^{2+2c_1}\log(d\vee n)smk/n)$}, by taking $\lambda = 2T\lambda_0$ and using $\lambda_0$ and $M_n$ as defined in (\ref{eq:lambda_0}). Also, the degree of false selection measured by $ \|\hat{\beta}_{S^c}\|_1$ converge to zero at rate {$O(s\lambda_0) =O(s\sqrt{(\log n)^{2+2c_1}\log(d\vee n)mk/n)} $}.

Similar to~\citet{stadler2010}, under weaker conditions without the restricted eigenvalue assumption on the design, we still achieve the consistency for the average excess risk.
\begin{mythe}\label{th:high_dim}
Consider the~\methodname\, model in (\ref{eq:exp_fm}) with $\theta_0 \in \tilde{\Theta}$, and consider the penalized estimator (\ref{eq:l1}) with the $\ell_1$ penalty in (\ref{eq:pen_lasso}). Assume Conditions 1-3 hold.
Suppose
\begin{align*}
 \|\beta_0\|_1   &= o(\sqrt{n/((\log n)^{2+2c_1}\log(d\vee n)mk)}),\\
 \sqrt{mk} &= o(\sqrt{n/((\log n)^{2+2c_1}\log(d\vee n))})
\end{align*}
as $n\rightarrow \infty$, and take $\lambda = C\sqrt{(\log n)^{2+2c_1}\log(d\vee n)mk/n}$ for some constant $C>0$ sufficiently large.
For some constant $c>0$ and large enough $n$, with the following probability
$
1 - c\exp\left(-\frac{ (\log n)^2\log(d\vee n)}{c}\right) - \frac{1}{n},
$
we have
$
\bar{\varepsilon}(\hat{\theta}\mid \theta_0) = o_P(1).
$
\end{mythe}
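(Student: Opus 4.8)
The plan is to re-derive the central empirical-process inequality of Theorem \ref{th:th_bound} but \emph{without} the restricted eigenvalue assumption (Condition \ref{th:con_REC}), and then to convert it into a consistency statement by a direct contradiction/slicing argument on the magnitude of $\bar{\varepsilon}(\hat\theta\mid\theta_0)$. Concretely, I would start from the basic inequality coming from the definition of the penalized estimator $\hat\theta$ in \eqref{eq:l1}: since $\hat\theta$ minimizes $-\ell(\theta\mid\Y,\X)/n+\lambda\|\beta\|_1$, comparing with $\theta_0$ gives
\begin{equation*}
 \bar{\varepsilon}(\hat\theta\mid\theta_0) \;\le\; \bigl(\text{empirical process term}\bigr)\;+\;\lambda\|\beta_0\|_1-\lambda\|\hat\beta\|_1 ,
\end{equation*}
where the empirical process term is $\frac1n\sum_i\bigl(\ell(\hat\psi_i\mid\oby_i)-\ell(\psi_{0,i}\mid\oby_i)\bigr)-\bigl(\text{its expectation}\bigr)$. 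The first step is therefore to show that this empirical process term is uniformly (over $\tilde\Theta$) bounded, with the probability stated in the theorem, by something of order $\lambda_0\bigl(\|\hat\beta-\beta_0\|_1+\text{lower-order}\bigr)$; this is exactly the concentration machinery already developed in the Appendix for Theorem \ref{th:th_bound}, resting on Conditions \ref{th:con_G1}--\ref{th:con_tail} (the sub-exponential tail control of $G_1(\oby_i)$, which by Lemma \ref{th:lem_subexp_tail} covers our model) together with a chaining/peeling bound; the choice $\lambda_0 = \sqrt{mk}\,M_n\log n\sqrt{\log(d\vee n)/n}$ with $M_n=c_2(\log n)^{c_1}$ is precisely calibrated so that the truncation level $M_n$ controls the tails and the residual term is $o(1)$.

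The second step is the deterministic part: on the event that the empirical-process bound holds, I would use $\|\beta_0\|_1-\|\hat\beta\|_1\le\|\hat\beta-\beta_0\|_1$ and absorb the resulting $\lambda_0\|\hat\beta-\beta_0\|_1$ into $\lambda\|\hat\beta-\beta_0\|_1$ using $\lambda=C\lambda_0$ for $C$ large. Unlike in Theorem \ref{th:th_bound}, I cannot now invoke Condition \ref{th:con_REC} to turn $\|\hat\beta-\beta_0\|_1$ into something controlled by $\bar\varepsilon$; instead I would bound it crudely by $\|\hat\beta\|_1+\|\beta_0\|_1$ and note that the basic inequality forces $\|\hat\beta\|_1 \lesssim \|\beta_0\|_1+(\text{small})$, so $\|\hat\beta-\beta_0\|_1 \lesssim \|\beta_0\|_1$. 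Feeding this back yields
\begin{equation*}
 \bar{\varepsilon}(\hat\theta\mid\theta_0)\;\lesssim\;\lambda_0\,\|\beta_0\|_1\;+\;(\text{residual terms of order }\lambda_0\sqrt{mk}\text{ or }\lambda_0).
\end{equation*}
Now the two rate hypotheses enter: $\|\beta_0\|_1 = o\bigl(\sqrt{n/((\log n)^{2+2c_1}\log(d\vee n)mk)}\bigr)$ is exactly $\|\beta_0\|_1=o(1/\lambda_0)$, and $\sqrt{mk}=o\bigl(\sqrt{n/((\log n)^{2+2c_1}\log(d\vee n))}\bigr)$ makes the residual $\lambda_0\sqrt{mk}=o(1)$. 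Hence $\bar\varepsilon(\hat\theta\mid\theta_0)=o_P(1)$, which is the claim.

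The main obstacle, and the step I would spend the most care on, is the \emph{uniform} empirical-process bound in the first step under the triple heterogeneity (mixed-type, multivariate, incomplete response) and the non-convexity of the mixture likelihood: one has to control $\sup_{\theta\in\tilde\Theta}$ of the centered log-likelihood-ratio process, and the natural route is to reparametrize through $\psi_i=\mathrm{vec}(\varphi_i,\eta)$ as set up before Condition \ref{th:con_G1}, use Lipschitz continuity of $\ell(\psi_i\mid\oby_i)$ in $\psi_i$ with Lipschitz constant $G_1(\oby_i)$ (Condition \ref{th:con_G1}), truncate $G_1(\oby_i)$ at level $M_n$ using the sub-exponential tail bound of Condition \ref{th:con_tail} (so the contribution of the event $\{|G_1(\oby_i)|>M_n\}$ is negligible, with the $1/n$ term in the probability accounting for it), and then apply a Bernstein-type concentration plus a covering-number bound for the low-dimensional parameters $(\varphi_i,\eta)$ combined with the $\ell_1$-ball geometry for $\beta$. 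The bookkeeping of how $m$, $k$, the missingness sets $\Omega_i$, and the parameter-space radius $K$ enter the constants (which is what produces the explicit $\sqrt{mk}$ and $M_n\log n$ factors in $\lambda_0$) is the delicate part; everything downstream is soft. I would borrow this bound essentially verbatim from the proof of Theorem \ref{th:th_bound} in the Appendix, noting that it does not use Condition \ref{th:con_REC} and is therefore available here unchanged.
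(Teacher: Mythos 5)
Your proposal follows essentially the same route as the paper's proof (which itself adapts Theorem 5 of St\"adler et al.): work on the empirical-process event $\mathcal{T}$ of \eqref{eq:event_T} (whose probability bound, Lemma~\ref{th:pT}, does not use Condition~\ref{th:con_REC}), apply the basic inequality for the penalized estimator, bound $\|\hat{\beta}-\beta_0\|_1$ crudely by $\|\hat{\beta}\|_1+\|\beta_0\|_1$ and $\|\eta-\eta_0\|_2$ by the parameter-space bound $2K$, absorb $T\lambda_0\|\hat{\beta}\|_1$ into $\lambda\|\hat{\beta}\|_1$ via $\lambda\geq 2T\lambda_0$, and conclude $o_P(1)$ from the rate hypotheses on $\|\beta_0\|_1$ and $\sqrt{mk}$. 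The only cosmetic difference is that the paper controls the residual as $2KT\lambda_0=O(\lambda_0)$ rather than your looser ``$\lambda_0\sqrt{mk}$ or $\lambda_0$'' (immaterial here since $m,k$ are treated as fixed), so the proposal is correct and matches the paper's argument.
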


\subsection{Results for Group-Lasso Type Estimator}
Consider the following general form of the group $\ell_1$ penalty,
 \begin{equation}\label{eq:pen_lasso_group_general}
 \begin{split}
 \mathcal{R}(\bbeta) = \lambda\sum_{p=1}^P\|\bbeta_{\mathcal{G}_p}\|_F,
 \end{split}
 \end{equation}
 where $\mathcal{G}_1,\ldots,\mathcal{G}_P$ are index collections such that $ \mathcal{G}_p \bigcap \mathcal{G}_{p'} = \emptyset$ for $p\neq p'$ and $\bigcup_{p=1}^P\mathcal{G}_p = \bigcup_{l=1}^d \bigcup_{j=1}^m \bigcup_{r=1}^k (l,j,r) $ equals the universal set of indices of $\bbeta\in\mathbb{R}^{(d\times m)\times k}$, i.e., $\bbeta_{\mathcal{G}_p}$ is the $p$th group of $\bbeta$. $\|\cdot\|_F$ denotes the Frobenius norm and here for $p = 1,\ldots,P$, $\|\bbeta_{\mathcal{G}_p}\|_F = \sqrt{\sum_{(l,j,r)\in\mathcal{G}_p}w_{ljr}^2}$. This penalty form generalizes the row-wise group sparsity in~\eqref{eq:pen_lasso_group}. 

Denote
$
  \mathcal{I} = \{p: \bbeta_{0,\mathcal{G}_p} = \mathbf{0}\}$ and $ \mathcal{I}^c = \{p: \bbeta_{0,\mathcal{G}_p} \neq \mathbf{0}\},
$
where $ \bbeta_{0,\mathcal{G}_p}$ is the $p$th group of $\bbeta_{0}$. Now denote by $s$ the size of $\mathcal{I}$, with some abuse of notation. We impose the following group-version restricted eigenvalue condition.
 \begin{mycon}\label{th:con_REC_GS}
For all $ \bbeta \in \mathbb{R}^{(d\times m)\times k}$ satisfying
\begin{align*}
\sum_{p\in \mathcal{I}^c}\|\bbeta_{\mathcal{G}_p}\|_F \leq 6\sum_{p\in \mathcal{I}}\|\bbeta_{\mathcal{G}_p}\|_F,
\end{align*}
it holds that for some constant $\kappa\geq 1$,
\begin{align*}
\sum_{p\in \mathcal{I}}\|\bbeta_{\mathcal{G}_p}\|_F^2 \leq \kappa^2 \|\varphi\|_{Q_n}^2.
\end{align*}
\end{mycon}

\begin{mythe}\label{th:th_bound_GS}
Consider the~\methodname\, model in (\ref{eq:exp_fm}) with $\theta_0 \in \tilde{\Theta}$, and consider the penalized estimator (\ref{eq:l1}) with the group $\ell_1$ penalty in (\ref{eq:pen_lasso_group_general}).

\noindent (a) Assume conditions 1-3 and 5 hold. Suppose $\sqrt{mk} \lesssim n/M_n$, and take $\lambda > 2T\lambda_0$ for some constant $T>1$. For some constant $c>0$ and large enough $n$, with the following probability
$
1 - c\exp\left(-\frac{ (\log n)^2\log(d\vee n)}{c}\right) - \frac{1}{n},
$
 we have
\begin{align*}
 \bar{\varepsilon}(\hat{\theta}\mid\theta_0) + 2(\lambda-T\lambda_0) \sum_{p\in\mathcal{I}^c}\|\widehat{\bbeta}_{\mathcal{G}_p}\|_F
 \leq  4(\lambda+T\lambda_0)^2\kappa^2 c_0^2s.
\end{align*}

\noindent(b) Assume conditions 1-3 hold (without condition~\ref{th:con_REC_GS}), and assume
\begin{align*}
 \sum_{p=1}^P\|\bbeta_{0,\mathcal{G}_p}\|_F  & = o(\sqrt{n/((\log n)^{2+2c_1}\log(d\vee n)mk)}),\\
  \sqrt{mk} &= o(\sqrt{n/((\log n)^{2+2c_1}\log(d\vee n))})
\end{align*}
as $n\rightarrow \infty$. Let $\lambda = C\sqrt{(\log n)^{2+2c_1}\log(d\vee n)mk/n}$ for some $C>0$ sufficiently large. Then for some constant $c>0$ and large enough $n$, with the following probability
$
1 - c\exp\left(-\frac{ (\log n)^2\log(d\vee n)}{c}\right) - \frac{1}{n},
$
we have
$
\bar{\varepsilon}(\hat{\theta}\mid\theta_0) = o_P(1).
$
\end{mythe}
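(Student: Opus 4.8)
The plan is to follow the route used for Theorems~\ref{th:th_bound} and~\ref{th:high_dim}, since the only structural change is that the entrywise $\ell_1$ geometry is replaced by the block geometry of the penalty~\eqref{eq:pen_lasso_group_general}; the stochastic estimates are reused almost verbatim. First I would isolate the two penalty-free ingredients. The first is a local \emph{curvature (margin) inequality}: on the bounded set $\tilde{\Theta}$, a second-order expansion of $\psi\mapsto\bar{\varepsilon}(\psi\mid\psi_0)$ around $\psi_0$ combined with Condition~\ref{th:con_fisher} gives $\bar{\varepsilon}(\hat{\theta}\mid\theta_0)\gtrsim c_0^{-2}\,\|\varphi(\hat{\theta})-\varphi(\theta_0)\|_{Q_n}^2$, valid once $\hat{\theta}$ is shown to lie in a suitable neighbourhood of $\theta_0$; this is the step where the non-convexity of the mixture likelihood must be confronted, and it is handled exactly as in Theorem~\ref{th:th_bound} (through the convexity of the reparametrised objective in the sense of~\citet{stadler2010}). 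The second ingredient is the \emph{empirical-process bound}: writing $\widehat{P}_n$ for the empirical average over the $n$ samples and using Conditions~\ref{th:con_G1}--\ref{th:con_tail} (so that, by Lemma~\ref{th:lem_subexp_tail}, the mixed-type sub-exponential case is covered) together with a Bernstein-type tail inequality for the sub-Weibull variables $G_1(\oby_i)$, a peeling device for the supremum, and a union bound over the at most $mk$ natural parameters per observation, one obtains that, with the stated probability, $|(\widehat{P}_n-P)(\ell(\theta\mid\cdot)-\ell(\theta_0\mid\cdot))|\le T\lambda_0(\|\beta-\beta_0\|_1+\|\eta-\eta_0\|_1)$ uniformly over $\theta\in\tilde{\Theta}$, with $\lambda_0$ as in~\eqref{eq:lambda_0}; the requirement $\sqrt{mk}\lesssim n/M_n$ is what keeps this slack of lower order.

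For part~(a) I would combine these with the basic inequality coming from $\hat{\theta}$ minimising~\eqref{eq:l1} under the penalty~\eqref{eq:pen_lasso_group_general}, namely $\bar{\varepsilon}(\hat{\theta}\mid\theta_0)+\lambda\sum_{p}\|\widehat{\bbeta}_{\mathcal{G}_p}\|_F\le\lambda\sum_{p}\|\bbeta_{0,\mathcal{G}_p}\|_F$ plus the empirical slack and the curvature remainder. Splitting the group sums into the active groups $\mathcal{I}$ and the inactive groups $\mathcal{I}^c$, applying the block triangle inequality $\|\bbeta_{0,\mathcal{G}_p}\|_F-\|\widehat{\bbeta}_{\mathcal{G}_p}\|_F\le\|\widehat{\bbeta}_{\mathcal{G}_p}-\bbeta_{0,\mathcal{G}_p}\|_F$ on the active groups (the inactive ones each contributing $-\|\widehat{\bbeta}_{\mathcal{G}_p}\|_F$), and using $\lambda>2T\lambda_0$ to absorb the slack, yields both the displayed inequality $\bar{\varepsilon}(\hat{\theta}\mid\theta_0)+2(\lambda-T\lambda_0)\sum_{p\in\mathcal{I}^c}\|\widehat{\bbeta}_{\mathcal{G}_p}\|_F\le\cdots$ and the group cone restriction $\sum_{p\in\mathcal{I}^c}\|\widehat{\bbeta}_{\mathcal{G}_p}-\bbeta_{0,\mathcal{G}_p}\|_F\le 6\sum_{p\in\mathcal{I}}\|\widehat{\bbeta}_{\mathcal{G}_p}-\bbeta_{0,\mathcal{G}_p}\|_F$ required by Condition~\ref{th:con_REC_GS}. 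Invoking Condition~\ref{th:con_REC_GS} then bounds $\sum_{p\in\mathcal{I}}\|\widehat{\bbeta}_{\mathcal{G}_p}-\bbeta_{0,\mathcal{G}_p}\|_F^2$ by $\kappa^2\|\varphi\|_{Q_n}^2$; a Cauchy--Schwarz step over the $s$ active blocks ($\sum_{p\in\mathcal{I}}\|\cdot\|_F\le\sqrt{s}\,(\sum_{p\in\mathcal{I}}\|\cdot\|_F^2)^{1/2}$) combined with the curvature inequality reduces the basic inequality to a quadratic inequality in $\sqrt{\bar{\varepsilon}(\hat{\theta}\mid\theta_0)}$, and solving it by AM--GM delivers $\bar{\varepsilon}(\hat{\theta}\mid\theta_0)+2(\lambda-T\lambda_0)\sum_{p\in\mathcal{I}^c}\|\widehat{\bbeta}_{\mathcal{G}_p}\|_F\le 4(\lambda+T\lambda_0)^2\kappa^2 c_0^2 s$. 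The finitely many non-penalized coordinates $\eta$ are dealt with as in Theorem~\ref{th:th_bound}: because $m$ and $k$ are fixed, $\|\hat{\eta}-\eta_0\|_1$ is controlled by the curvature bound and absorbed into the constants.

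For part~(b) I would discard the cone and restricted-eigenvalue machinery and instead bound the right-hand side of the basic inequality crudely, $\bar{\varepsilon}(\hat{\theta}\mid\theta_0)\le 2\lambda\sum_{p=1}^{P}\|\bbeta_{0,\mathcal{G}_p}\|_F+T\lambda_0(\|\hat{\beta}-\beta_0\|_1+\|\hat{\eta}-\eta_0\|_1)+o(1)$. With $\lambda\asymp\sqrt{(\log n)^{2+2c_1}\log(d\vee n)mk/n}$ and the hypotheses $\sum_{p}\|\bbeta_{0,\mathcal{G}_p}\|_F=o(\sqrt{n/((\log n)^{2+2c_1}\log(d\vee n)mk)})$ and $\sqrt{mk}=o(\sqrt{n/((\log n)^{2+2c_1}\log(d\vee n))})$, every term on the right is $o_P(1)$, so $\bar{\varepsilon}(\hat{\theta}\mid\theta_0)=o_P(1)$.

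The main obstacle is the first ingredient: establishing the local quadratic margin for a non-convex, mixed-type finite-mixture log-likelihood and certifying that $\hat{\theta}$ falls in the region where the Fisher-information lower bound of Condition~\ref{th:con_fisher} bites, while keeping the Taylor remainder under uniform control via the tail Condition~\ref{th:con_tail}. Once that is secured, the passage from the $\ell_1$ proofs to the group-$\ell_1$ proofs is essentially bookkeeping: replace the $\ell_1$ norms of vectors by sums of Frobenius norms of blocks, replace the scalar restricted-eigenvalue Condition~\ref{th:con_REC} by its group analogue Condition~\ref{th:con_REC_GS}, and replace the factor $\sqrt{s}$ coming from $\|w_S\|_1\le\sqrt{s}\|w_S\|_2$ by the factor $\sqrt{s}$ coming from Cauchy--Schwarz over the $s$ active blocks.
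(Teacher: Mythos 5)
Your overall architecture for part (a) — basic inequality, split into active/inactive groups, derivation of the cone restriction, Condition~\ref{th:con_REC_GS}, Cauchy--Schwarz over the $s$ active blocks, absorption by AM--GM, and the crude bound for part (b) — matches the paper's argument (its Cases 1--3). But there is a genuine gap, and it sits exactly at the step you declare reusable ``almost verbatim'': you keep the lasso-type empirical-process control, with slack $T\lambda_0(\|\beta-\beta_0\|_1+\|\eta-\eta_0\|_1)$, i.e.\ essentially the event $\mathcal{T}$ of \eqref{eq:event_T}. For the group estimator this slack cannot be absorbed by the penalty $\lambda\sum_p\|\bbeta_{\mathcal{G}_p}\|_F$ at the stated threshold $\lambda>2T\lambda_0$: the norm comparison goes the wrong way, since $\sum_p\|\bbeta_{\mathcal{G}_p}-\bbeta_{0,\mathcal{G}_p}\|_F\le\|\beta-\beta_0\|_1\le\sqrt{\max_p|\mathcal{G}_p|}\;\sum_p\|\bbeta_{\mathcal{G}_p}-\bbeta_{0,\mathcal{G}_p}\|_F$. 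With an $\ell_1$-measured slack, both the cone restriction you need before invoking Condition~\ref{th:con_REC_GS} and the final oracle inequality only follow under $\lambda>2T\sqrt{\max_p|\mathcal{G}_p|}\,\lambda_0$; for the row-wise penalty \eqref{eq:pen_lasso_group} this already changes the constants of the theorem, and for the general partition in \eqref{eq:pen_lasso_group_general}, where a group may be large (up to order $dmk$ entries with $d\gg n$), it destroys the rate. The same defect propagates to your part (b): the term $T\lambda_0\|\hat{\beta}-\beta_0\|_1$ is not $o_P(1)$ by itself; it has to be moved to the left through the group penalty, which again needs the slack measured in the group norm.

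What the paper actually does at this point is define the new event $\mathcal{T}_{group}$ in \eqref{eq:event_T_GS}, in which $|V_n(\theta)-V_n(\theta_0)|$ is normalized by $\bigl(\sum_p\|\bbeta_{\mathcal{G}_p}-\bbeta_{0,\mathcal{G}_p}\|_2+\|\eta-\eta_0\|_2\bigr)\vee\lambda_0$, and then verifies that this \emph{stronger} uniform bound still holds with the probability of Lemma~\ref{th:pT}. This is not bookkeeping: the group ball $\{\sum_p\|\bbeta_{\mathcal{G}_p}-\bbeta_{0,\mathcal{G}_p}\|_F\le\epsilon\}$ strictly contains the $\ell_1$ ball of the same radius, so the entropy bound in the Entropy Lemma must be re-established over this larger class; the paper's observation is that the class is still a convex hull to which Lemma 2.6.11 of van der Vaart and Wellner applies, so the entropy, hence the probability bound and the same $\lambda_0$ of \eqref{eq:lambda_0}, carry over unchanged. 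If you insert this step — prove the group analogue of Lemma~\ref{th:pT} first, then run your absorption argument with the slack measured by $\sum_p\|\cdot\|_F$ throughout (treating the case where the $\eta$-deviation dominates as the paper does, via $2T\lambda_0\|\hat{\eta}-\eta_0\|_2\le 2T^2\lambda_0^2c_0^2+\|\hat{\eta}-\eta_0\|_2^2/(2c_0^2)$ and Condition~\ref{th:con_fisher}) — your parts (a) and (b) go through as planned and coincide with the paper's proof.
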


So the average excess risk has a convergence rate of $O(s\lambda_0^2)$, and the degree of false group selection, as measured by $ \sum_{p\in\mathcal{I}^c}\|\hat{\bbeta}_{\mathcal{G}_p}\|_F$, converges to zero at rate $O(s\lambda_0)$. The estimator in (\ref{eq:l1}) using other group $\ell_1$ penalties such as~\eqref{eq:pen_lasso_group} are special cases, so the results of Theorem~\ref{th:th_bound_GS} still apply.

\noindent\underline{\textbf{Remark}}. Our results can be extended to the mean-shifted natural parameter model as in~\eqref{eq:outlier_g}, with a modified restricted eigenvalue condition. See the Appendix for some details.

\section{Experiments}
\label{sec:exp}
In this section, we present empirical studies on both synthetic and real-world data sets.

\subsection{Methods for Comparison}\label{subsec:method_comp}

We evaluate the following versions of the proposed~\methodname\, approach.

(1) Single task learning (\textbf{Single}): It is a special case of the~\methodname\, estimator~\eqref{eq:l1} with~\eqref{eq:pen_lasso}, where each task is learned separately.

(2) Separately learning (\textbf{Sep}):  It is a special case of the~\methodname\, estimator~\eqref{eq:l1} with~\eqref{eq:pen_lasso}, where each type (Gaussian, Bernoulli or Poisson) of tasks is learned separately.

(3) Mixed learning with entry-wise sparsity (\textbf{Mix}): It is the proposed~\methodname\, estimator~\eqref{eq:l1} with~\eqref{eq:pen_lasso} where all the tasks are jointly learned. To compare with \textbf{Sep}, we allow different tuning parameters for different types of outcomes.

(4) Mixed learning with group sparsity (\textbf{Mix GS}): It is the proposed~\methodname\, estimator~\eqref{eq:l1} with~\eqref{eq:pen_lasso_group}.

{

(5) Mixed learning Mixture-Of-Experts model with entry-wise sparsity (\textbf{Mix MOE}): It is the proposed~\methodname\, estimator~\eqref{eq:estimator_MOE} with~\eqref{eq:pen_lasso} and~\eqref{eq:pen_MOE}.

(6) Mixed learning Mixture-Of-Experts model with group sparsity (\textbf{Mix MOE GS}): It is the proposed~\methodname\, estimator~\eqref{eq:estimator_MOE} with~\eqref{eq:pen_lasso_group} and~\eqref{eq:pen_MOE}.



}

Besides the above FMR methods, we also evaluate several non-FMR multi-task methods below for comparison, some of which handle certain kinds of heterogeneities, such as anomaly tasks, clustered tasks and heterogeneous responses. Since they are non-FMR, they learn a single regression coefficient matrix $\bbeta \in \mathbb{R}^{d\times m}$.


\begin{itemize}
  \item \textbf{LASSO}: $\ell_1$-norm multi-task regression with $\lambda\|\bbeta\|_1$ as penalty. Each type of tasks are learned independently. It is a special case of \textbf{Sep} when pre-fixed $\hat{k}=1$.
  \item \textbf{Sep L2}: ridge multi-task regression with $\lambda\|\bbeta\|_F^2$ as penalty. Each type of tasks are learned independently.
  \item \textbf{Group LASSO}: $\ell_{1,2}$-norm multi-task regression with $\lambda\|\bbeta\|_{1,2}$ as penalty~\citep{yang2009heterogeneous}, which handles heterogeneous responses, and is a special case of \textbf{Mix GS} when pre-fixed $\hat{k}=1$.
  \item \textbf{TraceReg}: trace-norm multi-task regression~\citep{ji2009accelerated}.
  \item \textbf{Dirty}: dirty model multi-task regression with $\lambda_1\|\S\|_{1} + \lambda_2\|\L\|_{1,\infty}(\bbeta = \L + \S)$ as penalty~\citep{jalali2010dirty}, handling entry-wise heterogeneity in $\bbeta$ comparing with \textbf{Group LASSO}.
  \item \textbf{MSMTFL}: multi-stage multi-task feature learning~\citep{gong2012multi} whose penalty is $\lambda_1\sum_{l=1}^{d}\min(\|\bbeta^l\|_1,\lambda_2)$, where $\bbeta^l$ denotes the $l$th row of $\bbeta$. It also handles entry-wise heterogeneity in $\bbeta$ comparing with \textbf{Group LASSO}.
  \item \textbf{SparseTrace}: multi-task regression, learning sparse and low-rank patterns with $\lambda_1\|\S\|_{1} + \lambda_2\|\L\|_*(\bbeta = \L + \S)$ as penalty~\citep{chen2012learning}, handling entry-wise heterogeneity in $\bbeta$ comparing with \textbf{TraceReg}, where $\|\cdot\|_{*}$ denotes the nuclear norm of the enclosed matrix.
  \item \textbf{rMTFL}: robust multi-task feature learning with $\lambda_1\|\S\|_{2,1} + \lambda_2\|\L\|_{1,2}(\bbeta = \L + \S)$ as penalty~\citep{gong2012robust}, handling anomaly tasks comparing with \textbf{Group LASSO}.
  \item \textbf{RMTL}: robust multi-task regression with $\lambda_1\|\S\|_{2,1} + \lambda_2\|\L\|_*(\bbeta = \L + \S)$ as penalty~\citep{chen2011integrating}, handling anomaly tasks comparing with \textbf{TraceReg}.
  \item \textbf{CMTL}: clustered multi-task learning~\citep{zhou2011clustered}, handling clustered tasks.
  \item \textbf{GO-MTL}: multi-task regression, handling overlapping clustered tasks~\citep{kumar2012learning}.
\end{itemize}

\subsection{Experimental Setting}
In our experiments, for the E-step of GEM, we follow~\citet{stadler2010} to initialize $\rho$. For the M-step, we initialize the entries of $\bbeta$ from $\mathcal{N}(0,10^{-10})$. We fix $\sigma = 1$ for Gaussian tasks, and set $\gamma=1$.  In the APG algorithm, step size is initialized by the Barzilai-Borwein rule~\citep{barzilai1988two} and updated by the TFOCS-style backtracking~\citep{becker2011templates}. 

We terminate the APG algorithm with maximum iteration step $T_{in} = 200$ or when the relative $\ell_2$-norm distance of two consecutive parameters is less than $10^{-6}$. We terminate the GEM with maximum iteration step $T_{out} = 50$, or when the relative change of two consecutive $-\ell(\theta\mid\Y,\X)/n$ is less than $10^{-6}$ or when the relative $\ell_{\infty}$-norm distance of two consecutive parameters is less than $10^{-3}$.

{


In the experiments on both simulated and real-world data sets, we partition the entire data set into three parts: a training set for model fitting, a validation set for tuning hyper-parameters and a testing set for testing the generalization performance of the selected models. The only exception is Section~\ref{subsubsec:sim1}, where we do not generate testing data sets because the models are evaluated by comparing the estimation results to the ground truth. 


In hyper-parameter tuning, the regularization parameters, i.e., $\lambda$s, are tuned from $[1e-6,1e3]$, and the number of clusters are tuned from $\{1,\ldots,10\}$. Hyper-parameters of the baseline methods are tuned according to the descriptions in their respective references.

All the experiments are replicated 100 times under each model setting.

}


\subsection{Evaluation Metrics}

The prediction of latent variable is evaluated by Normalized Mutual Information (NMI)~\citep{strehl2002cluster,fern2003random,Strehl2003Cluster}. {In detail, we compute NMI scores by~\eqref{eq:NMI}, treating estimated conditional probabilities $[\P_{\Omega,ir}]_{n\times k}$ defined in~\eqref{eq:outlier_post} and the ground truth latent variables $[\delta_{i,r}]_{n\times k}$ as $[P_{1,ir}]_{n\times k}$ and $[P_{2,ir}]_{n\times k}$, respectively.}

For feature selection, firstly, the estimated components are reordered to make the best match with the true components. Then feature selection is evaluated by Area Under the ROC Curve (AUC) which is measured by the Wilcoxon-Mann-Whitney statistic provided by~\citet{hanley1982meaning}. {Concretely, absolute values of vectorized estimated regression parameters, i.e., $\hat{\beta}_{\mbox{abs}} =|\mbox{vec}( \hat{\bbeta})|$, and binarized vectorized ground truth regression parameters, i.e., $\beta_{0,\mbox{sign}} = \mbox{sign}(|\mbox{vec}(\bbeta_0)|)$, are used as inputs to AUC, where $\bbeta_0$ denotes the ground truth regression parameter and $\mbox{vec}(\cdot)$  is the vectorization operator.}

{In order to show the existence of mixed relationships between features and targets, imputation performance for incomplete targets is used to compare FMR methods with non-FMR MTL methods.} Concretely, the goal is to predict one-half randomly chosen targets. The other half targets are allowed to be used. FMR methods use the other half targets to compute conditional probabilities $p(\delta_{i,r} = 1 \mid y_{ij'},j' \in \Omega_i,\x_i,\hat{\theta})(i=1,\ldots,n,r=1,\dots,k)$ and make prediction as stated in Section~\ref{subsec:FMMOGLR_miss}. Non-FMR MTL methods perform feature-based prediction.

{
Feature-based prediction performances are also compared between non-FMR MTL methods and our MOE methods, where only features are allowed to use to predict testing targets. For this case, the goal is to predict all the targets.
}

{For target prediction, Gaussian outcomes are evaluated by nMSE~\citep{chen2011integrating,gong2012robust} which is defined as the mean of each task's mean squared error (MSE) divided by the variance of its target vector. Bernoulli outcomes are evaluated by average AUC (aAUC), which is defined as the mean AUC of each task. For Poisson tasks, we firstly compute the logarithms of outcomes, then use nMSE for evaluation. }



{Since our objective functions in~\eqref{eq:l1} and~\eqref{eq:estimator_MOE} are non-convex, estimated parameters may correspond to local minimums of the objective functions. Therefore, we try different initializations and report the results ranking the best 20\% on the validation data set out of the 100 replications to avoid the results that may be stuck at local minimums, suggesting that one can always select any result within the best 20\%.}



\subsection{Simulation}


\subsubsection{Latent Variable Prediction and Feature Selection}\label{subsubsec:sim1}

We consider both low dimensional case and high dimensional case for latent variable prediction and feature selection.
For the low-dimensional case, we set the number of samples $n=100$, feature dimension $d=15$, number of non-zero features (sparsity) $s=3$, and the number of tasks (responses) $m=15$. The data set includes $3$ Gaussian tasks, $10$ Bernoulli tasks, and $2$ Poisson tasks. The number of latent components $k=2$. For $r=1,\dots,k$, in the $r$th component, the first row (biases) and the $(s(r-1)+2)$th to the $(sr+1)$th row (a block of $s$ rows) of the true $\bbeta_r\in \mathbb{R}^{d\times m}$ are non-zero (to let different components have different sets of features). Non-zero parameters in $\bbeta$ are in the range of $[-3,-1]\cup[1,3]$ except that those of Poisson tasks are in the range of $[-0.3,-0.1]\cup[0.1,0.3]$. The biases are all set to $1$ except that those of Poisson tasks are set to $3$. For Gaussian tasks, all $\sigma$s are set to $1$. The entries of $\X\in\mathbb{R}^{d\times n}$ are drawn from $\mathcal{N}(0,1)$ with the first dimension being $1$. $\pi = (0.5,0.5)$. Validation data is independently generated likewise and has $n$ samples.
For the high-dimensional case, we set $n=180$, $d=320$ and $m=20$. The data set includes $8$ Gaussian tasks, $10$ Bernoulli tasks, and $2$ Poisson tasks. Other settings are the same as in the low-dimensional case.
We set the pre-fixed $\hat{k}$ to be equal to the true $k=2$. For targets of training data, we have tried different missing rates, which are in the range of $\{0,0.05,0.1,0.15,0.2\}$. We compare the performances of $\hat{\theta}$s estimated by \textbf{Single}, \textbf{Sep}, \textbf{Mix}, \textbf{Mix GS}, respectively, with that of $\theta_0$ (denoted by ``\textbf{True}''). 

{
The results are shown in Fig~\ref{fig:theory}. The horizontal axis is the missing rates. Intuitively, larger missing rates may result in worse performances due to fewer data samples.
\textbf{Single} provides poor results and is not sensitive to missing rate, because (1) data samples are deficient for single-task learning and (2) the influence of missing rate may be not significant when the number of samples is at this level.
\textbf{Sep} outperforms \textbf{Single} and is affected significantly by missing rate, because (1) \textbf{Sep} uses the prior knowledge in data that multiple tasks share the same FMR structure and (2) \textbf{Sep} constructs separate FMR models such that tasks for each model are deficient, hence the advantage from joint learning multiple tasks can be easily affected when some targets are missing.
Our~\methodname\, method \textbf{Mix} outperforms \textbf{Sep} and is robust against growing missing rate, because (1) \textbf{Mix} uses the prior knowledge in data that all the tasks share the same FMR structure and (2) \textbf{Mix} takes advantage of all the tasks, therefore, the number of tasks is then enough even some targets are missing.
Our~\methodname\, method \textbf{Mix GS} outperforms \textbf{Mix}, even rivals the true model, and is also robust against growing missing rate, because (1) comparing with \textbf{Mix}, \textbf{Mix GS} further uses the prior knowledge in data that all the tasks share the same feature space in each cluster, and (2) \textbf{Mix GS} takes advantage of all the tasks as well.
}

%
%

\begin{figure}[htbp]\small
\centering
\subfigure[Latent variable prediction accuracy]{\includegraphics[width=2.3in]{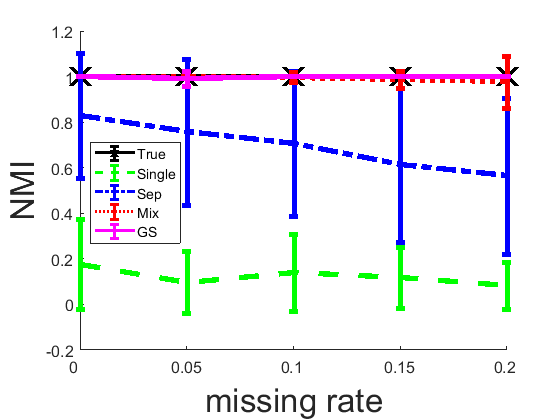}} 
\subfigure[Feature selection accuracy]{\includegraphics[width=2.3in]{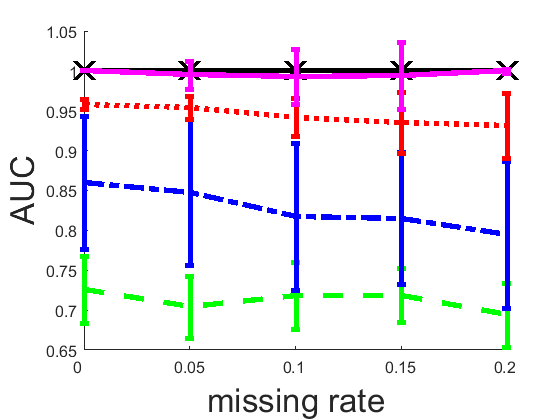}} 
\subfigure[Latent variable prediction accuracy]{\includegraphics[width=2.3in]{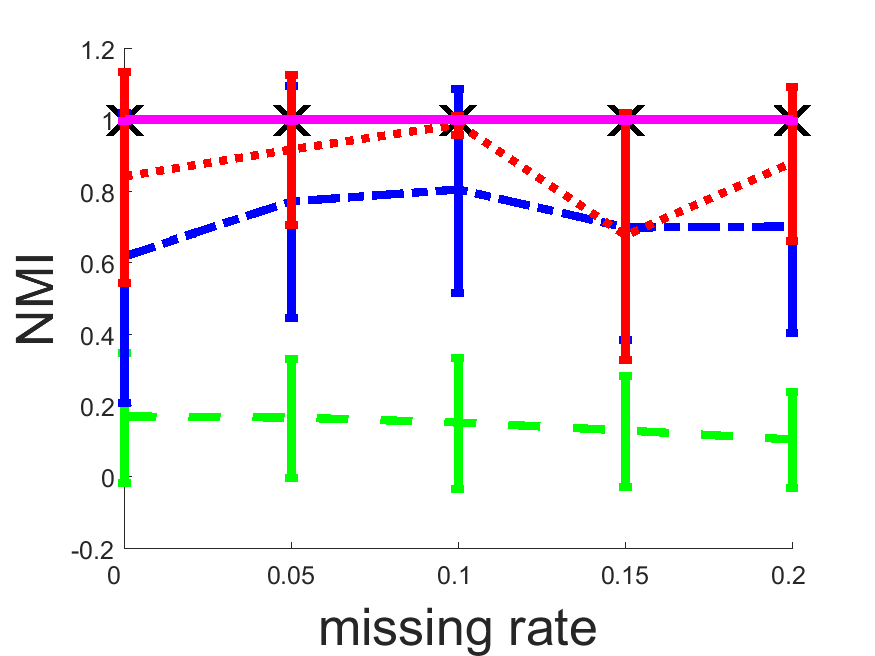}} 
\subfigure[Feature selection accuracy]{\includegraphics[width=2.3in]{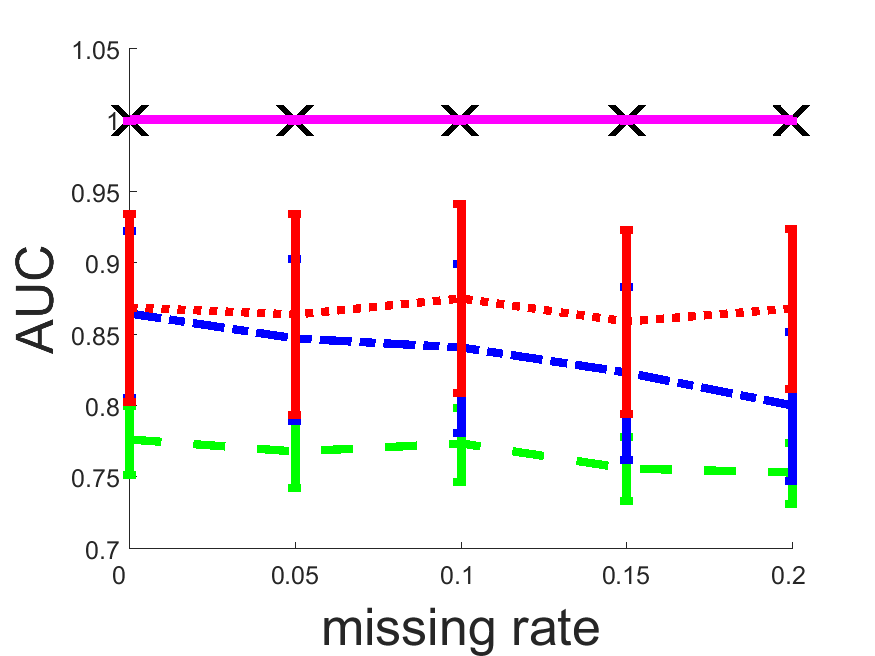}} 
\caption{Latent variable prediction and feature selection performance. (a) and (b) are results on low-dimensional data; (c) and (d) are results on high-dimensional data.}\label{fig:theory}
\end{figure}

\subsubsection{Performances When the Pre-Fixed $\hat{k}$ Is Different with the True $k$}\label{subsubsec:sim_diff_k}
We consider testing the performance of target imputation when the pre-fixed $\hat{k}$ is different with the true $k$.
Four data sets are generated with the true $k = 1,2,3,4$, respectively. We set $n=1000$, $d=32$, and $m=15$. There are $3$ Gaussian tasks, $10$ Bernoulli tasks, and $2$ Poisson tasks. For each $k = 1,2,3,4$, the sparsity $s$ is set to $\lfloor d/(2k)\rfloor$ such that the total numbers of relevant features for different data sets are the same. The values of the non-zero regression parameters for Gaussian and Bernoulli tasks in $\bbeta$ are in the range of $[-6,-2]\cup[2,6]$. We set $\pi_1 = \pi_2 = \ldots = \pi_k$. Validation and testing data are independently generated likewise and both have $n$ samples. We randomly set 20\% of targets to be missing for all the training, validation and testing data. Other settings are the same as in Section~\ref{subsubsec:sim1}.
One intuitive thought is that when the pre-fixed $\hat{k}$ equals the true $k$, the imputation performance will be maximized. So we set the pre-fixed $\hat{k} \in\{ 1,2,3,4,5,6,7,8\}$. We test \textbf{Mix} model in this experiment. Results by \textbf{Mix GS} model are similar.

In Fig~\ref{fig:diffk_simu}, the imputation performances are truly maximized when the pre-fixed $\hat{k}$ equals the true $k$. When pre-fixed $\hat{k}$ is larger than the true $k$, the imputation performances are similar. When the true $k>1$ and when the pre-fixed $\hat{k}$ is less than the true $k$, the imputation performances grow with the pre-fixed $\hat{k}$.
{One may expect that when the pre-fixed $\hat{k}$ is larger than the true $k$, the performances will deteriorate, since imputation would be based on fewer data samples. We think it is because (1) the simulated data are simple, and (2) the information sharing among tasks renders the robustness of our~\methodname\, method against decreasing sample size, which is consistent with the results in Section~\ref{subsubsec:sim1} when facing increasing missing rate (larger missing rate also indicates fewer data samples).}

\begin{figure}[htbp]\small
\centering
\subfigure[nMSE of Gaussian targets]{\includegraphics[width=2.3in]{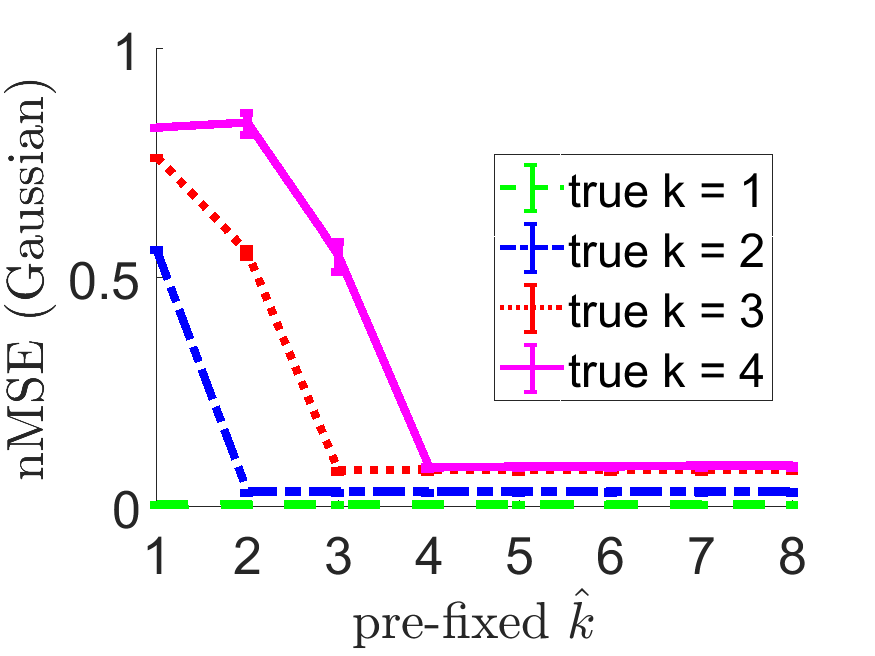}} 
\subfigure[aAUC of Bernoulli targets]{\includegraphics[width=2.3in]{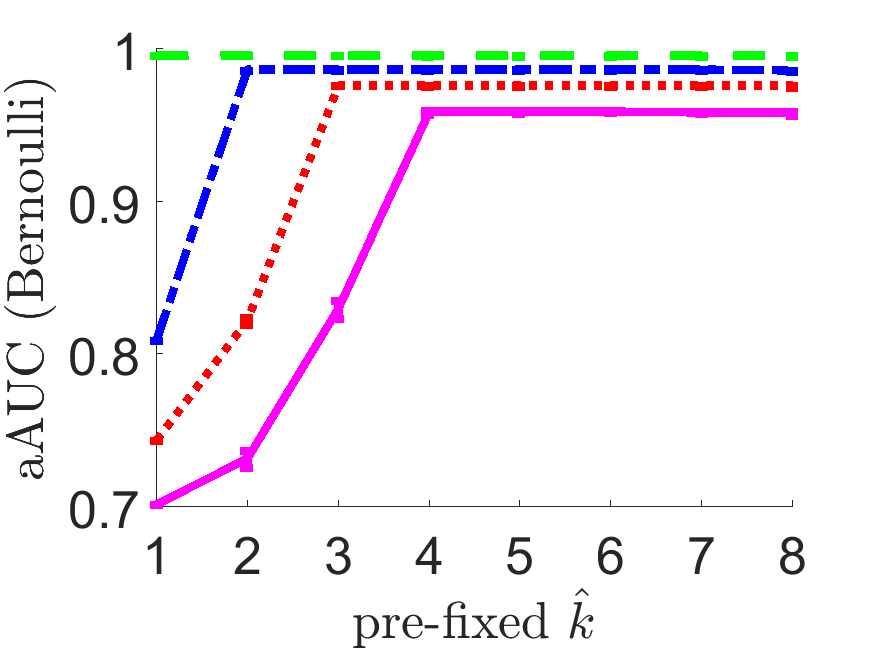}} 
\subfigure[nMSE of log of Poisson targets]{\includegraphics[width=2.3in]{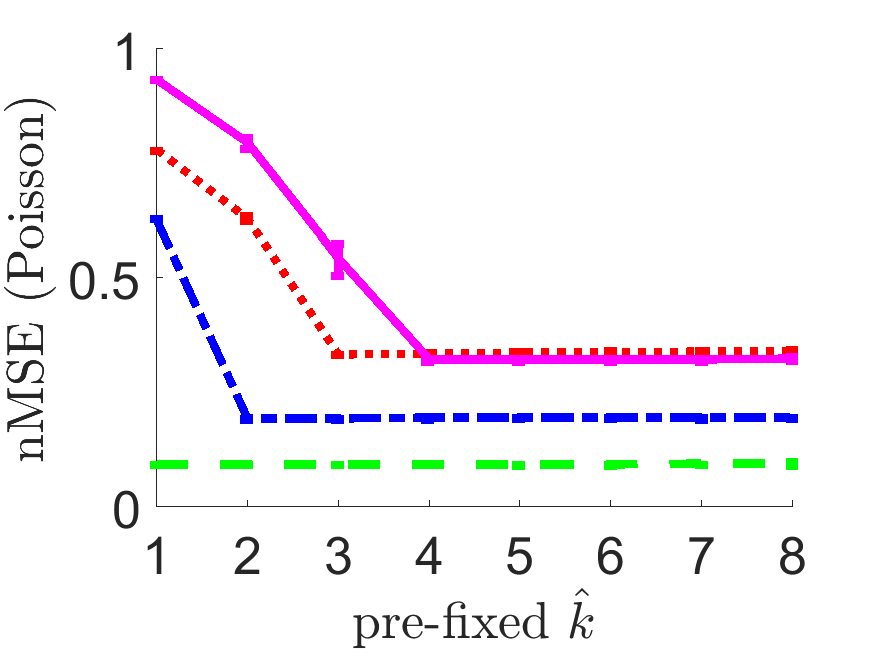}} 
\caption{Imputation performance when the pre-fixed $\hat{k}$ is different with the true $k$.}\label{fig:diffk_simu}
\end{figure}

\subsubsection{Comparison with Non-FMR Methods}\label{subsubsec:sim_non_FMR}

We compare the imputation performance of our~\methodname\, methods \textbf{Mix} and \textbf{Mix GS} with all the non-FMR methods. We choose the data set used in Section~\ref{subsubsec:sim_diff_k} with the true $k=3$. The Poisson targets are removed since many other methods are not able to handle them. {The tuned $\hat{k}=3$.}

In Table~\ref{tab:comp_base}, our~\methodname\, methods \textbf{Mix} and \textbf{Mix GS} not only outperform their special cases, i.e., \textbf{LASSO} and \textbf{Group LASSO}, respectively, but also outperform other multi-task learning methods, including those handling certain kinds of heterogeneities.

\begin{table}[htbp]\small
  \centering
  \renewcommand{\multirowsetup}{\centering}
  \begin{tabular}{lcc}
    \hline
    &nMSE &aAUC\\
    \hline
   LASSO & 0.6892 & 0.7384\\
    \hline
    Mix & \textbf{0.1181} & 0.9525\\
    \hline
    Group LASSO & 0.6850 & 0.7482\\
    \hline
    Mix GS & 0.1212 & \textbf{0.9559}\\
    \hline
    Sep L2 & 0.6912 & 0.7355\\
    \hline
    GO-MTL & 0.8055 & 0.7259\\
    \hline
    CMTL & 0.6916 & 0.7344\\
    \hline
    MSMTFL & 0.6890 & 0.7381\\
    \hline
    TraceReg & 0.6913 & 0.7362\\
    \hline
    SparseTrace & 0.6904 & 0.7374\\
    \hline
    RMTL & 0.6913 & 0.7362\\
    \hline
    Dirty & 0.6850 & 0.7482\\
    \hline
    rMTFL & 0.6850 & 0.7482\\
    \hline
  \end{tabular}
  \caption{Comparison with non-FMR methods}\label{tab:comp_base}
\end{table}

\subsubsection{Detection of Anomaly Tasks}\label{subsubsec:sim_anomaly_task}

We set $n=2000$. The number of tasks (responses) $m=30$. The information about the true $k$s and numbers of different types of tasks is in Table~\ref{tab:data4}. Other settings are the same as in Section~\ref{subsubsec:sim_diff_k}. In Table~\ref{tab:data4}, it can be seen that the true $k$ of the majority of tasks (the first 20 tasks) is 4.  The first 20 tasks are referred to as concordant tasks, while the other 10 tasks are referred to as anomaly tasks.

\begin{table}[htbp]\small
  \centering
 \begin{tabular}{ccccc}
   \hline
Group &   True k & \#Gaussian & \#Bernoulli & \#Poisson  \\
  \hline
 1&  4 & 5 & 10 & 5 \\
 2&  1 & 1 & 1 & 1 \\
 3&  6 & 1 & 0 & 0  \\
 4&  2 & 1 & 1 & 0 \\
 5&  3 & 0 & 1 & 1 \\
 6&  5 & 1 & 1 & 0 \\
   \hline
 \end{tabular}
  \caption{True $k$s and numbers of different types of tasks.}\label{tab:data4}
\end{table}

%
%
%
%

We compute the concordant scores using~\eqref{eq:outlier_score} for the tasks. In Fig~\ref{fig:outlier_score}, the concordant scores separate concordant tasks and anomaly tasks quite well. { Scores of Poisson tasks are similar to scores of Bernoulli tasks, because they all provide less accurate information than Gaussian tasks do.}

\begin{figure}[htbp]\small
\centering
\subfigure[Mix]{\includegraphics[width=2.3in]{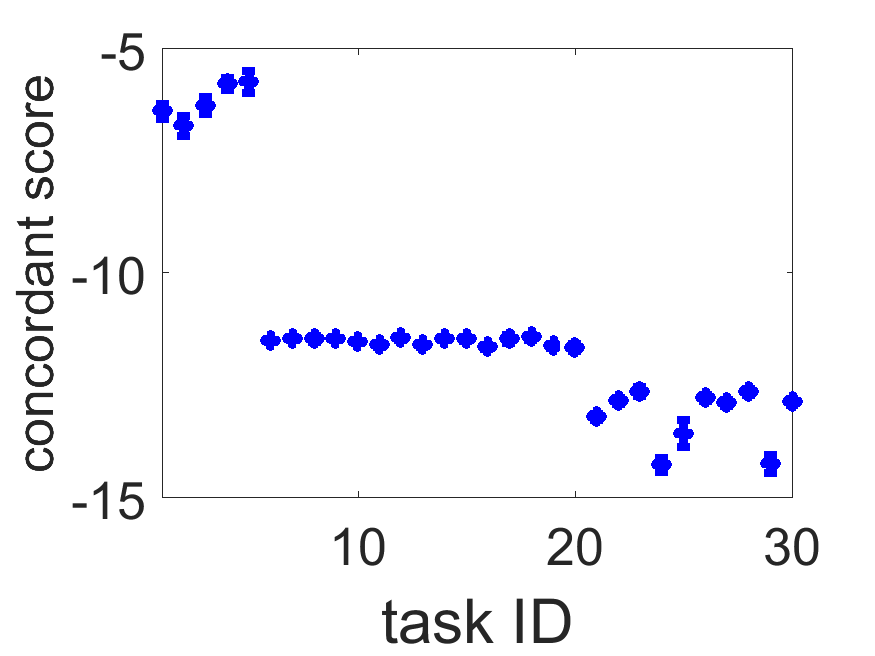}} 
\subfigure[Mix GS]{\includegraphics[width=2.3in]{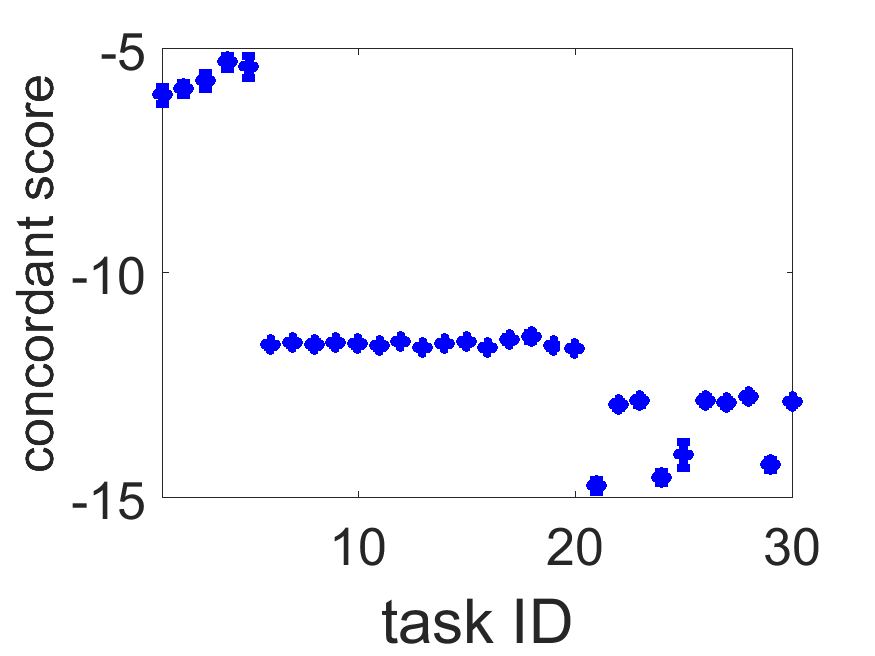}} 
\caption{Concordant scores of tasks{, which are associated with Table~\ref{tab:data4}}. (a) estimated by \textbf{Mix}; (b) estimated by \textbf{Mix GS}. The first 20 tasks are concordant tasks, the last 10 tasks are anomaly tasks. {The first 5 tasks are Gaussian tasks, the subsequent 10 tasks are Bernoulli tasks and then the subsequent 5 tasks are Poisson tasks.}}\label{fig:outlier_score}
\end{figure}

{

\subsubsection{Handling Clustered Relationship among tasks}\label{subsubsec:simu_task_cluster}

We construct 4 groups of tasks. The total number of tasks (responses) $m=60$. The information about the true $k$s and numbers of different types of tasks is in Table~\ref{tab:data6}. Other settings are the same as in Section~\ref{subsubsec:sim_anomaly_task}.
We first apply \textbf{Single} for each task, setting $\hat{k} = 20$. Then we apply the strategy in Section~\ref{subsec:clustered_tasks} to construct a similarity matrix by NMI defined in~\eqref{eq:NMI}. Kernel PCA~\citep{scholkopf1998nonlinear,van2009dimensionality} is then applied using the similarity matrix as the kernel matrix. The similarity matrix and the result of Kernel PCA are shown in Fig~\ref{fig:simu_task_cluster}.

\begin{table}[htbp]\small
  \centering
 \begin{tabular}{ccccc}
   \hline
  Group & True k & \#Gaussian & \#Bernoulli & \#Poisson  \\
  \hline
    1 &1 & 3 & 10 & 2 \\
    2 &2 & 3 & 10 & 2 \\
    3 &3 & 3 & 10 & 2 \\
    4 &4 & 3 & 10 & 2 \\
  \hline
 \end{tabular}
  \caption{True $k$s and numbers of different types of tasks. Tasks are clustered into 4 groups.}\label{tab:data6}
\end{table}

\begin{figure}[htbp]\small
\centering
\subfigure[Similarity Matrix]{\includegraphics[width=2.3in]{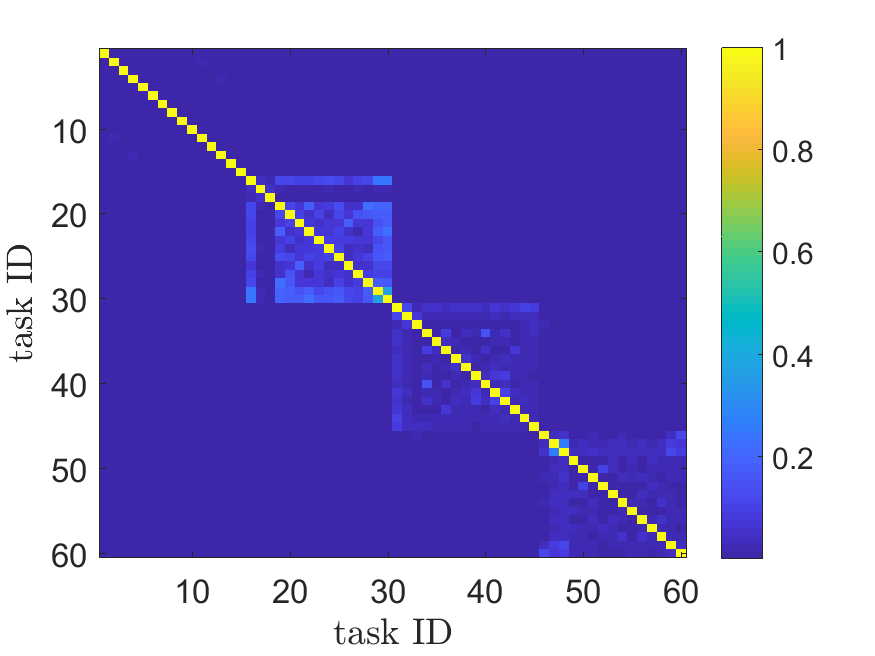}} 
\subfigure[Dimension Reduction by Kernel PCA]{\includegraphics[width=2.3in]{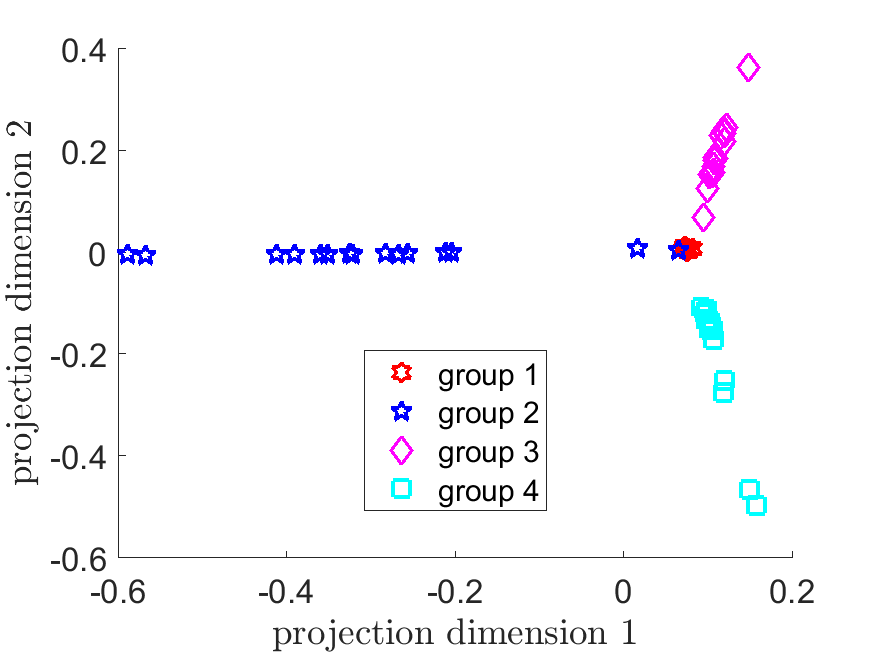}} 
\caption{(a) Similarity matrix among tasks described in Table~\ref{tab:data6}; (b) Relationship among tasks shown by Kernel PCA.}\label{fig:simu_task_cluster}
\end{figure}

In Fig~\ref{fig:simu_task_cluster} (a), Group 2,3 and 4 can be recognized as three groups. In Group 1, each task shows no similarity with other tasks, because with the true $k=1$, the data samples can be randomly partitioned into $\hat{k} = 20$ sub-populations, which results in low NMI scores. In Fig~\ref{fig:simu_task_cluster} (b), basically, 4 groups of tasks are clustered into 4 different regions.

}

\subsubsection{Handling Outlier Samples}\label{subsubsec:simu_outlier_sample}
We choose the data set used in Section~\ref{subsubsec:sim_diff_k} with the true $k = 2$, then randomly shuffle the data pairs $(\y_i,\x_i)$, for $i=1,\ldots,n$, and contaminate the true targets by the following procedure.
For outlier ratio $p_{\mbox{outlier}} = 0\% , 1\% , 2\% , 5\% , 8\% , 10\%$, (1) for Gaussian targets, set all the targets of $p_{\mbox{outlier}}$ of data samples to be $100$; (2) for Bernoulli targets, set all the targets of $p_{\mbox{outlier}}$ of data samples to be $1$.
Such contamination is only performed on training and validation data, leaving testing data clean.

Then we evaluate two groups of methods. For the group of non-robust methods, we choose our~\methodname\, methods \textbf{Mix} and \textbf{Mix GS}. For the group of robust methods, we firstly run the robust version of the non-robust methods by adding $\zzeta$ in the natural parameter models as~\eqref{eq:outlier_g} and adding~\eqref{eq:pen_outlier_group} as the additional penalty, then we clean the data by removing $p_{\mbox{outlier}}$ of data samples associated with the largest value of $\sqrt{\sum_{jr}\zeta_{ijr}^2}$ ($i\in\{1,\ldots,n\}$). Finally, we run their non-robust version of methods on the ``cleaned'' data, respectively. We follow~\citet{gong2012robust} to adopt such two-stage strategy.

The imputation performances are reported in Table~\ref{tab:comp_sample_outlier_simu}, from where it can be seen that, 1) when $p_{\mbox{outlier}} = 0\%$, robust methods are over-parameterized and may underperform non-robust methods; 2) when $p_{\mbox{outlier}} > 0\%$, robust methods significantly outperform non-robust methods.

\setlength{\tabcolsep}{2pt}
\begin{table}[htbp]\small
  \centering
  \renewcommand{\multirowsetup}{\centering}
  \begin{tabular}{ccccccccc}
     \hline
      &   &  & 0\%  & 1\% & 2\% & 5\%  & 8\% & 10\%  \\
    \hline
    \multirow{4}{*}{\myminitab[c]{nMSE \\ for Gaussian}} & \multirow{2}{*}{\textbf{Mix}}   & non-robust  &
    0.0625 & 0.6754 & 0.6894 & 1.0122 & 1.3250 & 1.4953\\
    \cline{3-9}
      &   & robust  &
    0.0620 & 0.0627 & \textbf{0.0626} & 0.0737 & 0.0632 & 0.0635\\
    \cline{2-9}
      &\multirow{2}{*}{\textbf{Mix GS}}   & non-robust  &
    0.0658 & 0.6434 & 0.6741 & 0.7505 & 1.0736 & 1.2939\\
    \cline{3-9}
      &   & robust  &
    \textbf{0.0599} & \textbf{0.0611} & 0.0673 & \textbf{0.0694} & \textbf{0.0602} & \textbf{0.0607}\\
     \hline
    \multirow{4}{*}{aAUC} & \multirow{2}{*}{\textbf{Mix}}   & non-robust  &
    0.9571 & 0.7954 & 0.7961 & 0.7982 & 0.7986 & 0.7981\\
    \cline{3-9}
      &   & robust  &
    0.9570 & 0.9571 & \textbf{0.9574} & \textbf{0.9519} & 0.9568 & 0.9567\\
    \cline{2-9}
      &\multirow{2}{*}{\textbf{Mix GS}}   & non-robust  &
    0.9509 & 0.7979 & 0.7984 & 0.7982 & 0.7979 & 0.7952\\
    \cline{3-9}
      &   & robust  &
    \textbf{0.9581} & \textbf{0.9577} & 0.9519 & 0.9482 & \textbf{0.9578} & \textbf{0.9574}\\
     \hline
    \multirow{4}{*}{\myminitab[c]{nMSE \\ for Poisson}} & \multirow{2}{*}{\textbf{Mix}}   & non-robust  &
    0.2089 & 0.7368 & 0.6905 & 0.6528 & 0.6642 & 0.6736\\
    \cline{3-9}
      &   & robust  &
    \textbf{0.2086} & \textbf{0.2105} & \textbf{0.2099} & 0.2345 & 0.2222 & \textbf{0.2230}\\
    \cline{2-9}
      &\multirow{2}{*}{\textbf{Mix GS}}   & non-robust  &
    0.2136 & 0.7416 & 0.7795 & 0.6587 & 0.6688 & 0.8665\\
    \cline{3-9}
      &   & robust  &
    0.2087 & 0.2109 & 0.2169 & \textbf{0.2202} & \textbf{0.2212} & 0.2236\\
     \hline
  \end{tabular}
 \caption{Comparison between methods handling outlier samples on synthetic data.}\label{tab:comp_sample_outlier_simu}
\end{table}

{
\subsubsection{Feature-Based Prediction by MOE}\label{subsubsec:simu_MOE}

We set the true $k=3$. The true $\aalpha\in \mathbb{R}^{d\times k}$, whose first four rows are non-zero. The non-zero entries of $\aalpha$ are drawn from $\mathcal{N}(0,1)$. Number of data samples $n=1000$. For all $i=1,\ldots,n,r=1,\ldots,k$, the $i$th data sample coming from the $r$th sub-population obeys a multinomial distribution with the probability defined in~\eqref{eq:gate_prob}. Other settings are the same as in Section~\ref{subsubsec:sim_non_FMR}.

We compare our~\methodname\, methods \textbf{Mix MOE} and \textbf{Mix MOE GS}. The prediction performances are shown in Table~\ref{tab:comp_MOE}, which are consistent with the results in Section~\ref{subsubsec:sim_non_FMR}.

We further show in Table~\ref{tab:comp_nmi_pred} the concordance between $p(\delta_{i,r} = 1\mid \x_i,\hat{\alpha}_r )$ and $p(\delta_{i,r} = 1\mid \x_i, \alpha_{0,r} )$, where $\aalpha_0$ denotes the true $\aalpha$, for all $i=1,\ldots,n,r=1,\ldots,k$, for both training and testing data. In~\eqref{eq:update_alpha_MOE}, $\aalpha$ is optimized by partially minimizing the discrepancy between $p(\delta_{i,r} = 1\mid \x_i,\hat{\alpha}_r )$ and $ \hat{\rho}^{(t+1)}_{i,r}$ for $t=0,\ldots,T-1$. As such we also show the concordance between $p(\delta_{i,r} = 1\mid \x_i,\hat{\alpha}_r )$ and $ \hat{\rho}^{(T)}_{i,r} = p(\delta_{i,r} = 1 \mid y_{ij'},j' \in \Omega_i,\x_i,\hat{\theta}_2)$. The concordances are measured by NMI defined in~\eqref{eq:NMI}. We use NMI instead of KL-divergence, because NMI is normalized to the range of $[0,1]$.

Both $p(\delta_{i,r} = 1\mid \x_i, \alpha_{0,r} )$ and $p(\delta_{i,r} = 1 \mid y_{ij'},j' \in \Omega_i,\x_i,\hat{\theta}_2)$ are approximated accurately on the training data. The approximation accuracies are lower on the testing data because the deficiency of data samples comparing with the dimension.

\begin{table}[htbp]\small
  \centering
  \renewcommand{\multirowsetup}{\centering}
  \begin{tabular}{lcc}
    \hline
    &nMSE &aAUC\\
    \hline
LASSO & 0.6390 & 0.7834\\
\hline
Mix MOE & 0.0656 & 0.9466\\
\hline
Group LASSO & 0.6348 & 0.7878\\
\hline
Mix MOE GS & \textbf{0.0579} & \textbf{0.9502}\\
\hline
Sep L2 & 0.6481 & 0.7794\\
\hline
GO-MTL & 0.6946 & 0.7778\\
\hline
CMTL & 0.6496 & 0.7796\\
\hline
MSMTFL & 0.6397 & 0.7831\\
\hline
TraceReg & 0.6509 & 0.7790\\
\hline
SparseTrace & 0.6473 & 0.7805\\
\hline
RMTL & 0.6511 & 0.7797\\
\hline
Dirty & 0.6348 & 0.7878\\
\hline
rMTFL & 0.6483 & 0.7787\\
\hline
  \end{tabular}
  \caption{Prediction performances based on only features.}\label{tab:comp_MOE}
\end{table}

\begin{table}[htbp]\small
  \centering
  \renewcommand{\multirowsetup}{\centering}
  \begin{tabular}{lcccc}
    \hline
      & \multicolumn{2}{c}{Training} & \multicolumn{2}{c}{Testing}\\
      \hline
      &$C(\hat{\aalpha}\parallel \aalpha_0 )$& $C(\hat{\aalpha}\parallel \hat{\theta}_2 )$ &$C(\hat{\aalpha}\parallel \aalpha_0 )$& $C(\hat{\aalpha}\parallel \hat{\theta}_2 )$\\
    \hline
 Mix MOE & 0.9863 & 0.9918  &0.8440  & 0.8457\\
\hline
Mix MOE GS& \textbf{0.9962} &\textbf{0.9933} &  \textbf{0.8455}& \textbf{0.8516}\\
\hline
  \end{tabular}
  \caption{Approximation performances based on only features. $C(\hat{\aalpha}\parallel \aalpha_0 )$ denotes the concordance between $p(\delta_{i,r} = 1\mid \x_i,\hat{\alpha}_{r} )$ and $p(\delta_{i,r} = 1\mid \x_i, \alpha_{0,r} )$, where $\aalpha_0$ denotes the true $\aalpha$. $C(\hat{\aalpha}\parallel \hat{\theta}_2 )$ denotes the concordance between $p(\delta_{i,r} = 1\mid \x_i,\hat{\alpha}_r )$ and $p(\delta_{i,r} = 1 \mid y_{ij'},j' \in \Omega_i,\x_i,\hat{\theta}_2)$. The concordances are measured by NMI defined in~\eqref{eq:NMI}.}\label{tab:comp_nmi_pred}
\end{table}

}

{
\subsubsection{Scalability}\label{subsubsec:simu_scalability}

We discuss the scalability of our method for increasing number of features and tasks. The running time is evaluated.
We choose the data set used in Section~\ref{subsubsec:sim_diff_k} with the true $k=4$. The sparsity $s$ is fixed to be 4. We set the number of features $d \in \{32,64,128,256,512\}$ and the number of tasks $m\in\{15,30,60,120,240\}$. The ratios between the numbers of Gaussian, Bernoulli and Poisson tasks are the same as in Section~\ref{subsubsec:sim_diff_k}. We randomly generate 100 data sets for each pair of $(d,m)$.
We report the results of the method \textbf{Mix GS} only, as the results of \textbf{Mix} are similar. In each case, $\hat{k}$ is tuned and is equal to the true $k=4$.
The estimated parameters in different cases may have different numbers of relevant features. As such, in order to provide a fair comparison, we report the running time per feature, i.e., running time divided by the number of non-zero features of the estimated parameters in each case.

In Fig~\ref{fig:simu_scalability}, both dimension $d$ and the number of tasks $m$ have no significant influence on running time per feature, especially when $d$ and $m$ is large, which is consistent with our time-complexity analysis in Section~\ref{subsec:optimization}.

\begin{figure}[htbp]\small
\centering
\subfigure[Training Time]{\includegraphics[width=2.3in]{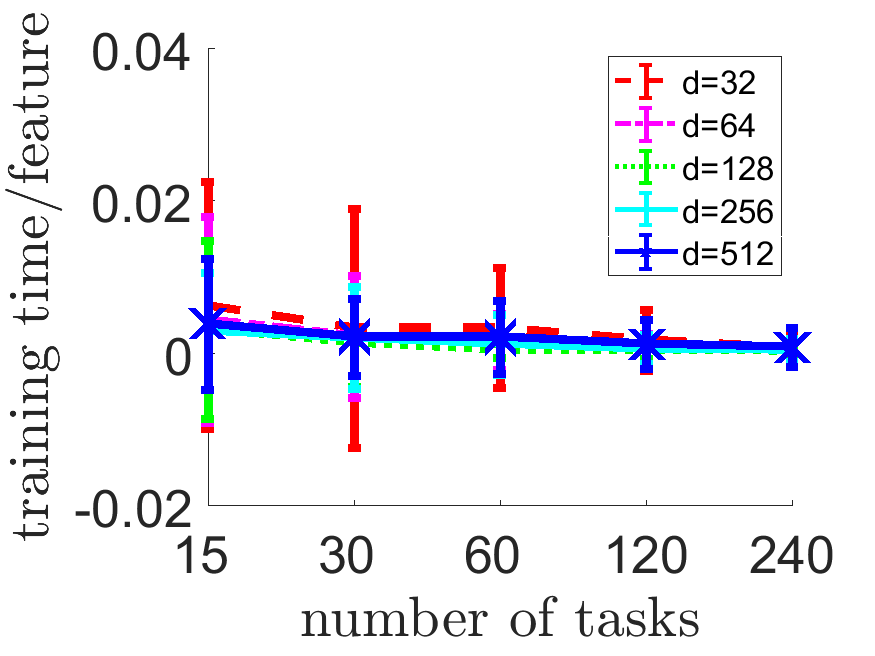}} 
\subfigure[Testing Time]{\includegraphics[width=2.3in]{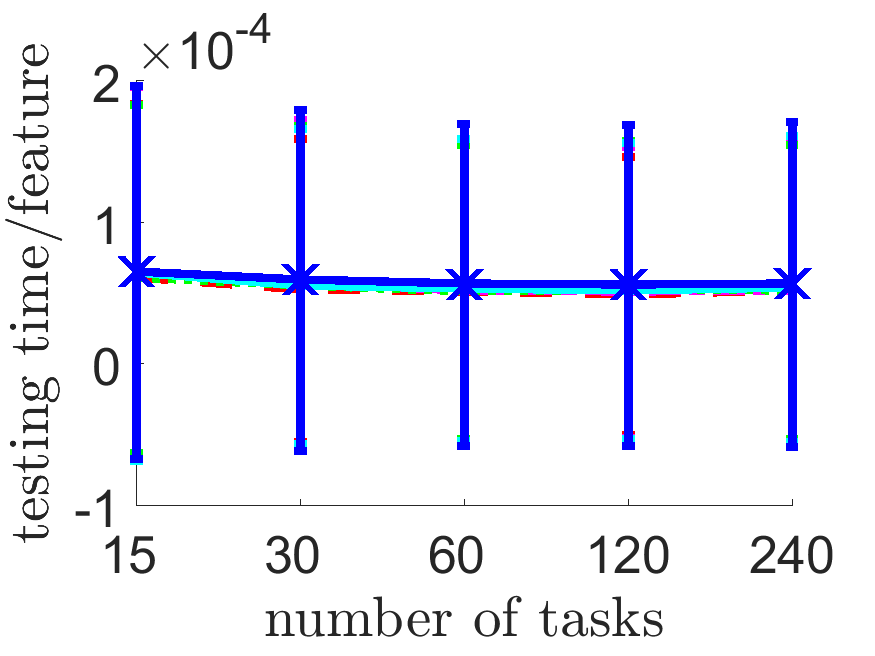}} 
\caption{Running time per feature when dimension and the number of tasks grow. (a) reports the running time per feature for training the model; (b) reports the running time per feature for testing.}\label{fig:simu_scalability}
\end{figure}

}

\subsection{Application}

On the real-world data sets, we first demonstrate the existence of the heterogeneity of conditional relationship, then report the superiority of our~\methodname\, method over other methods considered in Section~\ref{subsec:method_comp}. We further interpret the advantage of our method by presenting the selected features. {Effectiveness of anomaly-task detection and task clustering strategy is also validated.}


\subsubsection{Data Description}

Both real-world data sets introduced in the following are longitudinal surveys for elder patients, which includes a set of questions. Some of the question answers are treated as input features and some of the questions related to indices of geriatric assessments are treated as targets.

\noindent\underline{\emph{\textbf{LSOA II Data}}}. This data is from the Second Longitudinal Study of Aging (LSOA II) \footnote{https://www.cdc.gov/nchs/lsoa/lsoa2.htm.}. LSOA II is a collaborative study by the National Center for Health Statistics (NCHS) and the National Institute of Aging conducted from 1994-2000. A national representative sample of $9447$ subjects $70$ years of age and over were selected and interviewed. Three separated interviews were conducted during the periods of 1994-1996, 1997-1998, and 1999-2000, respectively. The interviews are referred to as WAVE 1, WAVE 2, and WAVE 3 interviews, respectively. We use data WAVE 2 and WAVE 3, which includes a total of $4299$ sample subjects and 44 targets, and $188$ features are extracted from WAVE 2 interview.

Among the targets, specifically, three self-rated health measures, including overall health status, memory status and depression status, can be regarded as continuous outcomes; there are 41 binary outcomes, which fall into several categories: fundamental daily activity, extended daily activity, social involvement, medical condition, on cognitive ability, and sensation condition.
The features include records of demographics, family structure, daily personal care, medical history, social activity, health opinion, behavior, nutrition, health insurance and income and assets, the majority of which are binary measurements.
Both targets and features have missing values due to non-response and questionnaire filtering. The average missing value rates in targets and features are 13.7\% and 20.2\%, respectively. For the missing values in features, we adopt the following procedure for pre-processing. For continuous features, the missing values are imputed with sample mean. For binary features, a better approach is to treat missing as a third category as it may also carry important information; as such, two dummy variables are created from each binary feature with missing values (the third one is not necessary.) This results in totally $d=293$ features.
We randomly select 30\% of the samples for training, 30\% for validation and the rest for testing. 

\noindent\underline{\emph{\textbf{easySHARE Data}}}. This data is a simplified data set from the Survey of Heath, Aging, and Retirement
in Europe (SHARE)\footnote{http://www.share-project.org/data-access-documentation.html.}. SHARE includes multidisciplinary and cross-national panel databases
on health, socio-economic status, and social and family networks of more than 85,000
individuals from $20$ European countries aged $50$ or over. Four waves of interviews were
conducted during 2004 - 2011, and are referred to as WAVE 1 to WAVE 4 interviews. We use WAVE 1 and WAVE 2, which includes 20,449 sample persons and 15 targets (among which 11 are binary, and 4 are continuous), and totally 75 features are constructed from WAVE 1 interview.

The targets are from four interview modules: social support, mental health, functional limitation indices and cognitive function indices.
The features cover a wide range of assessments, including demographics, household composition, social support and network, physical health, mental health, behavior risk, healthcare, occupation and income. Detailed description features are not listed in this paper.
Both targets and features have missing values due to non-response and questionnaire filtering. The average missing value rates in targets and features are 6.9\% and 5.1\%, respectively. The same pre-processing procedure as that for LSOA II Data has been adopted and results in totally $d=118$ features.
We randomly select 10\% of the samples for training, 10\% for validation and the rest for testing.
%

\subsubsection{Comparison with FMR Method}\label{subsubsec:real_comp_FMR}
In this experiment, we compare our proposed~\methodname\, methods \textbf{Mix} and \textbf{Mix GS} which handle mixed type of outcomes with \textbf{Sep} which learns different types of tasks separately. \textbf{Single} is abandoned because it learns each task independently and is not able to use targets of other tasks to help increasing imputation performance.

Results are reported in Fig~\ref{fig:diffk_simu_real}, where 1) for both the real data sets, basically, the best pre-fixed $\hat{k}>1$, except for Bernoulli tasks of easySHARE data, suggesting that the heterogeneity of conditional relationship exists in LSOA II data and the Gaussian tasks of easySHARE data; 2) FMR models benefit Gaussian targets more than Bernoulli targets; 3) \textbf{Mix} and \textbf{Mix GS} outperform \textbf{Sep} in Gaussian tasks. However, their performances are comparable with \textbf{Sep} in Bernoulli tasks, which may be because that the number of Bernoulli tasks are much more than that of Gaussian tasks such that the benefit from Gaussian tasks is limited.

\begin{figure}[htbp]\small
\centering
\subfigure[nMSE of Gaussian targets]{\includegraphics[width=2.3in]{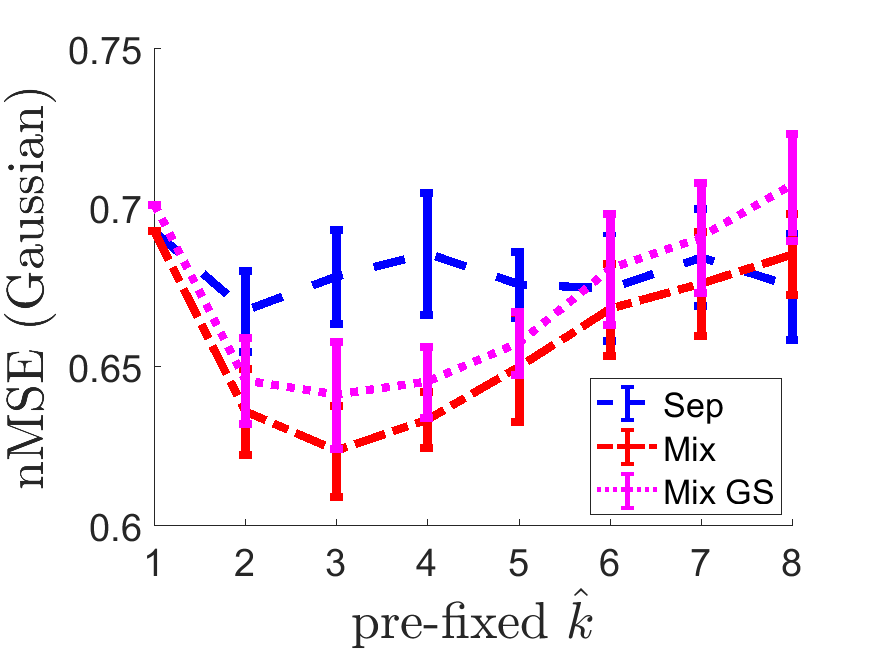}} 
\subfigure[aAUC of Bernoulli targets]{\includegraphics[width=2.3in]{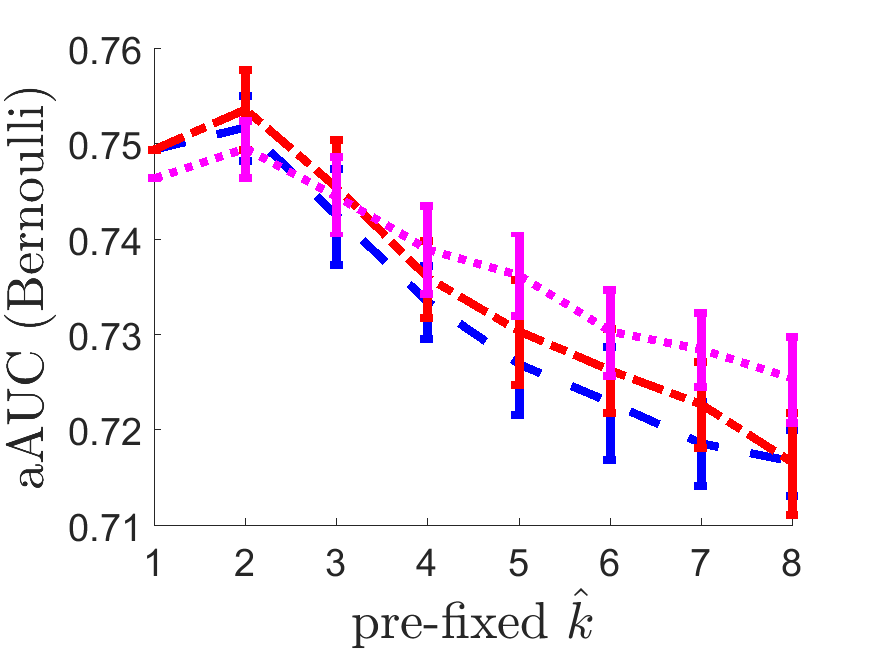}} 
\subfigure[nMSE of Gaussian targets]{\includegraphics[width=2.3in]{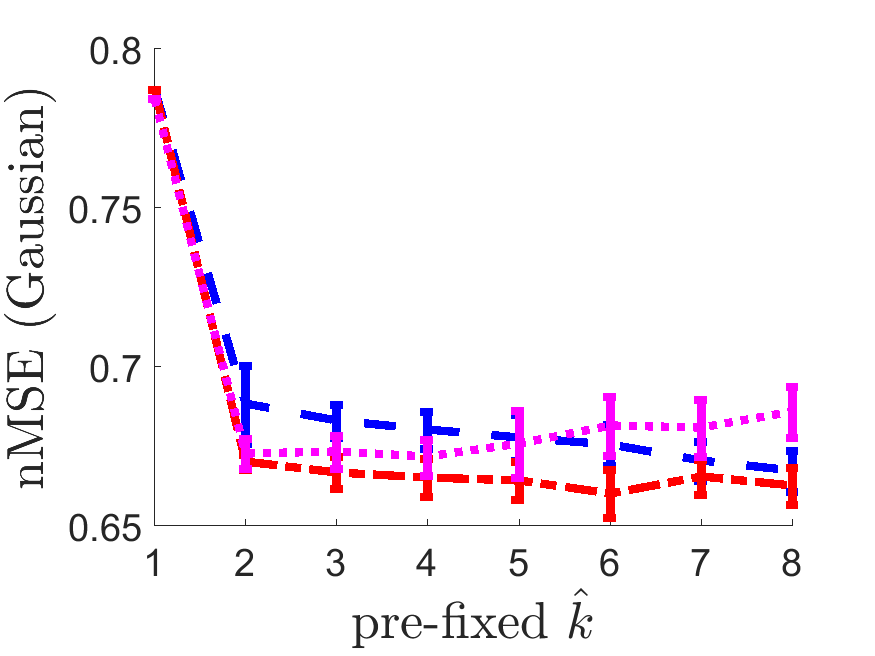}} 
\subfigure[aAUC of Bernoulli targets]{\includegraphics[width=2.3in]{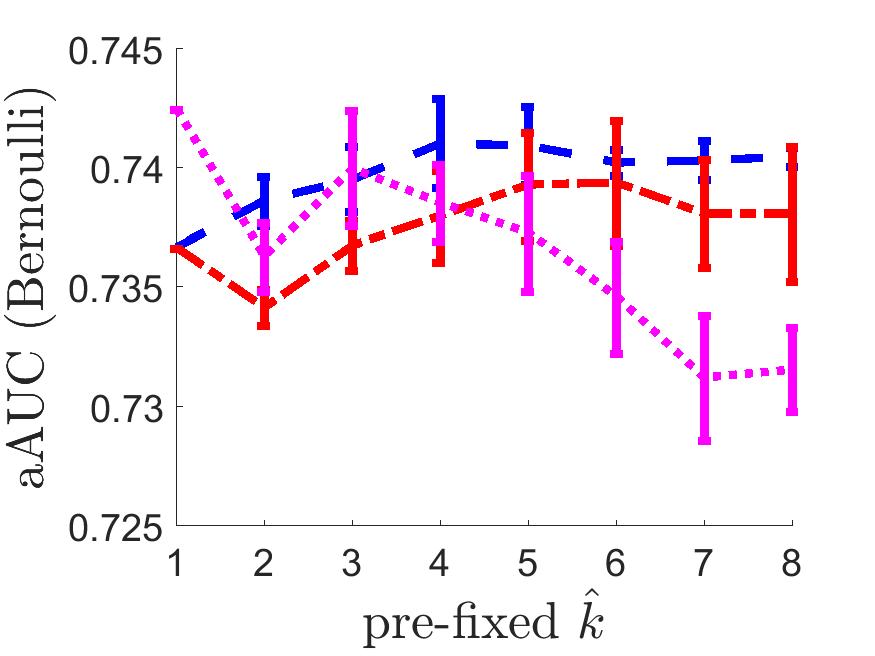}} 
\caption{Comparison with \textbf{Sep} on real data sets. (a) and (b) are results on LSOA II data, (c) and (d) are results on easySHARE data.}\label{fig:diffk_simu_real}
\end{figure}

\subsubsection{Comparison with Non-FMR Methods}
In this experiment we test imputation performance, comparing our~\methodname\, methods \textbf{Mix} and \textbf{Mix GS} with all the non-FMR methods.

Results are reported in Table~\ref{tab:comp_base_real}, where 1) our~\methodname\, methods \textbf{Mix} and \textbf{Mix GS} not only outperform their special cases, \textbf{LASSO} and \textbf{Group LASSO}, respectively, but also outperform other methods, including those handling certain kinds of heterogeneities, except for aAUC on easySHARE, the reason of which has been discussed in Section~\ref{subsubsec:real_comp_FMR}; 2) \textbf{Mix GS} increases nMSE by 9.76\% and 14.37\% on LSOA II data and easySHARE data, respectively, comparing with its non-FMR version \textbf{Group LASSO}. The similar improvements by \textbf{Mix} are witnessed as well.

\begin{table}[htbp]\small
  \centering
  \renewcommand{\multirowsetup}{\centering}
  \begin{tabular}{ccccc}
     \hline
   &\multicolumn{2}{c}{LSOA II}& \multicolumn{2}{c}{easySHARE}\\
   \hline
   &nMSE   &aAUC &nMSE   &aAUC\\
   \hline
   LASSO & 0.7051 & 0.7474 & 0.7869 & 0.7386\\
    \hline
    Mix & 0.6408 & \textbf{0.7525} &0.6601 & 0.7419 \\
    \hline
    Group LASSO & 0.6975 & 0.7413 & 0.7897 & 0.7413 \\
    \hline
    Mix GS & \textbf{0.6294} & 0.7481 & \textbf{0.6548} & 0.7402\\
    \hline
    Sep L2 & 0.7176 & 0.7392 & 0.7796 & 0.7464\\
    \hline
    GO-MTL & 0.8516 & 0.6972 & 0.8231 & 0.7288\\
    \hline
    CMTL & 0.8186 & 0.7089 & 0.7958 & 0.7364\\
    \hline
    MSMTFL & 0.7028 & 0.7473 & 0.7803 & 0.7411 \\
    \hline
    TraceReg & 0.7150 & 0.7408 & 0.7809 & \textbf{0.7496} \\
    \hline
    SparseTrace & 0.6972 & 0.7475 & 0.7791 & 0.7475\\
    \hline
    RMTL & 0.7145 & 0.7418 & 0.7808 & 0.7496\\
    \hline
    Dirty & 0.7032 & 0.7480 & 0.7781 & 0.7486\\
    \hline
    rMTFL & 0.6953 & 0.7418 &0.7781 & 0.7486\\
    \hline
  \end{tabular}
  \caption{Comparison with non-FMR methods on real data sets}\label{tab:comp_base_real}
\end{table}
%

\subsubsection{Feature Selection}\label{subsubsec:feature_selection}
We consider demonstrating the advantage of our~\methodname\, method on feature selection. We compare our~\methodname\, method \textbf{Mix GS} with its non-FMR version \textbf{Group LASSO}. Both methods select shared features across tasks. We collect the unique features that only selected for each sub-population.

For LSOA II data set, {the tuned $\hat{k}=2$}. \textbf{Mix GS} selects $47/294$ features (summing up selected features of both sub-populations), while \textbf{Group LASSO} selects $48/294$ features. Descriptions of unique features of both sub-populations are listed in Table~\ref{tab:sp_feature_disc_data_1}. Sub-population 1 seems considering worse condition of patients.

\setlength{\tabcolsep}{2pt}
\begin{table}[htbp]\small
  \centering
  \begin{tabular}{ll}
    \hline
    Sub-population 1 ($\pi_1 =  70.09\%$) & Sub-population 2 ($\pi_2 =  29.91\%$)\\
   \hline
  able or prevented to leave house & times seen doctor in past 3 months\\
  have problems with balance & easier or harder to walk 1/4 mile\\
  total number of living children  & \textbf{widowed}\\
  easier/harder than before: in/out of bed & \textbf{follow regular physical routine}\\
  \textbf{\#(ADL activities) SP is unable to perform} & \textbf{present social activities}\\
  easier or harder to walk 10 steps & \textbf{ever had a stress test}\\
  do you take aspirin & \textbf{do you take vitamins}\\
  \textbf{often troubled with pain} & \textbf{necessary to use steps or stairs}\\
  visit homebound friend for others & \textbf{had flu shot}\\
  ever had a hysterectomy & \textbf{ever had cataract surgery}\\
    & \textbf{physical activity more/less/same}\\
    \hline
  \end{tabular}
  \caption{Descriptions of unique features of each sub-population of LSOA II data. Features are sorted in descending order by the $\|\bbeta_r^l\|_2$ where $\bbeta_r^l$ is the $l$th row of $\bbeta_r$ ($l = 1,\ldots,d$). The bold denote the features that are not selected by \textbf{Group LASSO}. ``\#($\cdot$)'' denotes the number of the enclosed events.  ``ADL'' denotes Activity of Daily Livings. ``SP'' denotes Standardized Patients. }\label{tab:sp_feature_disc_data_1}
\end{table}

 For easySHARE data set, {the tuned $\hat{k}=5$}. \textbf{Mix GS} selects $58/118$ features, while \textbf{Group LASSO} selects $57/118$ features. Descriptions of unique features of two sub-populations are listed in Table~\ref{tab:sp_feature_disc_data_2}. Sub-population 1 seems considering more about personality and experience, while sub-population 2 seems considering more about politics and education.

 For both real data sets, our~\methodname\, method \textbf{Mix GS} recalls more useful features than \textbf{Group LASSO} does.

\setlength{\tabcolsep}{2pt}
\begin{table}[htbp]\small
  \centering
  \begin{tabular}{ll}
    \hline
    Sub-population 1 ($\pi_1 =  21.31\%$) & Sub-population 2 ($\pi_2 =  26.36\%$)\\
   \hline
  \textbf{fatigue} & \textbf{taken part in a political organization}\\
  guilt & \textbf{attended an educational or training course}\\
  enjoyment  & \textbf{taken part in religious organization}\\
  suicidality & \textbf{none of social activities}\\
  tearfullness & \textbf{cared for a sick or disabled adult}\\
  \textbf{interest} & \textbf{done voluntary or charity work}\\
  \textbf{current job situation:sick} & \textbf{education: lower secondary}\\
   & \textbf{education: first tertiary}\\
   & \textbf{education: post secondary}\\
   & \textbf{education: upper secondary}\\
    & \textbf{education: primary}\\
  & \textbf{education: second tertiary}\\
    \hline
  \end{tabular}
  \caption{Descriptions of unique features of two sub-populations of easySHARE data. Features are sorted in descending order by the $\|\bbeta_r^l\|_2$ where $\bbeta_r^l$ is the $l$th row of $\bbeta_r$ ($l = 1,\ldots,d$). The bold denote the features that are not selected by \textbf{Group LASSO}. }\label{tab:sp_feature_disc_data_2}
\end{table}

\subsubsection{Detection of Anomaly Tasks}\label{subsubsec:real_anmaly_task}

We firstly use~\eqref{eq:outlier_score} to compute concordant scores of tasks, which are reported in Fig~\ref{fig:outlier_score_real}. Clear separations are witnessed on both data sets. We select one-third of tasks with highest scores as concordant tasks and another third with lowest scores as anomaly tasks. The descriptions of the concordant and anomaly tasks are listed in Table~\ref{tab:task_disc_data_1} and Table~\ref{tab:task_disc_data_2}, respectively.

\begin{figure}[htbp]\small
\centering
\subfigure[LSOA II data set]{\includegraphics[width=2.3in]{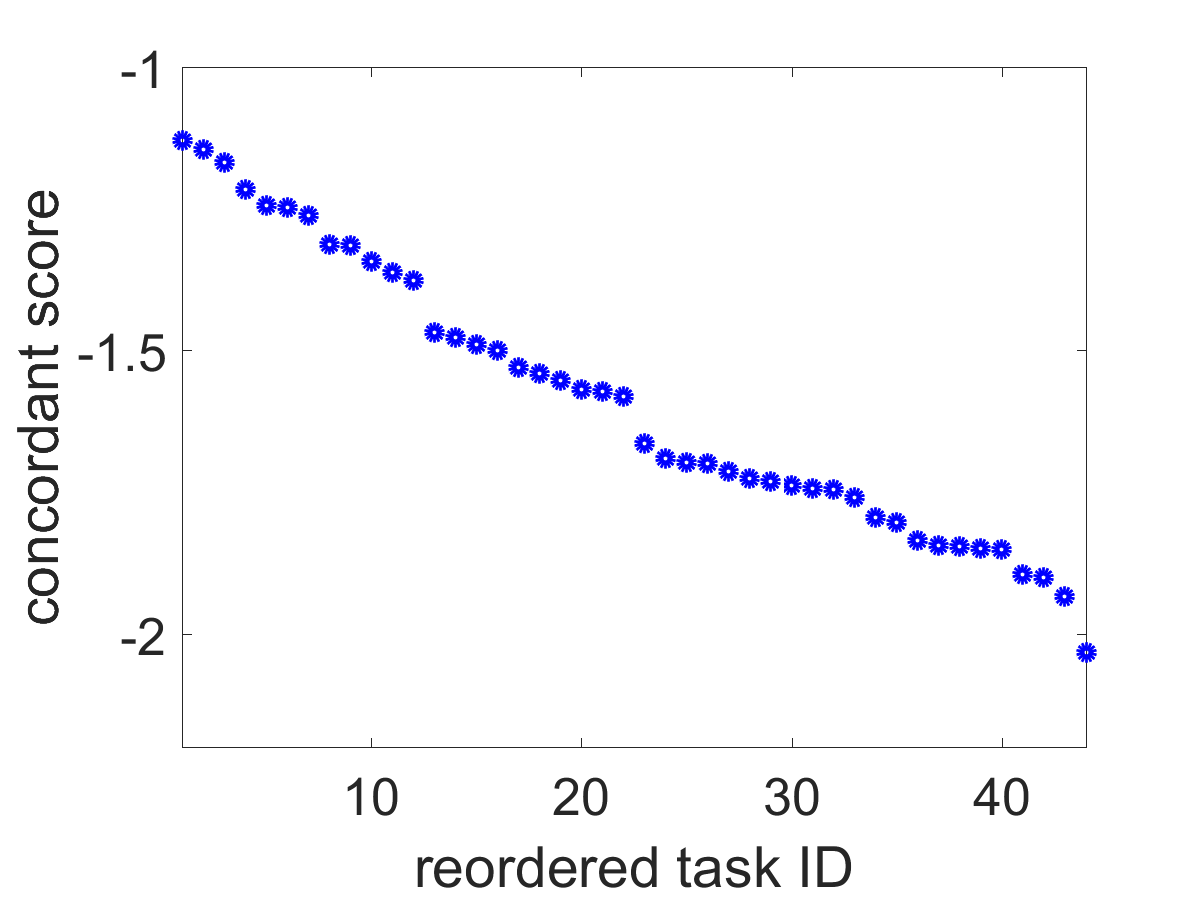}} 
\subfigure[easySHARE data set]{\includegraphics[width=2.3in]{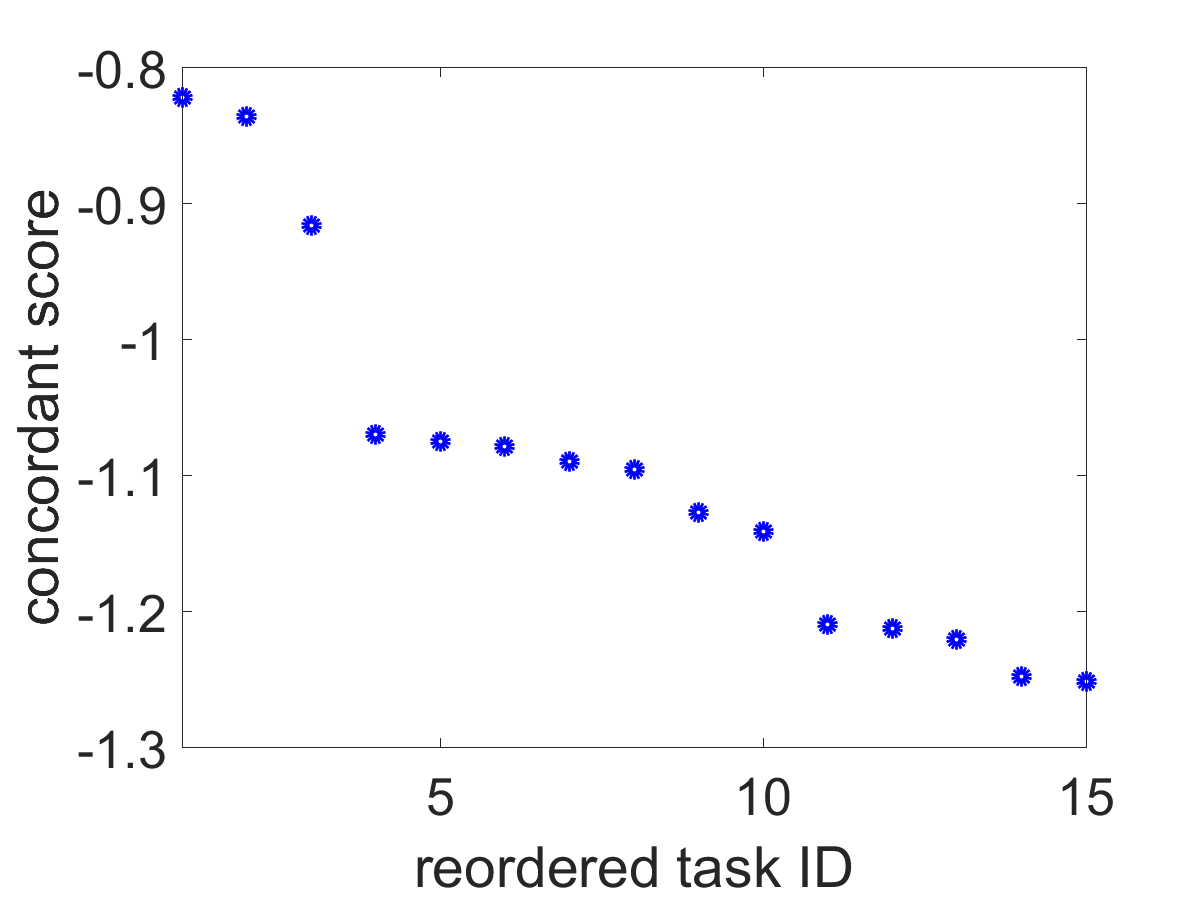}} 
\caption{Concordant scores of tasks, which were estimated by \textbf{Mix GS}. The tasks are reordered according to the scores.}\label{fig:outlier_score_real}
\end{figure}

\setlength{\tabcolsep}{2pt}
\begin{table}[htbp]\small
  \centering
  \begin{tabular}{ll}
    \hline
    Concordant tasks (top 7) & Anomaly tasks (top 8)\\
   \hline
  have difficulty dressing & go to movies, sports, events, etc.\\
  have difficulty doing light housewrk & now have asthma\\
  have difficulty using toilet & now have arthritis\\
  have difficulty managing medication & now have hypertension\\
  have difficulty bathing or showering & injured from fall(s)\\
  have difficulty managing money & memory of year\\
  have difficulty preparing meals & have deafness\\
   & get together with relatives\\
    \hline
  \end{tabular}
  \caption{Descriptions of tasks of LSOA II data}\label{tab:task_disc_data_1}
\end{table}

\setlength{\tabcolsep}{2pt}
\begin{table}[htbp]\small
  \centering
  \begin{tabular}{ll}
    \hline
    Concordant tasks (top 5) & Anomaly tasks (top 5)\\
   \hline
  activities of daily living index  & numeracy score\\
  instrumental activities of daily living indices  & gone to sport social or other kind of club\\
  mobility index & recall of words first trial\\
  appetite  & give help to others outside the household\\
  orientation to date & provided help to family friends or neighbors\\
    \hline
  \end{tabular}
  \caption{Descriptions of tasks of easySHARE data}\label{tab:task_disc_data_2}
\end{table}

The concordant tasks detected by our methods seem truly correlated with each other intuitively. And the information of detected anomaly tasks is diverse and seems different with that of concordant tasks.

{
For each data set, we apply our~\methodname\, method \textbf{Mix} (and \textbf{Mix GS}) to build two models for non-anomaly tasks (the first two-third tasks) and anomaly tasks, respectively. For LSOA II data set, the tuned $\hat{k}=4$ and $1$ for non-anomaly tasks and anomaly tasks, respectively. For easySHARE data set, the tuned $\hat{k}=6$ and $2$ for non-anomaly tasks and anomaly tasks, respectively.

Averaged imputation performances are shown in Table~\ref{tab:anomaly_tasks}. By providing separate models to handle anomaly tasks, the performances improve significantly, where \textbf{Mix GS} outperforms \textbf{Mix}, maybe because the non-anomaly tasks share some relevant features.


\setlength{\tabcolsep}{3.5pt}
\begin{table}[htbp]\small
  \centering
  \renewcommand{\multirowsetup}{\centering}
  \begin{tabular}{lcccc}
    \hline
     & \multicolumn{2}{c}{LSOA II} & \multicolumn{2}{c}{easySHARE}\\
    \hline
     & nMSE & aAUC & nMSE & aAUC  \\
     \hline
    Mix - All tasks & 0.6408 &  0.7525    &0.6601 &  0.7419   \\
    \hline
    Mix - Handle anomalies &0.5979 & 0.7602   &0.6569 & 0.7370    \\
    \hline
    Mix GS - All tasks &  0.6294 & 0.7481   &  0.6548  & 0.7402 \\
    \hline
    Mix GS - Handle anomalies & \textbf{0.5923 } & \textbf{0.7649}   & \textbf{0.6462} &  \textbf{0.7447} \\
    \hline
  \end{tabular}
  \caption{Comparison for imputation performances. ``All tasks'' denotes building one FMR model for all the tasks. ``Handle anomalies'' denotes building two models for non-anomaly tasks and anomaly tasks, respectively.}\label{tab:anomaly_tasks}
\end{table}

}

{

\subsubsection{Handling Clustered Relationship among tasks}\label{subsubsec:real_task_cluster}

We adopt the same strategy as that in Section~\ref{subsubsec:simu_task_cluster} to construct a similarity matrix and perform dimension reduction for each of the real-world data sets.

For LSOA II data set, the similarity matrix and results of 2D reduction are shown in Fig~\ref{fig:real_task_cluster_LSOA}. In Fig~\ref{fig:real_task_cluster_LSOA}(b), tasks are partitioned into groups. We apply k-means algorithm to separate the tasks into 4 groups. Tasks in Group 1 are mainly about current status. The descriptions of tasks of Group 2 are ``how often felt sad or depressed in the past 12 months'' and ``self rated memory''. Tasks in Group 3 and 4 are about having difficulty performing some certain actions. Group 3 is similar to Group 4, which can be reflected by Fig~\ref{fig:real_task_cluster_LSOA}(b).

\begin{figure}[htbp]\small
\centering
\subfigure[Similarity Matrix]{\includegraphics[width=2.3in]{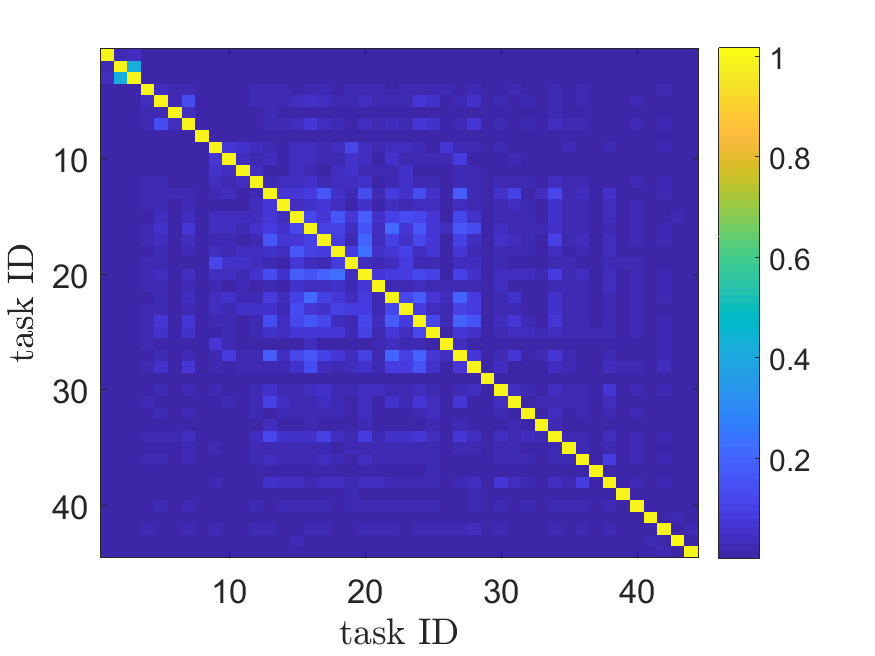}} 
\subfigure[Dimension Reduction by Kernel PCA]{\includegraphics[width=2.3in]{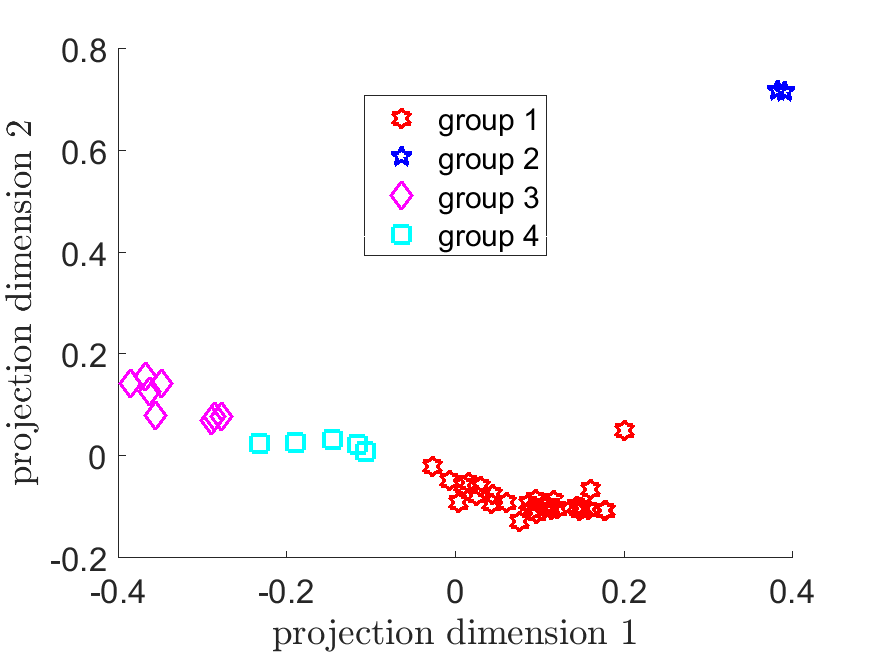}} 
\caption{Clustered Relationship among tasks on LSOA II. (a) Similarity matrix among tasks. First three tasks are Gaussian tasks. Other tasks are Bernoulli tasks. (b) Relationship among tasks shown by Kernel PCA.}\label{fig:real_task_cluster_LSOA}
\end{figure}

For easySHARE data set, the similarity matrix and results of 2D reduction are shown in Fig~\ref{fig:real_task_cluster_easyshare}. In Fig~\ref{fig:real_task_cluster_easyshare}(b), tasks are partitioned into groups as well. We also apply k-means algorithm to separate the tasks into 4 groups. The descriptions of tasks for each group are shown in Table~\ref{tab:clustered_task_easySHARE}, where descriptions of 4 types of interview modules are basically separated into 4 groups, respectively. The only ``misclassified'' task with the description of ``Orientation to date'' seems to be more related to other tasks in Group 2 than the tasks in Group 4.

\begin{figure}[htbp]\small
\centering
\subfigure[Similarity Matrix]{\includegraphics[width=2.3in]{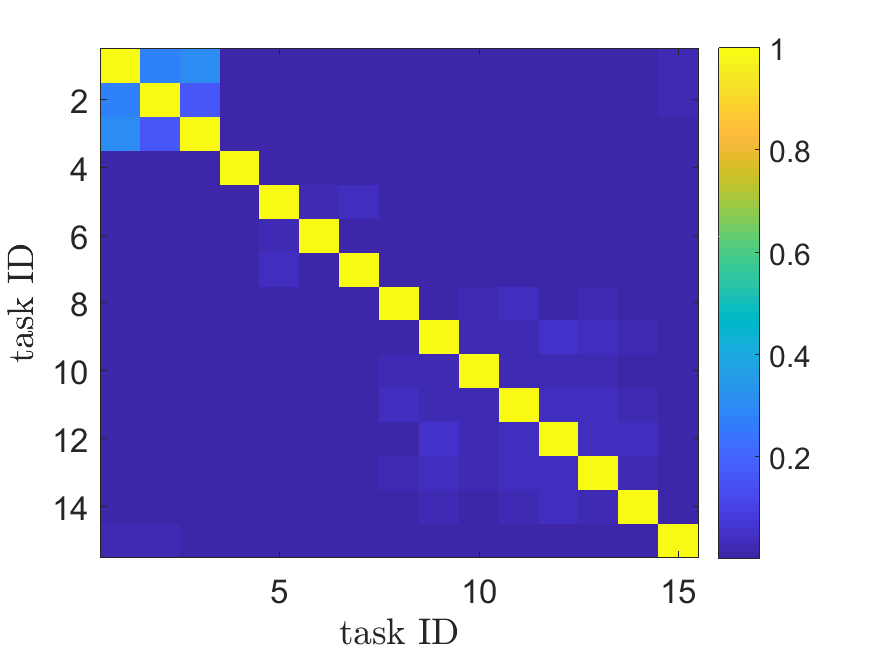}} 
\subfigure[Dimension Reduction by Kernel PCA]{\includegraphics[width=2.3in]{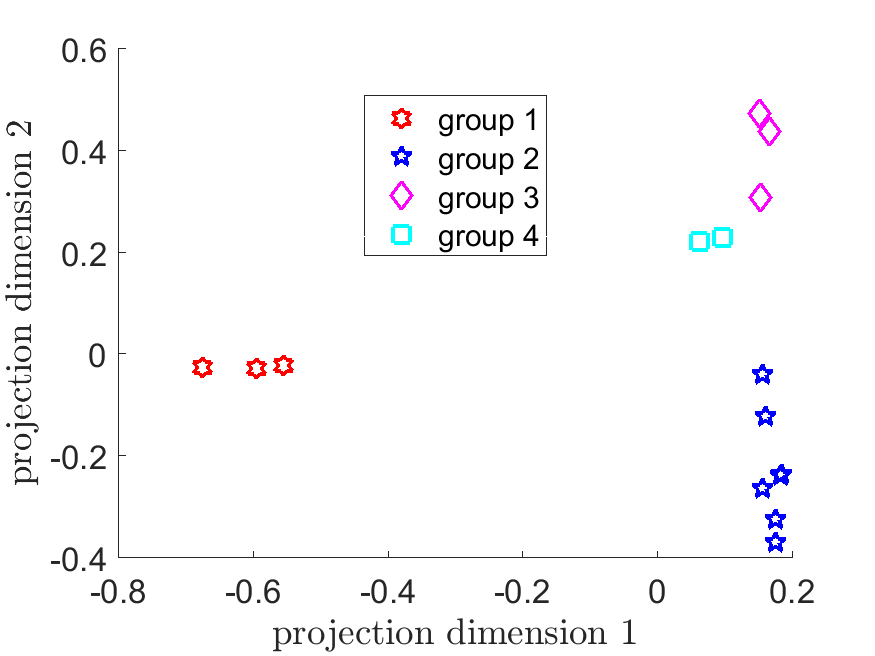}} 
\caption{Clustered Relationship among tasks on easySHARE. (a) Similarity matrix among tasks. First four tasks are Gaussian tasks. Other tasks are Bernoulli tasks. (b) Relationship among tasks shown by Kernel PCA.}\label{fig:real_task_cluster_easyshare}
\end{figure}

\begin{table}[htbp]\small
  \centering
  \renewcommand{\multirowsetup}{\centering}
  \begin{tabular}{cll}
     \hline
   Group & Targets &Interview module    \\
   \hline
  \multirow{3}*{1}  &Activities of daily living index& Functional Limitation Indices \\
    &Instrumental activities of daily living index& Functional Limitation Indices \\
   &Mobility index &Functional Limitation Indices  \\
   \hline
  \multirow{7}*{2}  &Depression& Mental Health \\
    &Pessimism &Mental Health \\
    &Sleep &Mental Health \\
    &Irritability &Mental Health \\
    &Appetite& Mental Health \\
    &Concentration &Mental Health \\
    &Orientation to date &Cognitive Function Indices \\
    \hline
  \multirow{3}*{3}&Provided help to family friends or neighbors & Social Support \& Network  \\
    &Gone to sport social or other kind of club& Social Support \& Network \\
    &Give help to others outside the household& Social Support \& Network \\
     \hline
  \multirow{2}*{4}&Recall of words score& Cognitive Function Indices \\
    &Numeracy score &Cognitive Function Indices \\
    \hline
  \end{tabular}
  \caption{Clustered tasks of easySHARE. 15 tasks are clustered into 4 groups.}\label{tab:clustered_task_easySHARE}
\end{table}

For each data set, we further apply our~\methodname\, methods \textbf{Mix} and \textbf{Mix GS} for each group of tasks. For LSOA II data set, tuned $\hat{k}=3,3,5$ and $2$ for Group 1,2,3 and 4, respectively. For easySHARE data set, tuned $\hat{k}=5,2,1$ and $1$ for Group 1,2,3 and 4, respectively. Imputation performances are shown in Table~\ref{tab:comp_clustered_task_all_task}. Performances increase by building separate models for each group, suggesting that separate models for clustered tasks are more accurate.



\setlength{\tabcolsep}{1pt}
\begin{table}[htbp]\small
  \centering
  \renewcommand{\multirowsetup}{\centering}
  \begin{tabular}{lcccc}
    \hline
     & \multicolumn{2}{c}{LSOA II} & \multicolumn{2}{c}{easySHARE}\\
    \hline
     & nMSE & aAUC & nMSE & aAUC   \\
     \hline
    Mix - All tasks             & 0.6408 & 0.7525   &0.6601 & 0.7419 \\
    \hline
    Mix - Clustered tasks       & 0.6370& \textbf{0.7592} & 0.6552 & 0.7439  \\
    \hline
    Mix GS - All tasks          & 0.6294 & 0.7481  & 0.6548 & 0.7402\\
    \hline
    Mix GS - Clustered tasks    & \textbf{0.6202}  & 0.7559 & \textbf{0.6533} & \textbf{0.7474}\\
    \hline
  \end{tabular}
  \caption{Comparison for imputation performances. ``All tasks'' denotes building one FMR model for all the tasks. ``Clustered tasks'' denotes building different FMR models for different groups of tasks.}\label{tab:comp_clustered_task_all_task}
\end{table}

}

{

\subsubsection{Feature-Based Prediction by MOE}\label{subsubsec:real_MOE}

We compare the methods using only features to predict targets on both real-world data sets. Our proposed MOE type of~\methodname\, methods, \textbf{Mix MOE} and \textbf{Mix MOE GS}, are compared with the non-FMR methods. We also integrate our strategies to handle anomaly tasks and clustered structure among tasks in both our proposed MOE type of~\methodname\, methods \textbf{Mix MOE} and \textbf{Mix MOE GS}. Concretely, we use the anomaly-task detection results in Section~\ref{subsubsec:real_anmaly_task} and the task clustering results in Section~\ref{subsubsec:real_task_cluster}.

The prediction results are reported in Table~\ref{tab:comp_base_real_MOE_combine}. Our proposed~\methodname\, method \textbf{Mix MOE} and \textbf{Mix MOE GS} outperform baseline methods on LSOA II and on Gaussian tasks of easySHARE, which is consistent with the results in Table~\ref{tab:comp_base_real}. In addition, by integrating our task clustering strategy, our proposed~\methodname\, methods \textbf{Mix MOE TC} and \textbf{Mix MOE GS TC} outperform other methods on Gaussian targets, while providing comparable results on Bernoulli tasks. \textbf{Mix MOE TC} and \textbf{Mix MOE GS TC} even outperform our proposed~\methodname\, methods \textbf{Mix MOE Robust} and \textbf{Mix MOE GS Robust} on Gaussian targets, suggesting that it is more accurate to build a specific model for each cluster of tasks.

Comparing Table~\ref{tab:comp_base_real_MOE_combine} with Table~\ref{tab:comp_base_real}, our MOE methods do not rival our FMR methods. We investigate the reason by showing the concordance between $p(\delta_{i,r} = 1\mid \x_i,\hat{\alpha}_r )$ and $p(\delta_{i,r} = 1 \mid y_{ij'},j' \in \Omega_i,\x_i,\hat{\theta}_2) = \hat{\rho}_{i,r}^{(T)}$ ($\hat{\rho}_{i,r}^{(T)}$ is defined in equation~\ref{eq:cond_prob_t_MOE}) in Table~\ref{tab:comp_nmi_pred_real}. In Table~\ref{tab:comp_nmi_pred_real}, the concordances of conditional probabilities measured by NMI are generally low, especially comparing with the results in Table~\ref{tab:comp_nmi_pred}, suggesting that on both real-world data sets, it is difficult to learn the mixture probabilities.

\begin{table}[htbp]\small
  \centering
  \renewcommand{\multirowsetup}{\centering}
  \begin{tabular}{lcccc}
     \hline
   &\multicolumn{2}{c}{LSOA II}& \multicolumn{2}{c}{easySHARE}\\
   \hline
   &nMSE  &aAUC &nMSE &aAUC\\
   \hline
   LASSO & 0.7051 & 0.7474 & 0.7869 & 0.7386\\
    \hline
    Group LASSO & 0.6975 & 0.7413 & 0.7897 & 0.7413 \\
    \hline
    MSMTFL & 0.7028 & 0.7473 & 0.7803 & 0.7411 \\
    \hline
    Sep L2 & 0.7176 & 0.7392 & 0.7796 & 0.7464\\
    \hline
    GO-MTL & 0.8516 & 0.6972 & 0.8231 & 0.7288\\
    \hline
    CMTL & 0.8186 & 0.7089 & 0.7958 & 0.7364\\
    \hline
    TraceReg & 0.7150 & 0.7408 & 0.7809 & \textbf{0.7496} \\
    \hline
    SparseTrace & 0.6972 & 0.7475 & 0.7791 & 0.7475\\
    \hline
    RMTL & 0.7145 & 0.7418 & 0.7808 & 0.7496\\
    \hline
    Dirty & 0.7032 & 0.7480 & 0.7781 & 0.7486\\
    \hline
    rMTFL & 0.6953 & 0.7418 &0.7781 & 0.7486\\
     \hline
      \hline
    Mix MOE &0.6935 & \textbf{0.7504} &  0.7991  & 0.7395\\
    \hline
    Mix MOE GS & 0.7054 & 0.7438 & 0.7774 & 0.7387\\
    \hline
    Mix MOE Robust & 0.6906 & 0.7436 & 0.7642 & 0.7351\\
    \hline
    Mix MOE GS Robust & 0.6981 & 0.7430 & 0.7668 & 0.7344\\
    \hline
    Mix MOE TC & \textbf{0.6859} & 0.7333 & \textbf{0.7584} & 0.7389\\
    \hline
    Mix MOE GS TC & 0.6925 & 0.7379 & 0.7657 & 0.7367\\
    \hline
  \end{tabular}
  \caption{Comparison for prediction performance with non-FMR methods on real data sets. ``Robust'' denotes adopting the strategy to handle anomaly tasks. ``TC'' denotes task clustering strategy.}\label{tab:comp_base_real_MOE_combine}
\end{table}

%
%

\begin{table}[htbp]\small
  \centering
  \renewcommand{\multirowsetup}{\centering}
  \begin{tabular}{lcccc}
    \hline
      & \multicolumn{2}{c}{LSOA II} & \multicolumn{2}{c}{easySHARE}\\
      \hline
      &Training&Testing&Training&Testing\\
    \hline
    Mix MOE &  {0.2745} & {0.1301} & 0.1314& 0.1068\\
    \hline
    Mix MOE GS & 0.1060 & 0.0673 &  {0.2527}& { 0.2054}\\
    \hline
  \end{tabular}
  \caption{The concordance between $p(\delta_{i,r} = 1\mid \x_i,\hat{\alpha}_r )$ and $p(\delta_{i,r} = 1 \mid y_{ij'},j' \in \Omega_i,\x_i,\hat{\theta}_2)$. The concordances are measured by NMI defined in~\eqref{eq:NMI}.}\label{tab:comp_nmi_pred_real}
\end{table}

}

\section{{Discussions} \& Conclusions}
\label{sec:conclusion}

In this paper, we propose a novel model~\methodname\, to explore heterogeneities of conditional relationship, output type and shared information among tasks.
Based on multivariate-target FMR {and MOE} models, our model jointly learns tasks with mixed type of output, allows incomplete data in the output, imposes inner component-wise group $\ell_1$ constraint and handles anomaly tasks {and clustered structure among tasks}. These key elements are integrated in a unified generalized mixture model setup so that they can benefit from and reinforce each other to discover the triple heterogeneities in data. Rigorous theoretical analyses under the high dimensional framework are provided.

{



We mainly consider the special setting {of MTL, where} the multivariate outcomes share the same set of instances and the same set of features because our main objective is to learn potentially shared sample clusters and feature sets among tasks. However, {as stressed in the introduction, the main definition of MTL considers tasks that do not necessarily share the same set of samples/instances and the same set of features}, such as distributed learning systems (different tasks have entirely different data instances, see~\citealt{jin2006fast} and~\citealt{boyd2011distributed}) and multi-source learning systems (different tasks have entirely different feature spaces, see~\citealt{zhang2011multi} and~\citealt{jin2015heterogeneous}). For such cases, one can define the specific expected shared information among tasks and then extend our methodology. For example, although tasks do not share the same instances, they could share the same mixture model structure. Then for the distributed learning systems, our model in Section~\ref{sec:method} can still be applied. Additionally, the tasks could still share the pattern/sparsity in feature selection even though the feature sets are different, e.g.,~\citet{liu2009multi} and~\citet{gong2012multi}. Then one can build FMR models for the tasks in which the regression coefficient vectors of the tasks share the same sparsity pattern achieved by group $\ell_1$ penalization. The case of multi-source learning systems can also be handled similarly by embedding features into a shared feature space, e.g.,~\citet{zhang2011multi} and~\citet{jin2015heterogeneous}.


}

There are many interesting future directions. It is worthwhile to explore the theoretical and empirical performance of non-convex penalties. Meanwhile, different components should share some features, and overlapping cluster pattern of conditional relationship should also be considered in real applications, both of which require further investigation. It is also interesting to explore other low-dimensional structures in the natural parameters, e.g., low-rank structure and its sparse composition~\citep{chen2012reduced}. { Our strategies on handling anomaly tasks and clustered structure among tasks require two stages. It is worthwhile to explore one-stage models to handle such task heterogeneities during a whole learning process. More complicated structure among tasks, such as graph-based structure, should also be explored. Our theoretical results cover our method introduced in Section~\ref{sec:method} and robust estimation introduced in Section~\ref{subsec:robust_estimation}. Nonetheless, theoretical guarantees for other extensions in Section~\ref{sec:extension} are still challenging due to joint learning complicated relationship among tasks and population heterogeneity, which will be focused on in our future research.}

\begin{acknowledgements}
The authors would like to thank the editors and reviewers for their valuable suggestions on improving this paper. This work of Jian Liang and Changshui Zhang is (jointly or partly) funded by National Natural Science Foundation of China under
Grant No.61473167 and Beijing Natural Science Foundation under Grant No. L172037. Kun Chen's work is partially supported by U.S. National Science Foundation under Grants DMS-1613295 and IIS-1718798. The work of Fei Wang is supported by National Science Foundation under Grants IIS-1650723 and IIS-1716432.
\end{acknowledgements}


\small
\appendices

\section{Definitions}

\begin{mydef}\label{th:def_subexp}
$Z = (Z_1,\ldots,Z_{m'})\trans \in \mathbb{R}^{m'} $ has a sub-exponential distribution with parameters $(\sigma,v,t)$ if for $M>t$, it holds
 \begin{align*}
 \mathbb{P}(\|Z\|_{\infty}>M)\leq
 \left\{
   \begin{array}{ll}
     \exp\biggl(-\frac{M^2}{\sigma^2}\biggr), & t\leq M\leq \frac{\sigma^2}{v}\\
     \exp\biggl(-\frac{M }{v }\biggr), & M>\frac{\sigma^2}{v}.
   \end{array}
 \right.
\end{align*}
\end{mydef}

\section{The Empirical Process}
In order to prove the first part of Theorem \ref{th:th_bound} that the bound in  (\ref{eq:bound_low}) has the probability in  (\ref{eq:prob_bound}), we firstly follow \citet{stadler2010} to define the empirical process for fixed data points $\x_1,\ldots,\x_n$. For $\oby_i = (y_{ij}, j\in\Omega_i)\trans\in \mathbb{R}^{|\Omega_i|}$ and $X = (X_1,\ldots,X_d)$, let
\begin{align*}
V_n(\theta) = \frac{1}{n}\sum_{i=1}^n\left(\ell_{\theta}(\x_i,\oby_i)-\mathbb{E}[\ell_{\theta}(\x_i,\oby_i)\mid X=\x_i]\right).
\end{align*}

By fixing some $T\geq 1$ and $\lambda_0\geq 0$, we define an event $\mathcal{T}$ below, upon which the bound in  (\ref{eq:bound_low}) can be proved. So the probability of the event $\mathcal{T}$ is the probability in  (\ref{eq:prob_bound}).
\begin{equation}\label{eq:event_T}\tag{21}
\mathcal{T} = \left\{\sup_{\theta \in \tilde{\Theta}}
\frac{|V_n(\theta)-V_n(\theta_0)|}{(\|\bbeta-\bbeta_0\|_1 + \|\eta-\eta_0\|_2 )\vee \lambda_0}\leq  T\lambda_0
\right\}.
\end{equation}
It can be seen that, (\ref{eq:event_T}) defines a set of the parameter $\theta$, and the bound in (\ref{eq:bound_low}) will be proved with $\hat{\theta}$ in the set.

For group-lasso type estimator, define an event similar to that in (\ref{eq:event_T}) in the following.
\begin{equation}\label{eq:event_T_GS}\tag{22}
\mathcal{T}_{group} = \left\{\sup_{\theta \in \tilde{\Theta}}
\frac{|V_n(\theta)-V_n(\theta_0)|}{(\sum_p\|\bbeta_{\mathcal{G}_p}-\bbeta_{0,\mathcal{G}_p}\|_2 + \|\eta-\eta_0\|_2 )\vee \lambda_0}\leq T\lambda_0
\right\}.
\end{equation}

\section{Lemmas}
In order to show that the probability of event $\mathcal{T}$ is large, we firstly invoke the following lemma.
\begin{mylem}\label{th:px}
Under Condition \ref{th:con_tail}, for model (\ref{eq:exp_fm}) with  $\theta_0 \in \tilde{\Theta}$, $M_n$ and $\lambda_0$ defined in  (\ref{eq:lambda_0}), some constants $c_6,c_7$ depending on $K$, and for $n\geq c_7$, we have
\begin{align*}
  \mathbb{P}_{\X}\left(\frac{1}{n}\sum_{i=1}^nF(\oby_i)>c_6\lambda_0^2/(mk)\right)\leq \frac{1}{n},
\end{align*}
where $\mathbb{P}_{\X}$ denote the conditional probability given $(X_1\trans,\ldots,X_n\trans)\trans=(\x_1\trans,\ldots,\x_n\trans)\trans = \X$, and
$
  F(\oby_i) = G_1(\oby_i)1\{G_1(\oby_i)>M_n\} + \mathbb{E}[G_1(\oby_i)1\{G_1(\oby_i)>M_n\}\mid X=\x_i],\forall i.
$
\end{mylem}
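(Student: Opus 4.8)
The plan is to split $F(\oby_i)$ into a random piece and its conditional mean, control the random piece by Chebyshev's inequality, and match the resulting orders against $\lambda_0^2/(mk)$. Write $F(\oby_i)=Z_i+\mu_i$ with $Z_i=G_1(\oby_i)\,1\{G_1(\oby_i)>M_n\}\ge 0$ and $\mu_i=\mathbb{E}[Z_i\mid X=\x_i]$; under $\mathbb{P}_{\X}$ the $\oby_i$ are independent with $\oby_i\sim f(\cdot\mid\x_i,\theta_0)$, so the $Z_i$ are independent and $\mathbb{E}_{\X}[Z_i]=\mu_i$. First I would invoke Condition~\ref{th:con_tail} with $M=M_n=c_2(\log n)^{c_1}$, which is admissible once $n\ge c_7$ is large enough that $M_n>c_4$. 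Since $(M_n/c_2)^{1/c_1}=\log n$ and $(M_n/c_2)^{c'}=(\log n)^{c_1c'}=(\log n)^{2c_1+3}$ (using $c'=2+3/c_1$), the two tail inequalities collapse, uniformly in $i$, to
\begin{align*}
\mu_i\le \frac{b_n}{n},\qquad \mathbb{E}_{\X}[Z_i^2]\le \frac{b_n^2}{n^2},\qquad b_n:=c_3(\log n)^{2c_1+3}+c_5 .
\end{align*}

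Next I would compare $b_n/n$ with the target order. A direct computation gives $\lambda_0^2/(mk)=c_2^2(\log n)^{2c_1+2}\log(d\vee n)/n$, and since $d\vee n\ge n$ we have $\log(d\vee n)\ge\log n$, hence $\lambda_0^2/(mk)\ge c_2^2(\log n)^{2c_1+3}/n$. For $n$ large enough that $(\log n)^{2c_1+3}\ge c_5$ we also have $b_n\le (c_3+1)(\log n)^{2c_1+3}$, so $b_n/n\le \kappa_0\,\lambda_0^2/(mk)$ with $\kappa_0:=(c_3+1)/c_2^2$, a constant depending only on $K$. In particular the deterministic part is already under control: $\frac1n\sum_{i=1}^n\mu_i\le b_n/n\le \kappa_0\,\lambda_0^2/(mk)$.

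For the random part, set $S=\sum_{i=1}^n Z_i$. By independence, $\mathbb{E}_{\X}[S]=\sum_i\mu_i\le b_n$ and $\mathrm{Var}_{\X}(S)=\sum_i\mathrm{Var}_{\X}(Z_i)\le\sum_i\mathbb{E}_{\X}[Z_i^2]\le n\cdot b_n^2/n^2=b_n^2/n$. Chebyshev's inequality then gives $\mathbb{P}_{\X}(S>2b_n)\le\mathbb{P}_{\X}(|S-\mathbb{E}_{\X}S|>b_n)\le \mathrm{Var}_{\X}(S)/b_n^2\le 1/n$. On the complement of that event,
\begin{align*}
\frac1n\sum_{i=1}^n F(\oby_i)=\frac1n S+\frac1n\sum_{i=1}^n\mu_i\le \frac{2b_n}{n}+\kappa_0\frac{\lambda_0^2}{mk}\le 3\kappa_0\frac{\lambda_0^2}{mk},
\end{align*}
so the lemma follows with $c_6=3\kappa_0$ and $c_7$ chosen large enough (depending only on $K$) to validate the two ``$n$ large'' steps above.

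The single delicate step is the Chebyshev estimate. A first-moment (Markov) bound would give only $\mathbb{P}_{\X}\!\big(\frac1n\sum_i F(\oby_i)>c_6\lambda_0^2/(mk)\big)\lesssim \mathbb{E}_{\X}[\frac1n\sum_i F(\oby_i)]\big/(c_6\lambda_0^2/(mk))$, and since, in the regime $d\le n$, the available bound $2b_n/n$ on the numerator is merely a fixed multiple of the denominator, this is just a constant, never $1/n$. Chebyshev succeeds because the variance of a sum of $n$ independent summands each of variance $O((b_n/n)^2)$ is $O(b_n^2/n)$ — the extra factor $1/n$ being exactly what is absent from the raw second moment $\mathbb{E}_{\X}[S^2]$, whose cross terms $\mu_i\mu_j$ do not cancel, and exactly what converts a constant-order deviation into a probability $1/n$. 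Everything else is bookkeeping: substitute $M=M_n$ into Condition~\ref{th:con_tail}, track exponents, and use $\log(d\vee n)\ge\log n$.
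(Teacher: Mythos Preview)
Your proof is correct and follows essentially the same approach as the paper's: both substitute $M=M_n$ into Condition~\ref{th:con_tail} to bound the first and second truncated moments by quantities of order $\lambda_0^2/(mk)$, then apply Chebyshev's inequality to the centered sum to extract the $1/n$ probability. Your version is more explicit about the exponent bookkeeping and the comparison $b_n/n\lesssim\lambda_0^2/(mk)$, and your closing remark on why Markov alone fails is a nice addition, but the underlying argument is identical.
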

A proof is given in Appendix \ref{sec:proof_px}.

Then we can follow the Corollary 1 in \citet{stadler2010} to show that the probability of event $\mathcal{T}$ is large below.
\begin{mylem}\label{th:pT}
Use Lemma \ref{th:px}. For model (\ref{eq:exp_fm}) with  $\theta_0 \in \tilde{\Theta}$, some constants $c_7,c_8,c_9,c_{10}$ depending on $K$, for $\mathcal{T}$ is defined in (\ref{eq:event_T}), and for all $T\geq c_{10}$ we have
\begin{align*}
\mathbb{P}_{\X}(\mathcal{T})\geq 1 - c_9\exp\left(-\frac{T^2(\log n)^2\log(d\vee n)}{c_8}\right) - \frac{1}{n}, \forall n\geq c_7.
\end{align*}
\end{mylem}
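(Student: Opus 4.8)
The plan is to mirror the proof of Corollary~1 in \citet{stadler2010}, adapting the empirical-process argument to our mixed-type, multivariate and incomplete-response setting. The backbone is the Lipschitz control of Condition~\ref{th:con_G1}: by the mean value theorem, for every $\theta\in\tilde{\Theta}$ and every $i$,
\[
|\ell_\theta(\x_i,\oby_i)-\ell_{\theta_0}(\x_i,\oby_i)|\le G_1(\oby_i)\,\|\psi_i-\psi_{0,i}\|_1 ,
\]
and, since $\varphi_{ijr}=\x_i\bbeta_{jr}$ and all parameters stay in $\tilde{\Theta}$ (so the design columns and natural parameters are bounded by functions of $K$), $\|\psi_i-\psi_{0,i}\|_1$ is dominated, up to constants depending on $K$, by $\|\bbeta-\bbeta_0\|_1+\|\eta-\eta_0\|_2$ --- exactly the quantity in the denominator of $\mathcal{T}$ in~\eqref{eq:event_T}. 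Hence the increments of $V_n(\theta)-V_n(\theta_0)$ are, uniformly over $\tilde{\Theta}$, ``$G_1$-Lipschitz'' in that distance.

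Next I would truncate at level $M_n=c_2(\log n)^{c_1}$, writing $G_1(\oby_i)=G_1(\oby_i)\,1\{G_1(\oby_i)\le M_n\}+G_1(\oby_i)\,1\{G_1(\oby_i)>M_n\}$, and split $\ell_\theta-\ell_{\theta_0}$ into a bounded part and a tail part accordingly. The tail part's contribution to $V_n(\theta)-V_n(\theta_0)$ is dominated by $\tfrac1n\sum_i F(\oby_i)$ times the penalized distance, so on the event $\{\tfrac1n\sum_i F(\oby_i)\le c_6\lambda_0^2/(mk)\}$ from Lemma~\ref{th:px} --- which has $\mathbb{P}_{\X}$-probability at least $1-1/n$ --- the tail part is already $\lesssim\lambda_0^2$ times the distance, of the required order and absorbable into $T\lambda_0$; this is precisely the role of calibrating $M_n$ against the tail rates of Condition~\ref{th:con_tail}. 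For the bounded part the increments are $M_n$-Lipschitz, so standard tools apply: symmetrization, then the Ledoux--Talagrand contraction inequality to pass to a Rademacher process indexed by the linear maps $\x_i\mapsto\x_i(\bbeta_{jr}-\bbeta_{0,jr})$, then a maximal inequality for these linear classes producing the factor $\sqrt{mk}\sqrt{\log(d\vee n)/n}$ (the $\sqrt{mk}$ from the $|\Omega_i|\le m$ observed targets and $k$ components, the rest from the dimension-$d$ design over $n$ samples, with bounded columns guaranteed by $\tilde{\Theta}$), and finally a Bernstein/Massart-type concentration around the mean that supplies the extra $M_n\log n$ factor and the exponential deviation $c_9\exp(-T^2(\log n)^2\log(d\vee n)/c_8)$.

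The last step is a peeling argument to convert the shell-wise deviation bound into the normalized supremum defining $\mathcal{T}$: slice $\tilde{\Theta}$ along geometric shells $\{2^s\lambda_0\le(\|\bbeta-\bbeta_0\|_1+\|\eta-\eta_0\|_2)\vee\lambda_0<2^{s+1}\lambda_0\}$, apply the estimate above on each shell with radius $2^{s+1}\lambda_0$, and union-bound over the $O(\log n)$ relevant values of $s$ (finitely many because $\tilde{\Theta}$ is bounded). Summing the geometric series and enlarging constants absorbs the union bound, the requirement $T\ge c_{10}$ is what keeps the resulting bound clean, and combining with the $1/n$ from Lemma~\ref{th:px} yields the stated probability.

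I expect the main obstacle to be the bounded-part estimate: organizing the symmetrization--contraction--concentration chain so that the mixed-type nature of the responses enters only through the constants $c_1,\dots,c_5$ and $K$ of Conditions~\ref{th:con_G1}--\ref{th:con_tail} (guaranteed by Lemma~\ref{th:lem_subexp_tail}), and so that the truncation level $M_n$ and the polylogarithmic factors align to produce exactly $O(\lambda_0^2)$-order fluctuations on each shell. The response incompleteness (varying $\Omega_i$) and the several response types add bookkeeping but do not change the structure of the argument, since they are uniformly accommodated by the bounds defining $\tilde{\Theta}$ and by Lemma~\ref{th:px}.
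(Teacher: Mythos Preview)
Your proposal is correct in its architecture and would go through, but the paper takes a different technical route for the bounded (truncated) part of the process. You propose symmetrization followed by Ledoux--Talagrand contraction to reduce to a Rademacher process over the linear maps $\x_i\mapsto\x_i\bbeta_{jr}$, then a maximal inequality and Bernstein-type concentration, with an explicit peeling over geometric shells. The paper instead follows \citet{stadler2010} more literally: it proves an \emph{entropy lemma} bounding $H(u,\{(\ell_\theta-\ell_{\theta_0})1\{G_1\le M_n\}:\theta\in\tilde{\Theta}(\epsilon)\},\|\cdot\|_{P_n})$ via the convex-hull entropy bound of Lemma~2.6.11 in \citet{van1996weak}, and then feeds this entropy into the Dudley-integral-type result of Lemma~3.2 in \citet{van2000applications}; the localization (your peeling) is already packaged inside that machinery through the parameter $\epsilon$ in $\tilde{\Theta}(\epsilon)$ and the choices $\delta=c_{13}T\epsilon\lambda_0$, $R=c_{14}(\sqrt{mk}\,\epsilon\wedge 1)M_n$.

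Both approaches deliver the same rate. The entropy route has the advantage that the multivariate, mixed-type structure enters only through the dimension count $d_\psi=(2m+1)k$ in the Lipschitz step and the $\log(1+kmd)$ in the convex-hull entropy, so no vector-valued contraction is needed; your contraction step, by contrast, must cope with the fact that $\ell_\theta-\ell_{\theta_0}$ depends on the whole vector $(\x_i\bbeta_{jr})_{j\in\Omega_i,r}$ rather than a single scalar, which calls for a vector contraction inequality (e.g., Maurer-type) or a coordinate-wise decomposition before the scalar Ledoux--Talagrand applies. This is a bookkeeping point rather than an obstruction, and the tail/truncation argument via Lemma~\ref{th:px} is identical in both proofs.
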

A proof is given in Appendix \ref{sec:proof_pT}.

\section{Corollaries for Models Considering Outlier Samples}\label{sec:outlier}

When considering outlier samples and modifying the natural parameter model as in \eqref{eq:outlier_g}, we can show in this section the similar results.

First, as $\bbeta$ and $\zzeta$ are treated in the similar way, we denote them together by $\xxi\in \mathbb{R}^{((d+n)\times m)\times k}$, and $\xi = vec(\xxi) \in\mathbb{R}^{(d+n)mk}$ such that for all $r = 1,\ldots,k$,
\begin{eqnarray*}
&\vvarphi_r &= \X\bbeta_r + \zzeta_r \  \Rightarrow \vvarphi_r = \A\xxi_r,\\
& \A  & = [\X,  \I_{n}]\in \mathbb{R}^{n\times(d+n)}, \ \xxi_r  = [\bbeta_r\trans,\zzeta_r\trans]\trans
\in \mathbb{R}^{ (d+n)\times m},
\end{eqnarray*}
where $\I_{n}\in \mathbb{R}^{n\times n}$ is a identity matrix.

Thus it can be seen that the modification only results in new design matrix and regression coefficient matrix, therefore, we can apply Theorem \ref{th:th_bound} $\sim$ \ref{th:th_bound_GS} to have similar results for the modified models.

For lasso-type penalties, denote the set of indices of non-zero entries of $\beta_0$ by $S_{\beta}$, and the set of indices of non-zero entries of $\zeta_0$ by $S_{\zeta}$, where $\zeta = \mbox{vec}(\zzeta_1,\ldots,\zzeta_k)$. Denote by $s = |S_{\beta}| + |S_{\zeta}|$. Then for entry-wise $\ell_1$ penalties in \eqref{eq:pen_lasso} (for $\bbeta$) with $\gamma = 0$ and $\mathcal{R}(\zzeta) = \lambda\|\zeta\|_1$ (for $\zzeta$), we need the following modified restricted eigenvalue condition.

\begin{mycon}\label{th:con_REC_outlier}
For all $ \beta \in \mathbb{R}^{dmk}$ and all $ \zeta \in \mathbb{R}^{nmk}$ satisfying $\|\beta_{S_{\beta}^c}\|_1 + \|\zeta_{S_{\zeta}^c}\|_1 \leq 6(\|\beta_{S_{\beta}}\|_1+\|\zeta_{S_{\zeta}}\|_1)$, it holds for some constant $\kappa\geq 1$ that,
\begin{align*}
\|\beta_{S_{\beta}}\|_2^2 + \|\zeta_{S_{\zeta}}\|_2^2 \leq \kappa^2 \|\varphi\|_{Q_n}^2 =  \frac{\kappa^2}{n}\sum_{i=1}^n\sum_{j\in\Omega_i}\sum_{r=1}^k (\x_i\bbeta_{jr}+\zeta_{ijr})^2.
\end{align*}
\end{mycon}

\begin{mycor}\label{th:cor_outlier_l1}
Consider the \methodname\, model in (\ref{eq:exp_fm}) with $\theta_0\in\tilde{\Theta}$, and consider the penalized estimator (\ref{eq:estimator_outlier}) with the $\ell_1$ penalties in (\ref{eq:pen_lasso}) and $\mathcal{R}(\zzeta) = \lambda\|\zeta\|_1$.

\noindent (a) Assume conditions 1-3 and 6 hold. Suppose $\sqrt{mk} \lesssim n/M_n$, and take $\lambda > 2T\lambda_0$ for some constant $T>1$. For some constant $c>0$ and large enough $n$, with probability
$
1 - c\exp\left(-\frac{(\log n)^2\log(d\vee n)}{c}\right) - \frac{1}{n},
$
 we have
\begin{equation*}
 \bar{\varepsilon}(\hat{\theta}\mid \theta_0) + 2(\lambda-T\lambda_0) (\|\hat{\beta}_{S_{\beta}^c}\|_1 + \|\hat{\zeta}_{S_{\zeta}^c}\|_1)
 \leq 4(\lambda+T\lambda_0)^2\kappa^2 c_0^2s,
\end{equation*}

\noindent(b) Assume conditions 1-3 hold (without condition 6), assume
\begin{align*}
 \|\beta_0\|_1 + \|\zeta_0\|_1 &= o(\sqrt{n/((\log n)^{2+2c_1}\log(d\vee n)mk)}),\\
 \sqrt{mk} &= o(\sqrt{n/((\log n)^{2+2c_1}\log(d\vee n))})
\end{align*}
as $n\rightarrow \infty$. If $\lambda = C\sqrt{(\log n)^{2+2c_1}\log(d\vee n)mk/n}$ for some $C>0$ sufficiently large,
and for some constant $c>0$ and large enough $n$, with the following probability
$
1 - c\exp\left(-\frac{ (\log n)^2\log(d\vee n)}{c}\right) - \frac{1}{n},
$
we have
$
\bar{\varepsilon}(\hat{\theta}\mid\theta_0) = o_P(1).
$
\end{mycor}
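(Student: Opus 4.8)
The plan is to reduce Corollary~\ref{th:cor_outlier_l1} to Theorems~\ref{th:th_bound} and~\ref{th:high_dim} by the reparametrization already introduced in Appendix~\ref{sec:outlier}. Writing $\vvarphi_r = \X\bbeta_r + \zzeta_r = \A\xxi_r$ with $\A = [\X,\I_n]\in\mathbb{R}^{n\times(d+n)}$ and $\xxi_r = [\bbeta_r\trans,\zzeta_r\trans]\trans$, the mean-shifted model~\eqref{eq:outlier_g} is \emph{literally} an instance of the \methodname\, model~\eqref{eq:exp_fm}--\eqref{eq:natural_parameter} with design matrix $\A$, regression coefficient array $\xxi\in\mathbb{R}^{((d+n)\times m)\times k}$, and ambient feature dimension $d' = d+n$. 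Moreover, the penalized criterion~\eqref{eq:estimator_outlier} with $\mathcal{R}(\bbeta;\lambda)$ the $\ell_1$ penalty~\eqref{eq:pen_lasso} at $\gamma=0$ and $\mathcal{R}(\zzeta;\lambda)=\lambda\|\zeta\|_1$ is exactly the $\ell_1$-penalized estimator~\eqref{eq:l1} on the augmented parameter, since $\mathcal{R}(\bbeta;\lambda)+\mathcal{R}(\zzeta;\lambda) = \lambda\|\beta\|_1+\lambda\|\zeta\|_1 = \lambda\|\xi\|_1$. So it suffices to check that the hypotheses of Theorems~\ref{th:th_bound}/\ref{th:high_dim} are met for this augmented problem.

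First I would verify the density conditions. Conditions~\ref{th:con_G1}--\ref{th:con_fisher} are statements about $f(\oby_i\mid\psi_i)$ as a function of the natural parameters $\psi_i=\mbox{vec}(\varphi_i,\eta)$ only, and hence are insensitive to how $\varphi_i$ is expressed through a design and coefficients; they continue to hold verbatim, and Lemma~\ref{th:lem_subexp_tail} still guarantees they hold for mixtures of sub-exponential tasks with known dispersion. The boundedness set $\tilde{\Theta}$ in~\eqref{eq:tTheta} is phrased through $\max_i\|\varphi_i\|_\infty\le K$, which for the augmented model reads $\max_i\max_{j,r}|\x_i\bbeta_{jr}+\zeta_{ijr}|\le K$; the statement ``$\theta_0\in\tilde{\Theta}$'' is understood to mean that the true $(\bbeta_0,\zzeta_0)$, equivalently $\xxi_0$, lies in this augmented set.

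Next I would match the restricted eigenvalue condition. Writing $S = S_{\beta}\cup S_{\zeta}$ for the nonzero index set of $\xi_0$ inside $\mathbb{R}^{(d+n)mk}$, so that $|S| = s = |S_{\beta}|+|S_{\zeta}|$, the cone constraint of Condition~\ref{th:con_REC_outlier}, namely $\|\beta_{S_{\beta}^c}\|_1+\|\zeta_{S_{\zeta}^c}\|_1\le 6(\|\beta_{S_{\beta}}\|_1+\|\zeta_{S_{\zeta}}\|_1)$, is precisely $\|\xi_{S^c}\|_1\le 6\|\xi_S\|_1$; the left side of its conclusion equals $\|\xi_S\|_2^2$; and $\|\varphi\|_{Q_n}^2 = \frac1n\sum_i\sum_{j\in\Omega_i}\sum_r(\x_i\bbeta_{jr}+\zeta_{ijr})^2$ is exactly the $Q_n$-norm induced by the augmented design $\A$. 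Thus Condition~\ref{th:con_REC_outlier} \emph{is} Condition~\ref{th:con_REC} for the augmented problem. One bookkeeping point: the rate quantities $\lambda_0,M_n$ in~\eqref{eq:lambda_0} now carry $\log((d+n)\vee n)$ rather than $\log(d\vee n)$, but $\log(d+n)\le\log 2+\log(d\vee n)$, so the two are equal up to the numerical constants absorbed by ``$\lesssim$'' and by $c,C$. Part~(a) then follows by invoking Theorem~\ref{th:th_bound} on the augmented problem and reading off $\|\hat\xi_{S^c}\|_1 = \|\hat\beta_{S_{\beta}^c}\|_1+\|\hat\zeta_{S_{\zeta}^c}\|_1$. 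Part~(b) follows from Theorem~\ref{th:high_dim}, using $\|\beta_0\|_1+\|\zeta_0\|_1 = \|\xi_0\|_1$ so that the two growth conditions are exactly its hypotheses with $d$ replaced by $d+n$.

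The main obstacle is the bookkeeping around the inflated dimension: one must be sure that the empirical-process machinery underlying Theorem~\ref{th:th_bound} (Lemmas~\ref{th:px} and~\ref{th:pT} and the event $\mathcal{T}$ in~\eqref{eq:event_T}) is genuinely driven only by the density conditions and the $Q_n$ geometry, so that replacing the $d$-dimensional feature block by the $(d+n)$-dimensional augmented block merely swaps $\log d$ for $\log(d+n)\asymp\log(d\vee n)$ in the entropy estimates and changes nothing else, and that the mean-shift parameters $\zzeta$, though high-dimensional by construction, enter only through the design and are controlled by the sparsity accounting $s=|S_{\beta}|+|S_{\zeta}|$ already built into the theorems. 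Once this is confirmed, no new estimates are required.
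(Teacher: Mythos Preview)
Your proposal is correct and is essentially the same approach as the paper's: the paper's own ``proof'' consists precisely of the reparametrization $\vvarphi_r=\A\xxi_r$ with $\A=[\X,\I_n]$ and $\xxi_r=[\bbeta_r\trans,\zzeta_r\trans]\trans$, followed by the one-line remark that ``the modification only results in new design matrix and regression coefficient matrix, therefore, we can apply Theorem~\ref{th:th_bound} $\sim$ \ref{th:th_bound_GS}.'' Your write-up supplies the detailed verification (matching Condition~\ref{th:con_REC_outlier} to Condition~\ref{th:con_REC}, the $\log(d+n)\asymp\log(d\vee n)$ bookkeeping, and the observation that Conditions~\ref{th:con_G1}--\ref{th:con_fisher} depend only on $\psi_i$) that the paper leaves implicit, but the route is identical.
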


For group-lasso type penalties, denote
\begin{align*}
 &\mathcal{I}_{\beta} = \{p: \bbeta_{0,\mathcal{G}_{\beta,p}} = \mathbf{0}\}, \ \mathcal{I}_{\beta}^c = \{p: \bbeta_{0,\mathcal{G}_{\beta,p}} \neq \mathbf{0}\},\\
 &\mathcal{I}_{\zeta} = \{q: \zzeta_{0,\mathcal{G}_{\zeta,q}} = \mathbf{0}\}, \ \mathcal{I}_{\zeta}^c = \{q: \zzeta_{0,\mathcal{G}_{\zeta,q}} \neq \mathbf{0}\},
\end{align*}
where $\bbeta_{0,\mathcal{G}_{\beta,p}}$ and $\zzeta_{0,\mathcal{G}_{\zeta,q}}$ denote the $p$th group of $\bbeta_0$ and the $q$th group of $\zzeta_0$, respectively. Now denote $s = |\mathcal{I}_{\beta}| + |\mathcal{I}_{\zeta}|$ with some abuse of notation.

Then for group $\ell_1$ penalties in \eqref{eq:pen_lasso_group_general} (for $\bbeta$) and $\mathcal{R}(\zzeta) = \sum_q^Q\|\zzeta_{\mathcal{G}_{\zeta,q}}\|_F$ (for $\zzeta$), we need the following modified restricted eigenvalue condition.
\begin{mycon}\label{th:con_REC_GS_outlier}
For all $ \bbeta \in \mathbb{R}^{d\times mk}$ and all $ \zzeta \in \mathbb{R}^{n\times mk}$ satisfying
\begin{align*}
\sum_{p\in \mathcal{I}_{\beta}^c}\|\bbeta_{\mathcal{G}_{\beta,p}}\|_F + \sum_{q\in \mathcal{I}_{\zeta}^c}\|\zzeta_{\mathcal{G}_{\zeta,q}}\|_F
\leq 6\left(\sum_{p\in \mathcal{I}_{\beta}}\|\bbeta_{\mathcal{G}_{\beta,p}}\|_F + \sum_{q\in \mathcal{I}_{\zeta}}\|\zzeta_{\mathcal{G}_{\zeta,q}}\|_F \right),
\end{align*}
it holds that for some constant $\kappa\geq 1$,
\begin{align*}
 \sum_{p\in \mathcal{I}_{\beta}}\|\bbeta_{\mathcal{G}_{\beta,p}}\|_F^2 + \sum_{q\in \mathcal{I}_{\zeta}}\|\zzeta_{\mathcal{G}_{\zeta,q}}\|_F^2
 \leq \kappa^2 \|\varphi\|_{Q_n}^2 =\frac{\kappa^2}{n}\sum_{i=1}^n\sum_{j\in\Omega_i}\sum_{r=1}^k (\x_i\bbeta_{jr}+\zeta_{ijr})^2.
\end{align*}
\end{mycon}

\begin{mycor}\label{th:cor_outlier_l1_group}
Consider the \methodname\, model in (\ref{eq:exp_fm}) with $\theta_0\in\tilde{\Theta}$, and consider estimator (\ref{eq:estimator_outlier}) with the group $\ell_1$ penalties in (\ref{eq:pen_lasso_group_general}) and $\mathcal{R}(\zzeta) = \sum_q^Q\|\zzeta_{\mathcal{G}_{\zeta,q}}\|_F$.

\noindent (a) Assume conditions 1-3 and 7 hold. Suppose $\sqrt{mk} \lesssim n/M_n$, and take $\lambda > 2T\lambda_0$ for some constant $T>1$. For some constant $c>0$ and large enough $n$, with probability
$
1 - c\exp\left(-\frac{(\log n)^2\log(d\vee n)}{c}\right) - \frac{1}{n},
$
 we have
\begin{align*}
 \bar{\varepsilon}(\hat{\theta}\mid\theta_0) + 2(\lambda-T\lambda_0)\biggl(\sum_{p\in\mathcal{I}_{\beta}^c}\|\hat{\bbeta}_{\mathcal{G}_{\beta,p}}\|_F+\sum_{q\in\mathcal{I}_{\zeta}^c}\|\hat{\zzeta}_{\mathcal{G}_{\zeta,q}}\|_F\biggr)
 \leq 4(\lambda+T\lambda_0)^2\kappa^2 c_0^2s,
\end{align*}

\noindent(b) Assume conditions 1-3 hold (without condition 7), assume
\begin{align*}
 \sum_{p=1}^P\|\bbeta_{0,\mathcal{G}_{\beta,p}}\|_F + \sum_{q=1}^Q\|\zzeta_{0,\mathcal{G}_{\zeta,q}}\|_F & = o(\sqrt{n/((\log n)^{2+2c_1}\log(d\vee n)mk)}),\\
  \sqrt{mk} &= o(\sqrt{n/((\log n)^{2+2c_1}\log(d\vee n))})
\end{align*}
as $n\rightarrow \infty$. If $\lambda = C\sqrt{(\log n)^{2+2c_1}\log(d\vee n)mk/n}$ for some $C>0$ sufficiently large,
and for some constant $c>0$ and large enough $n$, with the following probability
$
1 - c\exp\left(-\frac{ (\log n)^2\log(d\vee n)}{c}\right) - \frac{1}{n},
$
we have
$
\bar{\varepsilon}(\hat{\theta}\mid\theta_0) = o_P(1).
$
\end{mycor}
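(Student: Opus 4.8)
The plan is to reduce the mean-shifted model to the original \methodname\ model on an augmented design, so that Theorem~\ref{th:th_bound_GS} applies almost verbatim; this is the ``modified restricted eigenvalue condition'' promised in the Remark at the end of Section~5. As recorded in the setup of Appendix~\ref{sec:outlier}, setting $\A = [\X,\I_{n}]\in\mathbb{R}^{n\times(d+n)}$ and $\xxi_r = [\bbeta_r\trans,\zzeta_r\trans]\trans$ for $r=1,\ldots,k$, the natural-parameter model $\varphi_{ijr} = \x_i\bbeta_{jr}+\zeta_{ijr}$ is exactly model~\eqref{eq:natural_parameter} with $\X$ replaced by $\A$ and the array $\bbeta$ replaced by $\xxi$. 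First I would observe that the composite penalty $\mathcal{R}(\bbeta)+\mathcal{R}(\zzeta)$ used in~\eqref{eq:estimator_outlier} (with a common $\lambda$) is itself a penalty of the general group $\ell_1$ form~\eqref{eq:pen_lasso_group_general} on $\xxi$, whose group partition is the concatenation of $\{\mathcal{G}_{\beta,p}\}_{p=1}^{P}$ and $\{\mathcal{G}_{\zeta,q}\}_{q=1}^{Q}$ (which indeed tiles all $(d+n)mk$ indices of $\xxi$); under this identification the index set $\mathcal{I}$ of zero groups is $\mathcal{I}_\beta\cup\mathcal{I}_\zeta$, its size is $s=|\mathcal{I}_\beta|+|\mathcal{I}_\zeta|$, and $\sum_{p\in\mathcal{I}^c}\|\hat{\xxi}_{\mathcal{G}_p}\|_F = \sum_{p\in\mathcal{I}_\beta^c}\|\widehat{\bbeta}_{\mathcal{G}_{\beta,p}}\|_F + \sum_{q\in\mathcal{I}_\zeta^c}\|\widehat{\zzeta}_{\mathcal{G}_{\zeta,q}}\|_F$. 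Likewise Condition~\ref{th:con_REC_GS_outlier} is precisely the group restricted eigenvalue Condition~\ref{th:con_REC_GS} written for the pair $(\A,\xxi)$.

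Next I would check that Conditions~\ref{th:con_G1}--\ref{th:con_fisher} transfer without change. These conditions, the parameter space~\eqref{eq:tTheta}, and the average excess risk $\bar{\varepsilon}$ are all expressed through $\psi_i=\mathrm{vec}(\varphi_i,\eta)$ and the conditional density $f(\oby_i\mid\psi_i)$, and both the natural parameters $\{\varphi_{ijr}\}$ and the nuisance block $\eta=\mathrm{vec}(\log\phi,\log\pi)$ are literally the same objects after the reparameterization---only the functional form of $\varphi_{ijr}$ in terms of the regularized coefficients changes. Hence Conditions~\ref{th:con_G1}, \ref{th:con_tail} and~\ref{th:con_fisher} hold with the same envelope $G_1$ and the same Fisher information as in the unshifted model, and~\eqref{eq:tTheta} still constrains $\max_i\|\varphi_i\|_\infty=\max_i\|\A_i\xxi\|_\infty\le K$ together with the dispersion and mixture parameters, so $\theta_0\in\tilde{\Theta}$ carries over. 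I would also note that passing from ambient dimension $d$ to $d+n$ touches the conclusions only through the factor $\log(d\vee n)$ in $\lambda_0$ and in the exponent of~\eqref{eq:prob_bound}: since $d\vee n\le d+n\le 2(d\vee n)$, the quantities $\log((d+n)\vee n)$ and $\log(d\vee n)$ differ by at most an additive $\log 2$ and hence agree up to a numerical constant, which is absorbed by the generic constants $c,\kappa$ already present. With all hypotheses verified, part~(a) follows by applying Theorem~\ref{th:th_bound_GS}(a) to $(\A,\xxi)$ and translating the conclusion back through the identities above; part~(b) follows from Theorem~\ref{th:th_bound_GS}(b), whose growth condition on $\sum_{p}\|\bbeta_{0,\mathcal{G}_p}\|_F$ becomes exactly the stated condition on $\sum_{p}\|\bbeta_{0,\mathcal{G}_{\beta,p}}\|_F+\sum_{q}\|\zzeta_{0,\mathcal{G}_{\zeta,q}}\|_F$.

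The step I expect to be the main obstacle is the careful verification that the envelope Condition~\ref{th:con_G1} and its sub-exponential tail bound (Lemma~\ref{th:lem_subexp_tail}) are genuinely unaffected by the augmentation. The score vector now carries the extra coordinates $\partial\ell(\psi_i\mid\oby_i)/\partial\zeta_{ijr}$; because $\partial\varphi_{ijr}/\partial\zeta_{ijr}=1$ whereas $\partial\varphi_{ijr}/\partial\beta_{ljr}=x_{il}$ is bounded only through the membership $\theta_0\in\tilde{\Theta}$, one must confirm that the same $G_1(\oby_i)$ (up to constants already folded into $K$) still dominates $\|\partial\ell(\psi_i\mid\oby_i)/\partial\psi_i\|_\infty$ for the enlarged $\psi_i$, and that the $\ell_1$/$\ell_2$ bookkeeping in the restricted-eigenvalue cone inequality goes through with $\zzeta$ on the same footing as $\bbeta$ (this is what dictates the precise form of Conditions~\ref{th:con_REC_outlier} and~\ref{th:con_REC_GS_outlier}). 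Once these points are settled, the remainder is a verbatim transcription of the proofs of Theorems~\ref{th:th_bound_GS} and~\ref{th:high_dim}, which I would cite rather than repeat.
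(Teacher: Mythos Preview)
Your approach is exactly the paper's: reduce to the augmented design $\A=[\X,\I_n]$, $\xxi_r=[\bbeta_r\trans,\zzeta_r\trans]\trans$ and invoke Theorem~\ref{th:th_bound_GS} (the paper states only this reduction and then cites Theorems~\ref{th:th_bound}--\ref{th:th_bound_GS} without further argument). Your flagged obstacle is a non-issue: Conditions~\ref{th:con_G1}--\ref{th:con_fisher} are stated for $\psi_i=\mathrm{vec}(\varphi_i,\eta)$, so the score $\partial\ell/\partial\psi_i$ and its envelope $G_1$ never see the $(\bbeta,\zzeta)$ reparameterization at all---there are no extra coordinates $\partial\ell/\partial\zeta_{ijr}$ in $\psi_i$.
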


\section{Proof of Lemma \ref{th:lem_subexp_tail}}\label{sec:condition}
\begin{proof}

For non-negative continuous variable $X$, we have
\begin{align*}
 \mathbb{E}[X1\{X>M\}] & = \int_M^{\infty}tf_X(t)dt = \int_M^{\infty}\int_0^tf_X(t)dxdt \nonumber \\
& = \int_0^M\int_M^{\infty}f_X(t)dtdx + \int_M^{\infty}\int_x^{\infty}f_X(t)dtdx \nonumber \\
&= M\mathbb{P}(X>M) + \int_M^{\infty}\mathbb{P}(X>x)dx.
\end{align*}

Similarly, we have
$
 \mathbb{E}[X^21\{X>M\}]   = M^2\mathbb{P}(X>M) + \int_M^{\infty}2x\mathbb{P}(X>x)dx.
$

For $X$ sub-exponential with parameters $(\sigma ,v ,t) $ such that for   $M>t $
 \begin{align*}
 \mathbb{P}(X>M)\leq
 \left\{
   \begin{array}{ll}
     \exp\biggl(-\frac{M^2}{\sigma^2}\biggr), & t \leq M\leq \frac{\sigma^2}{v}\\
     \exp\biggl(-\frac{M }{v  }\biggr), & M\geq\frac{\sigma^2}{v},
   \end{array}
 \right.
\end{align*}
we have the following.

If $M\leq  \frac{\sigma^2}{v} $, we have
\begin{align*}
 \mathbb{E}[X1\{X>M\}] &= M\mathbb{P}(X>M) + \int_M^{\infty}\mathbb{P}(X>x)dx\\
& \leq M\exp\biggl(-\frac{M^2}{\sigma^2}\biggr) + \int_M^{\frac{\sigma^2}{v}}\exp\biggl(-\frac{x^2}{\sigma^2}\biggr)dx + \int_{\frac{\sigma^2}{v}}^{\infty}\exp\biggl(-\frac{x }{v  }\biggr)dx\\
& \leq M\exp\biggl(-\frac{M^2}{\sigma^2}\biggr) + (\frac{\sigma^2}{v} - M)\exp\biggl(-\frac{M^2}{\sigma^2}\biggr) + v\exp\biggl(-\frac{M}{v}\biggr)\\
& =  M \exp\biggl(-\frac{M^2}{\sigma^2}\biggr) + v\exp\biggl(-\frac{M}{v}\biggr)\leq (M+v) \exp\biggl(-\frac{M^2}{\sigma^2}\biggr),
\end{align*}
and similarly, $\mathbb{E}[X^21\{X>M\}] \leq \biggl(M^2+ 2v^2+2\sigma^2\biggr)\exp\biggl(-\frac{M^2}{\sigma^2}\biggr).$

If $M> \frac{\sigma^2}{v} $, we have $\mathbb{E}[X1\{X>M\}] \leq (M+v)\exp\biggl(-\frac{M }{v  }\biggr)$ and $\mathbb{E}[X^21\{X>M\}] \leq (M^2+2v^2+2vM)\exp\biggl(-\frac{M }{v  }\biggr)$.

%

Then for some constants $c_1,c_2,c_3,c_4,c_5>0$, for non-negative continuous variable $X$ which is sub-exponential with parameters $(\sigma,v,t)$, for $M>c_4>t$ and $c' = 2+\frac{3}{c_1}$, we have
\begin{align*}
 &\mathbb{E}[X1\{X>M\}] \leq \biggl[ c_3\biggl(\frac{M}{c_2}\biggr)^{c'}+ c_5 \biggr]\exp\biggl\{-\biggl(\frac{M}{c_2}\biggr)^{1/c_1}\biggr\},\\
 &\mathbb{E}[X^21\{X>M\}] \leq \biggl[ c_3\biggl(\frac{M}{c_2}\biggr)^{c'}+ c_5 \biggr]^2\exp\biggl\{-2\biggl(\frac{M}{c_2}\biggr)^{1/c_1}\biggr\}.
\end{align*}

If $t \leq M\leq \frac{\sigma^2}{v}$, $c_1 =1/2, c_2 = \sqrt{2}\sigma, c_3 = 16\sigma^8$. And if $M\geq\frac{\sigma^2}{v}$, $c_1 = 1,c_2 =  2v,c_3 = 32v^5$. And $c_5 = \sqrt{2}(v + \sigma)$.

For non-negative discrete variables, the result is the same.

The result of Lemma \ref{th:lem_subexp_tail} follows from the result above, $\oby_i$ has a finite mixture distribution for $i=1,\ldots,n$ and the following.

When dispersion parameter $\phi$ is known, for a constant $c_K$ depending on $K$, we have
\begin{equation*}
  G_1(\oby_i) = e^K \max_{j\in\Omega_i}|y_{ij}| + c_K, \ i=1,\ldots,n.
\end{equation*}

\end{proof}

\section{Proof of Lemma \ref{th:px}}\label{sec:proof_px}
\begin{proof}
Under Condition \ref{th:con_tail}, $M_n = c_2(\log n)^{c_1}$, and $\lambda_0$ defined in  (\ref{eq:lambda_0}), for a constant $c_6$  depending on $K$, for $i=1,\ldots,n$, we have
\begin{align*}
&\mathbb{E}[|G_1(\oby_i)|1\{|G_1(\oby_i)|>M_n\}] \leq   c_6\lambda_0^2/(mk), \\
&\mathbb{E}[|G_1(\oby_i)|^21\{|G_1(\oby_i)|>M_n\}] \leq c_6^2\lambda_0^4/(mk)^2.
\end{align*}
The we can get
\begin{align*}
  &\mathbb{P}_{\X}\biggl(\frac{1}{n}\sum_{i=1}^nG_1(\oby_i)1\{G_1(\oby_i)>M_n\} + \mathbb{E}[G_1(\oby_i)1\{G_1(\oby_i)>M_n\}]>3c_6\lambda_0^2/(mk) \biggr)\\
  &\leq \mathbb{P}_{\X}\biggl(\frac{1}{n}\sum_{i=1}^nG_1(\oby_i)1\{G_1(\oby_i)>M_n\} - \mathbb{E}[G_1(\oby_i)1\{G_1(\oby_i)>M_n\}]>c_6\lambda_0^2/(mk) \biggr)\\
  &\leq \frac{\mathbb{E}[|G_1(\oby_i)|^21\{|G_1(\oby_i)|>M_n\}]}{n}\frac{m^2k^2}{c_6^2\lambda_0^4} \leq \frac{1}{n}.
\end{align*}
\end{proof}

\section{Proof of Lemma \ref{th:pT}}\label{sec:proof_pT}
\begin{proof}
We follow \citet{stadler2010} to give a Entropy Lemma and then prove Lemma \ref{th:pT}.

We use the following norm $\|\cdot\|_{P_n}$ introduced in the Proof of Lemma 2 in \citet{stadler2010} and use $H(\cdot,\mathcal{H},\|\cdot\|_{P_n})$ as the entropy of covering number (see \citet{van2000applications}) which is equipped the metric induced by the norm for a collection $\mathcal{H}$ of functions on $\mathcal{X}\times \mathcal{Y}$,
\begin{align*}
   \|h(\cdot,\cdot)\|_{P_n} = \sqrt{\frac{1}{n}\sum_{i=1}^nh^2(\x_i,\oby_i)}.
\end{align*}
Define $
  \tilde{\Theta}(\epsilon) = \{\theta\in \tilde{\Theta}: \|\bbeta-\bbeta_0\|_1 + \|\eta - \eta_0\|_2 \leq \epsilon  \}$.

\begin{mylem}
(Entropy Lemma) For a constant $c_{12}>0$, for all $u>0$ and $M_n>0$, we have
 \begin{align*}
  H\biggl(u,\biggl\{(\ell_{\theta} - \ell_{\theta^{\star}})1\{G_1\leq M_n\}: \theta\in\tilde{\Theta}(\epsilon)\biggr\},
  \|\cdot\|_{P_n}\biggr)
  \leq c_{12}\frac{mk\epsilon^2M_n^2}{u^2}\log\biggl(\frac{\sqrt{mk}\epsilon M_n}{u}\biggr).
\end{align*}
\end{mylem}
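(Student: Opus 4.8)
The plan is to bound the entropy of the truncated log-likelihood class by that of a low-dimensional $\ell_1$-ball, combining the Lipschitz property granted by Condition~\ref{th:con_G1} with the linear structure $\varphi_{ijr}=\x_i\bbeta_{jr}$, and then applying a standard entropy estimate for convex hulls of norm-bounded vectors.

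\textbf{Lipschitz reduction.} Since covering numbers are translation invariant, covering $\{(\ell_\theta-\ell_{\theta^{\star}})1\{G_1\le M_n\}:\theta\in\tilde{\Theta}(\epsilon)\}$ in $\|\cdot\|_{P_n}$ is the same as covering $\{(\ell_\theta-\ell_{\theta_0})1\{G_1\le M_n\}:\theta\in\tilde{\Theta}(\epsilon)\}$, so I take $\theta_0$ as the center. Writing $\ell_\theta(\x_i,\oby_i)=\ell(\psi_i\mid\oby_i)$ with $\psi_i=\mbox{vec}(\varphi_i,\eta)$ and $\varphi_i=\mbox{vec}(\{\x_i\bbeta_{jr};j\in\Omega_i,r\le k\})$, I apply the mean value theorem along the segment joining $\psi_i(\theta)$ and $\psi_i(\theta')$; this segment stays in the parameter region of~\eqref{eq:tTheta}, which is convex (boxes on $\varphi_i$, on $\log\phi$ and on $\log\pi$, plus the convex constraint $\sum_{r<k}\pi_r<1$). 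On $\{G_1(\oby_i)\le M_n\}$, Condition~\ref{th:con_G1} then yields
\[
 |\ell_\theta(\x_i,\oby_i)-\ell_{\theta'}(\x_i,\oby_i)|\le \Big(\sup_{\theta''\in\tilde{\Theta}}\|\partial\ell(\psi_i\mid\oby_i)/\partial\psi_i\|_\infty\Big)\,\|\psi_i(\theta)-\psi_i(\theta')\|_1\le M_n\,\|\psi_i(\theta)-\psi_i(\theta')\|_1 .
\]
Using that $\varphi_i$ has at most $|\Omega_i|k\le mk$ coordinates, together with Cauchy--Schwarz, squaring, and averaging over $i=1,\ldots,n$, I obtain
\[
 \|(\ell_\theta-\ell_{\theta'})1\{G_1\le M_n\}\|_{P_n}^2\le 2mk\,M_n^2\,\|\varphi(\bbeta-\bbeta')\|_{Q_n}^2+2M_n^2\,\|\eta-\eta'\|_1^2 ,
\]
where $\|\varphi(\cdot)\|_{Q_n}$ is the prediction seminorm of Condition~\ref{th:con_REC}.

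\textbf{Finite-dimensional entropy.} A product-of-covers argument reduces the problem to covering $\{\bbeta:\|\bbeta-\bbeta_0\|_1\le\epsilon\}$ in the seminorm $\sqrt{2mk}\,M_n\|\varphi(\cdot-\bbeta_0)\|_{Q_n}$ and covering $\{\eta:\|\eta-\eta_0\|_2\le\epsilon\}$ in $\sqrt{2}\,M_n\|\cdot\|_1$. The $\eta$-part is a ball of radius $O(M_n\epsilon)$ in $\mathbb{R}^{O(mk)}$, whose log-covering number at scale $u$ is $O(mk\log(M_n\epsilon/u))$, which is dominated by the target bound whenever $\sqrt{mk}M_n\epsilon/u\ge 1$. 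For the $\bbeta$-part, $\{\varphi(\bbeta-\bbeta_0):\|\bbeta-\bbeta_0\|_1\le\epsilon\}$ lies in $\epsilon$ times the convex hull of the prediction vectors $\pm\varphi(e_{l,j,r})$, each of $\|\cdot\|_{Q_n}$-seminorm at most one under the usual normalization $\frac1n\sum_i x_{il}^2\le 1$. The classical convex-hull entropy bound obtained from Maurey's empirical method (see, e.g., \citet{van2000applications})---namely $\log N(u,\,\cdot\,)\lesssim (R_0/u)^2\log(R_0/u)$ for a set contained in $R_0$ times such a convex hull---applied with $R_0=\sqrt{2mk}\,M_n\epsilon$ gives
\[
 H\Big(u,\{(\ell_\theta-\ell_{\theta^{\star}})1\{G_1\le M_n\}:\theta\in\tilde{\Theta}(\epsilon)\},\|\cdot\|_{P_n}\Big)\lesssim \frac{mk\,M_n^2\epsilon^2}{u^2}\log\!\Big(\frac{\sqrt{mk}\,M_n\epsilon}{u}\Big),
\]
and absorbing numerical constants into $c_{12}$ proves the lemma.

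\textbf{Main obstacle.} The delicate point is obtaining the correct effective dimension $mk$ in the prefactor, rather than the ambient $dmk$. This is precisely why one must pass through the prediction vectors $\varphi_{ijr}=\x_i\bbeta_{jr}$ (so that only $|\Omega_i|k\le mk$ coordinates per sample are active) and use a convex-hull/$\ell_1$-ball entropy estimate in the $Q_n$-seminorm rather than a naive coordinatewise bound; the $\sqrt{mk}$ factors produced by Cauchy--Schwarz must be tracked carefully so that they appear both in the prefactor and inside the logarithm. The Lipschitz step itself is routine once the convexity of the parameter region in the $\psi$-coordinates is noted, so that the mean value theorem combines cleanly with Condition~\ref{th:con_G1}.
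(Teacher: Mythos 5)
Your proposal follows the same route as the paper's proof: reduce via the Lipschitz bound of Condition~\ref{th:con_G1} on the event $\{G_1\le M_n\}$, split $\psi$ into the linear-predictor coordinates and the low-dimensional $\eta$-block, cover the $\eta$-ball directly, and control the $\bbeta$-part through the entropy of the image of an $\ell_1$-ball in the empirical prediction seminorm; the Cauchy--Schwarz bookkeeping producing the $mk$ factor (the paper's $d_\psi=(2m+1)k$) is also the same.

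The gap is in the entropy estimate you invoke for the $\bbeta$-part. What Maurey's empirical method gives --- Lemma 2.6.11 of \citet{van1996weak}, which is exactly what the paper applies, following \citet{stadler2010} --- for a set contained in $R_0$ times the convex hull of $N$ vectors of unit seminorm is $\log N(u)\lesssim (R_0/u)^2\log N$, with the logarithm of the \emph{number of extreme points}, here $N\asymp kmd$; it is not $(R_0/u)^2\log(R_0/u)$. The dimension-free form you state is false in the regime this paper targets: the class of linear predictors indexed by an $\ell_1$-ball in $\mathbb{R}^{dmk}$ genuinely has entropy of order $u^{-2}\log(kmd)$ at the scales entering the Dudley integral (there $R_0/u\lesssim \sqrt{mk}M_n$, so your log factor is only of order $\log n$, while $\log d$ may be far larger, e.g. $d=e^{n^{1/3}}$), so your step silently loses the $\log d$ that must be paid somewhere. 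In the paper's own derivation this factor survives as a $\log(1+kmd)$ term in the entropy bound and is rendered harmless only later, because the entropy integral is compared against $\sqrt n\,T\epsilon\lambda_0$ and $\lambda_0$ in \eqref{eq:lambda_0} carries the factor $\sqrt{\log(d\vee n)}$. To be fair, the lemma as displayed in the paper also drops $\log(kmd)$, so your final display coincides with the statement; but a valid proof must either retain this factor, as the paper's internal computation does, or argue separately why it can be dropped --- citing Maurey in the form you wrote does not establish that.
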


\begin{proof}
(For Entropy Lemma) The difference between this proof and that of Entropy Lemma in the proof of Lemma 2 of \citet{stadler2010} is in the notations and the effect of multivariate responses.

For multivariate responses we have for $i=1,\ldots,n$,
 \begin{align*}
   |\ell_{\theta}(\x_i,\oby_i) - \ell_{\theta'} (\x_i,\oby_i)|^2 &\leq G_1^2(\oby_i)\|\psi_i - \psi'_i \|_1^2
    \leq d_{\psi}G_1^2(\oby_i)\|\psi_i- \psi'_i\|_2^2\\
   & = d_{\psi}G_1^2(\oby_i) \biggl[\sum_{r=1}^k\sum_{j\in\Omega_i}|\x_i(\bbeta_{rj}- \bbeta'_{rj}) |^2
   +\|\eta -  \eta'\|_2^2\biggr],
\end{align*}
where $d_{\psi} = (2m+1)k$ is the maximum of dimension of $\psi_i$ for $i=1,\ldots,n$.

Under the definition of the norm $\|\cdot\|_{P_n}$ we have
 \begin{align*}
   \|(\ell_{\theta} - \ell_{\theta'})1\{G_1\leq M_n\}\|_{P_n}^2
   \leq d_{\psi}M_n^2\left[ \frac{1}{n}\sum_{i=1}^n\sum_{r=1}^k\sum_{j\in\Omega_i}|\x_i(\bbeta_{rj}- \bbeta'_{rj}) |^2 + \|\eta - \eta'\|_2^2   \right].
\end{align*}
Then by the result of \citet{stadler2010} we have
 \begin{align*}
   H (u,\{\eta\in\mathbb{R}^{d_{\eta}}: \|\eta-\eta_0\|_2\leq \epsilon\},\|\cdot\|_2 )\leq d_{\eta}\log\biggl(\frac{5\epsilon}{u}\biggr),
\end{align*}
where $d_{\eta} = (m+1)k$ is the dimension of $\eta$.

And we follow \citet{stadler2010} to apply Lemma 2.6.11 of \citet{van1996weak} to give a bound as
 \begin{align*}
  & H \biggl(2u,\biggl\{ \sum_{r=1}^k\sum_{j\in\Omega_i}\x_i(\bbeta_{rj}- {\bbeta}_{0,rj}) :  \|\bbeta-\bbeta_0\|_1\leq\epsilon \biggr\},
   \|\cdot\|_{P_n}\biggr)    \leq \biggl(\frac{\epsilon^2}{u^2}+1\biggr)\log(1+kmd).
\end{align*}

Thus we can get
 \begin{align*}
   &H \biggl(3\sqrt{d_{\psi}}M_nu,\biggl\{ (\ell_{\theta} - \ell_{\theta_0})1\{G_1\leq M_n\} : \theta\in\tilde{\Theta}(\epsilon) \biggr\},
   \|\cdot\|_{P_n}\biggr) \\
   &\leq \biggl(\frac{\epsilon^2}{u^2}+1+d_{\eta}\biggr)\biggl(\log(1+kmd)+\log\biggl(\frac{5\epsilon}{u}\biggr)\biggr).
\end{align*}
\end{proof}

Now we turn to prove Lemma \ref{th:pT}.

We follow \citet{stadler2010} to use the truncated version of the empirical process below.
 \begin{align*}
    V_n^{trunc}(\theta) = \frac{1}{n}\sum_{i=1}^n\biggl(
    \ell_{\theta}(\x_i,\oby_i)1\{G_1(\oby_i)\leq M_n\} - \mathbb{E}[\ell_{\theta}(\x_i,\oby_i)1\{G_1(\oby_i)\leq M_n\}\mid X=\x_i].
    \biggr)
\end{align*}
We follow \citet{stadler2010} to apply the Lemma 3.2 in \citet{van2000applications} and a conditional version of Lemma 3.3 in \citet{van2000applications} to the class
 \begin{align*}
    \biggl\{ (\ell_{\theta} - \ell_{\theta_0})1\{G_1\leq M_n\} : \theta\in\tilde{\Theta}(\epsilon) \biggr\}, \forall \epsilon>0.
\end{align*}
For some constants $\{c_{t}\}_{t>12}$  depending on $K$  and $\Lambda_{\max}$ in Condition 2 of \citet{stadler2010}, using the notation of Lemma 3.2 in \citet{van2000applications}, we follow \citet{stadler2010} to choose $\delta = c_{13} T\epsilon \lambda_0$ and $R = c_{14}(\sqrt{mk}\epsilon \wedge 1)M_n$.

Thus we by choosing $M_n = c_2(\log n)^{c_1}$ we can satisfy the condition of Lemma 3.2 of \citet{van2000applications} to have
 \begin{align*}
    & \int_{\epsilon/c_{15}}^R H^{1/2} \biggl(u,\biggl\{(\ell_{\theta} - \ell_{\theta^{\star}})1\{G_1\leq M_n\}: \theta\in\tilde{\Theta}(\epsilon)\biggr\},
  \|\cdot\|_{P_n}\biggr) du \vee R \\
  =&\int_{\epsilon/c_{15}}^{c_{14}\sqrt{mk}(\epsilon \wedge 1)M_n}
  c_{12}\biggl(\frac{\sqrt{mk}\epsilon M_n}{u}\biggr)\log^{1/2}\biggl(\frac{\sqrt{mk}\epsilon M_n}{u}\biggr)du \vee (c_{14}(\epsilon \wedge 1)M_n)\\
    \leq& \frac{2}{3}c_{12}\sqrt{mk}\epsilon M_n
[ \log^{3/2} (c_{15}\sqrt{mk}M_n) - \log^{3/2} (\frac{\sqrt{mk}\epsilon M_n}{c_{14}\sqrt{mk}(\epsilon \wedge 1)M_n}) ]\vee (c_{14}\sqrt{mk}(\epsilon \wedge 1)M_n)   \\
\leq& \frac{2}{3}c_{12}\sqrt{mk}\epsilon M_n\log^{3/2} (c_{15}\sqrt{mk}M_n)\\
\leq& c_{16} \sqrt{mk}\epsilon M_n\log^{3/2} (n) \quad (\mbox{by choosing} \ M_n = c_2(\log n)^{c_1}, \mbox{and} \ \sqrt{mk} \leq   c_{17}\frac{n}{M_n}) \\
\leq&  c_{18}  \sqrt{n} T\epsilon \lambda_0\leq \sqrt{n}(\delta - \epsilon).
\end{align*}
Now for the rest we can apply Lemma 3.2 of \citet{van2000applications} to give the same result with Lemma 2 of \citet{stadler2010}.

So we have
\begin{align*}
 \sup_{\theta \in \tilde{\Theta}}
\frac{|V_n^{trunc}(\theta) - V_n^{trunc}(\theta_0)|}{(\|\bbeta-\bbeta_0\|_1 + \|\eta-\eta_0\|_2 )\vee \lambda_0}
\leq 2c_{23}T \lambda_0
\end{align*}
with probability at least $
 1 - c_{9}\exp\biggl[- \frac{T^2(\log n)^2\log(d\vee n) }{c_{8}^2}\biggr].$

At last, for the case when $G_1(\oby_i)>M_n$, for $i=1,\ldots,n$, we have
\begin{align*}
 | (\ell_{\theta}(\x_i,\oby_i) - \ell_{\theta_0}(\x_i,\oby_i))1\{G_1(\oby_i)> M_n\} |\leq d_{\psi}KG_1(\oby_i)1\{G_1(\oby_i)> M_n\},
\end{align*}
and
\begin{align*}
 &\frac{|(V_n^{trunc}(\theta) - V_n^{trunc}(\theta_0)) -(V_n(\theta)-V_n(\theta_0)) |}
 {(\|\bbeta-\bbeta_0\|_1 + \|\eta-\eta_0\|_2 )\vee \lambda_0}\\
 &\leq \frac{d_{\psi}K}{n\lambda_0}\sum_{i=1}^n
\biggl( G_1(\oby_i)1\{G_1(\oby_i)>M_n\} + \mathbb{E}[G_1(\oby_i)1\{G_1(\oby_i)>M_n\}\mid X=\x_i]
\biggr).
\end{align*}
Then the probability of the following inequality under our model is given in Lemma \ref{th:px}.
\begin{align*}
  \frac{d_{\psi}K}{n\lambda_0}\sum_{i=1}^n
\biggl( G_1(\oby_i)1\{G_1(\oby_i)>M_n\} + \mathbb{E}[G_1(\oby_i)1\{G_1(\oby_i)>M_n\}\mid X=\x_i]
\biggr) \leq  c_{23}T \lambda_0,
\end{align*}
where $d_{\psi} = 2(m+1)k$.
\end{proof}

\section{Proof of Theorem \ref{th:th_bound}}
\begin{proof}
This proof mostly follows that of Theorem 3 of \citet{stadler2010}. The only difference is in the notations. As such, we omit the details.
\end{proof}

\section{Proof of Theorem \ref{th:high_dim}}\label{sec:app_proof_high_dim}
\begin{proof}
This proof also mostly follows that of Theorem 5 of \citet{stadler2010}. The difference is in the notations and the choice of $M_n$.

If the event $\mathcal{T}$ happens, with $M_n = c_2(\log n)^{c_1}$ for some constants $0\leq c_1,c_2<\infty$, where $c_2$ depends on $K$,
$$\lambda_0 =   \sqrt{mk} M_n\log n\sqrt{\log(d\vee n)/n} = c_2\sqrt{mk\log^{2+2c_1}\log(d\vee n)/n},$$
we have
\begin{align*}
 \bar{\varepsilon}(\hat{\psi}\mid \psi_0) + \lambda\|\hat{\beta}\|_1
 \leq T\lambda_0[(\|\hat{\beta}-\beta_0\|_1 + \|\eta-\eta_0\|_2 )\vee \lambda_0] + \lambda\|\beta_0\|_1 + \bar{\varepsilon}(\psi_0\mid\psi_0).
\end{align*}
Under the definition of $\theta \in \tilde{\Theta}$ in (\ref{eq:tTheta}) we have $\|\eta-\eta_0\|_2\leq 2K$. And as $ \bar{\varepsilon}(\psi_0\mid\psi_0) =0$ we have for $n$ sufficiently large.
\begin{align*}
 &\bar{\varepsilon}(\hat{\psi}\mid \psi_0) + \lambda\|\hat{\beta}\|_1
 \leq T\lambda_0(\|\hat{\beta}\|_1 +\|\beta_0\|_1 + 2K ) + \lambda\|\beta_0\|_1\\
 &\rightarrow \bar{\varepsilon}(\hat{\psi}\mid \psi_0) + (\lambda-T\lambda_0)\|\hat{\beta}\|_1
 \leq T\lambda_0 2K   + (\lambda+T\lambda_0)\|\beta_0\|_1
\end{align*}
As $C>0$ sufficiently large we have $\lambda\geq 2T\lambda_0$.

And using the condition on $\|\beta_0\|_1$ and $\sqrt{mk}$, we have both $T\lambda_02K = o(1)$ and $(\lambda+T\lambda_0)\|\beta_0\|_1 = o(1)$, so we have $\bar{\varepsilon}(\hat{\psi}\mid \psi_0)\rightarrow 0 \ (n\rightarrow \infty)$.

At last, as the event $\mathcal{T}$ has large probability, we have
$
\bar{\varepsilon}(\hat{\theta}_{\lambda}\mid\theta_0) = o_P(1) \ (n\rightarrow \infty).
$
\end{proof}

\section{Proof of Theorem \ref{th:th_bound_GS} }

\begin{proof}
First we discuss the bound for the probability of $\mathcal{T}_{group}$ in \eqref{eq:event_T_GS}.

The difference between $\mathcal{T}_{group}$ and $\mathcal{T}$ in  (\ref{eq:event_T}) is only related to the following entropy of the Entropy Lemma in the proof of Lemma \ref{th:pT}.
 \begin{align*}
    H \biggl(2u,\biggl\{ \sum_{r=1}^k\sum_{j\in\Omega_i}\x_i(\bbeta_{rj}- {\bbeta}_{0,rj})  : \sum_p\|\bbeta_{\mathcal{G}_p}-\bbeta_{0,\mathcal{G}_p}\|_F\leq\epsilon \biggr\},
   \|\cdot\|_{P_n}\biggr) , \ \mbox{for} \ i = 1\ldots,n,
\end{align*}
where $\sum_p\|\bbeta_{\mathcal{G}_p}-\bbeta_{0,\mathcal{G}_p}\|_F\leq\epsilon$ still maintains a convex hull for $\bbeta$ in the metric space equipped with the metric induced by the norm $\|\cdot\|_{P_n}$ defined in the proof of Lemma \ref{th:pT}.  Thus it still satisfies the condition of Lemma 2.6.11 of \citet{van1996weak} which can still be applied to give
\begin{align*}
  & H \biggl(2u,\biggl\{ \sum_{r=1}^k\sum_{j\in\Omega_i}\x_i(\bbeta_{rj}- {\bbeta}_{0,rj}) : \sum_p\|\bbeta_{\mathcal{G}_p}-\bbeta_{0,\mathcal{G}_p}\|_F\leq\epsilon \biggr\},
   \|\cdot\|_{P_n}\biggr) \\
  & \leq \biggl(\frac{\epsilon^2}{u^2}+1\biggr)\log(1+kmd), \ \mbox{for} \ i = 1\ldots,n.
\end{align*}
So the probability of event $\mathcal{T}_{group}$ remains the same form with that in Lemma \ref{th:pT}.

Then we discuss the bound for the average excess risk and feature selection.

If the event $\mathcal{T}_{group}$ happens, we have
\begin{align*}
 \bar{\varepsilon}(\hat{\psi}\mid \psi_0) + \lambda\sum_p\|\hat{\bbeta}_{\mathcal{G}_p}\|_F
 &\leq T\lambda_0\biggl[\biggl(\sum_{\mathcal{G}_p}\|\hat{\bbeta}_{\mathcal{G}_p}-\bbeta_{0,\mathcal{G}_p}\|_F + \|\eta-\eta_0\|_2 \biggr)\vee \lambda_0\biggr]\\
  &+ \lambda\sum_p\| \bbeta_{0,\mathcal{G}_p}\|_F + \bar{\varepsilon}(\psi_0\mid\psi_0).
\end{align*}
Using Condition \ref{th:con_fisher}  we have $ \bar{\varepsilon}(\psi_0\mid\psi_0) =0$ and
$
 \bar{\varepsilon}(\hat{\psi}\mid \psi_0) \geq  {\|\hat{\psi}-\psi_0\|_{Q_n}^2}/{c_0^2}.
$

\noindent\textbf{Case 1} When the following is true:
\begin{align*}
\sum_p\|\hat{\bbeta}_{\mathcal{G}_p}-\bbeta_{0,\mathcal{G}_p}\|_F + \|\hat{\eta}-\eta_0\|_2 \leq \lambda_0,
\end{align*}
we have
\begin{align*}
 \bar{\varepsilon}(\hat{\psi}\mid \psi_0) &\leq T\lambda_0^2 + \lambda\sum_p\|\hat{\bbeta}_{\mathcal{G}_p}-\bbeta_{0,\mathcal{G}_p}\|_F  + \bar{\varepsilon}(\psi_0\mid\psi_0) \leq (\lambda+T\lambda_0)\lambda_0.
\end{align*}
\textbf{Case 2} When the following is true:
\begin{align*}
&\sum_p\|\hat{\bbeta}_{\mathcal{G}_p}-\bbeta_{0,\mathcal{G}_p}\|_F + \|\hat{\eta}-\eta_0\|_2 \geq \lambda_0,\\
& T\lambda_0\|\hat{\eta}-\eta_0\|_2 \geq (\lambda+T\lambda_0)\sum_{p\in\mathcal{I}}\|\hat{\bbeta}_{\mathcal{G}_p}-\bbeta_{0,\mathcal{G}_p}\|_F.
\end{align*}
As $\sum_{p\in\mathcal{I}^c}\|\bbeta_{0,\mathcal{G}_p}\|_F=0$, we have
\begin{align*}
 &\bar{\varepsilon}(\hat{\psi}\mid \psi_0) + (\lambda-T\lambda_0)\sum_{p\in\mathcal{I}^c}\|\hat{\bbeta}_{\mathcal{G}_p}\|_F  \leq 2T\lambda_0\|\hat{\eta}-\eta_0\|_2\\
&\leq 2T^2\lambda_0^2c_0^2 + \|\hat{\eta}-\eta_0\|_2^2/(2c_0^2) \leq 2T^2\lambda_0^2c_0^2 + \bar{\varepsilon}(\hat{\psi}\mid \psi_0)/2.
\end{align*}
Then we get
\begin{align*}
\bar{\varepsilon}(\hat{\psi}\mid \psi_0) + 2(\lambda-T\lambda_0)\sum_{p\in\mathcal{I}^c}\|\hat{\bbeta}_{\mathcal{G}_p}\|_F\leq 4T^2\lambda_0^2c_0^2.
\end{align*}
\textbf{Case 3} When the following is true:
\begin{align*}
&\sum_p\|\hat{\bbeta}_{\mathcal{G}_p}-\bbeta_{0,\mathcal{G}_p}\|_F + \|\hat{\eta}-\eta_0\|_2 \geq \lambda_0,\\
&T\lambda_0\|\hat{\eta}-\eta_0\|_2 \leq (\lambda+T\lambda_0)\sum_{p\in\mathcal{I}}\|\hat{\bbeta}_{\mathcal{G}_p}-\bbeta_{0,\mathcal{G}_p}\|_F,
\end{align*}
we have
\begin{align*}
&\bar{\varepsilon}(\hat{\psi}\mid \psi_0) + (\lambda-T\lambda_0)\sum_{p\in\mathcal{I}^c}\|\hat{\bbeta}_{\mathcal{G}_p}\|_F \leq 2(\lambda+T\lambda_0)\sum_{p\in\mathcal{I}}\|\hat{\bbeta}_{\mathcal{G}_p}-\bbeta_{0,\mathcal{G}_p}\|_F.
\end{align*}
Thus we have
\begin{align*}
\sum_{p\in\mathcal{I}^c}\|\hat{\bbeta}_{\mathcal{G}_p}\|_F \leq 6 \sum_{p\in\mathcal{I}}\|\hat{\bbeta}_{\mathcal{G}_p}-\bbeta_{0,\mathcal{G}_p}\|_F,
\end{align*}
so we can use the Condition \ref{th:con_REC_GS} for $\hat{\bbeta} -\bbeta_0$ to have
\begin{align*}
&\bar{\varepsilon}(\hat{\psi}\mid \psi_0) + (\lambda-T\lambda_0)\sum_{p\in\mathcal{I}^c}\|\hat{\bbeta}_{\mathcal{G}_p}\|_F    \leq 2(\lambda+T\lambda_0)\sqrt{s}\sum_{p\in\mathcal{I}}\|\hat{\bbeta}_{\mathcal{G}_p}-\hat{\bbeta}_{0,\mathcal{G}_p}\|_F\\
&\leq 2(\lambda+T\lambda_0)\sqrt{s}\kappa\|\hat{\varphi}-(\varphi_0)\|_{Q_n} \leq 2(\lambda+T\lambda_0)^2 s \kappa^2 c_0^2 + \|\hat{\varphi}-(\varphi_0)\|_{Q_n}^2/(2c_0^2)\\
&\leq 2(\lambda+T\lambda_0)^2 s \kappa^2 c_0^2 +\bar{\varepsilon}(\hat{\psi}\mid \psi_0)/2.
\end{align*}
So we have
\begin{align*}
\bar{\varepsilon}(\hat{\psi}\mid \psi_0) + 2(\lambda-T\lambda_0)\sum_{p\in\mathcal{I}^c}\|\hat{\bbeta}_{\mathcal{G}_p}\|_F\leq 4(\lambda+T\lambda_0)^2s \kappa^2 c_0^2.
\end{align*}

And without restricted eigenvalue condition \ref{th:con_REC_GS},
we can prove similarly as in Appendix~\ref{sec:app_proof_high_dim}, assuming event $\mathcal{T}_{group}$ happens and using the condition on $\sum_p\|\bbeta_{0,\mathcal{G}_p}\|_F$ and $\sqrt{mk}$.
\end{proof}

\newpage
\vskip 0.2in
\bibliographystyle{spbasic}      
\bibliography{ref}

\end{document}